\documentclass{article} 
\usepackage{iclr2026_conference,times}

\usepackage{amsmath,amsfonts,bm}

\def\eqref#1{equation~\ref{#1}}

\def\1{\bm{1}}

\DeclareMathAlphabet{\mathsfit}{\encodingdefault}{\sfdefault}{m}{sl}
\SetMathAlphabet{\mathsfit}{bold}{\encodingdefault}{\sfdefault}{bx}{n}

\newcommand{\R}{\mathbb{R}}

\usepackage[utf8]{inputenc} 
\usepackage[T1]{fontenc}    
\usepackage[colorlinks,
            linkcolor=red,       
            anchorcolor=green,  
            citecolor=blue,        
            ]{hyperref}
\usepackage{url}            
\usepackage{booktabs}       
\usepackage{amsfonts}       
\usepackage{nicefrac}       
\usepackage{microtype}      
\usepackage{xcolor}         
\usepackage{colortbl}
\definecolor{gain}{RGB}{0,128,0}
\definecolor{drop}{RGB}{255,111,97}
\definecolor{rain-merging}{RGB}{241, 222, 252}
\definecolor{rebuttal}{RGB}{86, 123, 199}

\newcommand{\pos}[1]{\textcolor{gain}{#1}}

\usepackage{amsmath}
\usepackage{amssymb}
\usepackage{mathtools}
\usepackage{amsthm}
\usepackage{bbm}
\usepackage[breakable]{tcolorbox}
\usepackage[ruled,vlined,linesnumbered]{algorithm2e}
\usepackage{wrapfig}
\usepackage{multirow}
\usepackage{graphicx}
\usepackage[breakable]{tcolorbox}
\usepackage{pifont}
\usepackage{empheq}
\usepackage{array}
\usepackage{makecell}
\usepackage{float}
\usepackage{subfig}
\usepackage{enumitem}
\usepackage{hhline}
\usepackage{multirow}

\newtheorem{lemma}{Lemma}
\newtheorem{prop}{Proposition}

\newcommand{\Tab}{\textbf{Tab.}} 
\newcommand{\Fig}{\textbf{Fig.}}
\newcommand{\Sec}{\textbf{Sec.}}
\newcommand{\Eq}[1]{\textbf{Eq.~(\ref{#1})}}

\title{RAIN-Merging: A Gradient-Free Method to Enhance Instruction Following in Large Reasoning Models with Preserved Thinking Format}

\iclrfinalcopy

\author{Zhehao Huang$^1$ \ Yuhang Liu$^1$ \ Baijiong Lin$^2$ \ Yixin Lou$^1$ \ Zhengbao He$^1$ \ Hanling Tian$^1$ \\
\textbf{Tao Li$^1$} \ \textbf{Xiaolin Huang$^{1,3}$} \\
\small{$^1$Institute of Image Processing and Pattern Recoginition, School of Automation and Intelligent Sensing,} \\
\small{Shanghai Jiao Tong University}\\
\small{$^2$The Hong Kong University of Science and Technology (Guangzhou)}\\
\small{$^3$MoE Key Laboratory of System Control and Information Processing (Shanghai)}\\
\small{\texttt{\{kinght\_h,yuhangliu,loulou\_0727,lstefanie,hanlingtian\}@sjtu.edu.cn}} \\
\small{\texttt{\{li.tao,xiaolinhuang\}@sjtu.edu.cn}} \\
\small{\texttt{\{blin241\}@connect.hkust-gz.edu.cn}} \\
}

\begin{document}

\maketitle

\begin{abstract}
    Large reasoning models (LRMs) excel at a long chain of reasoning but often fail to faithfully follow instructions regarding output format, constraints, or specific requirements. We investigate whether this gap can be closed by integrating an instruction-tuned model (ITM) into an LRM. Analyzing their differences in parameter space, namely task vectors, we find that their principal subspaces are nearly orthogonal across key modules, suggesting a lightweight merging with minimal interference. However, we also demonstrate that naïve merges are fragile because they overlook the output format mismatch between LRMs (with explicit \texttt{thinking} and \texttt{response} segments) and ITMs (answers-only). We introduce \textbf{RAIN-Merging} (Reasoning-Aware Instruction-attention guided Null-space projection Merging), a gradient-free method that integrates instruction following while preserving thinking format and reasoning performance. First, with a small reasoning calibration set, we project the ITM task vector onto the null space of forward features at thinking special tokens, which preserves the LRM's structured reasoning mechanisms. Second, using a small instruction calibration set, we estimate instruction attention to derive module-specific scaling that amplifies instruction-relevant components and suppresses leakage. Across four instruction-following benchmarks and nine reasoning \& general capability benchmarks, RAIN-Merging substantially improves instruction adherence while maintaining reasoning quality. The gains are consistent across model scales and architectures, translating to improved performance in agentic scenarios. Code is available at \url{https://github.com/K1nght/RAIN-Merging}.
\end{abstract}

\section{Introduction}\label{sec:intro}


In the current boom of research, Large Reasoning Models (LRMs, like OpenAI-o1~\citep{jaech2024openai}, DeepSeek-R1~\citep{deepseekai2025deepseekr1incentivizingreasoningcapability}) have shown strong potential on tasks that require rigorous multi-step reasoning~\citep{DBLP:conf/nips/Wei0SBIXCLZ22}, such as mathematical derivation~\citep{shao2024deepseekmath} and program synthesis~\citep{guo2024deepseek}. However, a discouraging paradox has emerged: although LRMs perform well in purely reasoning-oriented settings, they lag in instruction following~\citep{fu2025scaling,li2025thinking}. They often generate lengthy logical derivations yet ignore user-specified formats, constraints, or specific operational requirements in the final response. This inconsistency undermines LRM practicality and reliability in real-world applications~\citep{chkirbene2024large}, especially in agent~\citep{qi2025agentif} and professional tool deployments~\citep{zhao2024revolutionizing}.

A straightforward remedy is to continue training LRMs with supervised fine-tuning (SFT) to strengthen instruction following. However, building high-quality supervision datasets for tasks that require generating long chains of thought entails substantial annotation and computational resources~\citep{qin2025incentivizing}. Moreover, these post-training methods often induce capability regressions, with degradation in generality and in responses to unseen instructions~\citep{shenfeld2025rl}. In contrast, a training-free and compute-light alternative is model merging, which extracts parameter differences between fine-tuned and pre-trained models (namely the \textbf{task vector}), then combines these task vectors to create a unified model that preserves pre-trained knowledge while incorporating capabilities from multiple tasks \citep{DBLP:conf/iclr/IlharcoRWSHF23}. This motivates a central question: \textit{whether we can merge the LRM and the Instruction-tuned Model (ITM) to enhance the instruction following while preserving its reasoning capability}.



\begin{figure}[t]
    \centering
    \vspace*{-4mm}
    \includegraphics[width=\textwidth]{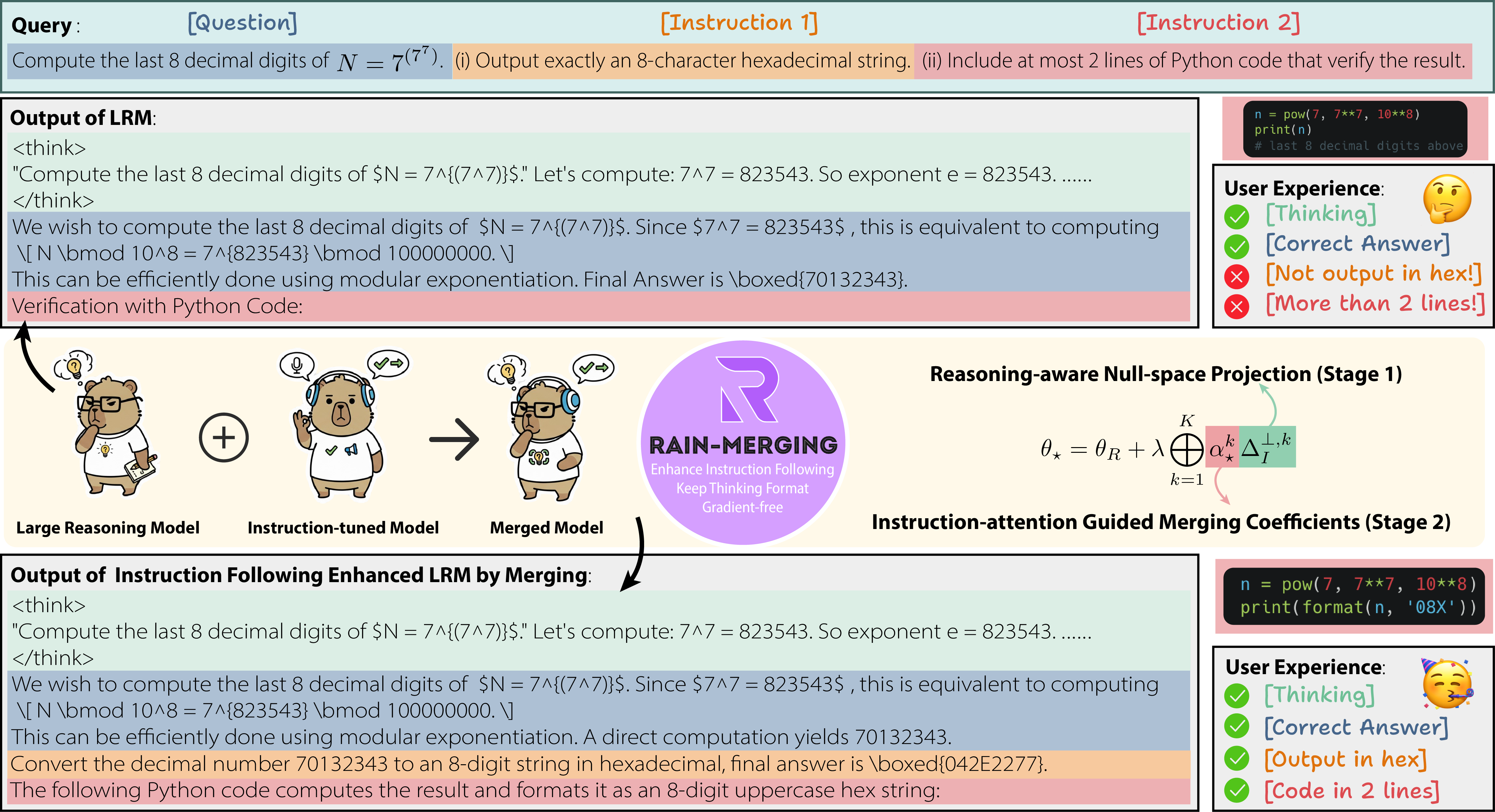}
    \vspace*{-5mm}
    \caption{\footnotesize{An overview of \textbf{RAIN-Merging}. In the case, the LRM arrives at the correct solution but ignores the required format and specific code. To preserve the reasoning structure, we perform training-free merging by combining a task vector projected onto the null space of the thinking format with instruction-attention guided coefficients. The merged model remains correct while satisfying the specified constraints. See \Sec~\ref{sec:method} for details.}}
    \label{fig:intro-overview}
    \vspace*{-5mm}
\end{figure}

We begin with a parameter-space analysis of the task vectors from the LRM and the Instruction-tuned Model (ITM) relative to their shared base. We find that their principal subspaces are nearly orthogonal across key modules, which indicates minimal interference between the two capabilities and suggests that merging is a promising lightweight way to enhance the LRM’s instruction following~\citep{DBLP:conf/nips/Ortiz-JimenezFF23}. However, direct merging carries risks. LRMs and ITMs differ fundamentally in output structure: the former explicitly separates “thinking” and “response” with special markers (e.g., R1-style \texttt{<think>$\dots$</think>}), whereas the latter provides only a final answer. Traditional data-free merging~\citep{DBLP:conf/iclr/IlharcoRWSHF23,goddard2024arcee} prunes or scales the task vector purely from parameter-internal statistics to balance domain performance, thereby ignoring output-distribution mismatches and disrupting the LRM’s structured reasoning. Recent work~\citep{nobari2025activation,yao2025activation,chopra2025lewis} has tried to guide merging with forward activations using small calibration sets. Although this introduces data-driven constraints, the lack of an explicit notion of the output mismatch between the two types still makes it difficult to achieve a stable and effective balance between preserving reasoning structure and improving instruction following.

To this end, we propose a two-stage merging strategy that enhances instruction-following capability without sacrificing the thinking format and reasoning performance of the LRM. First, leveraging task-vector orthogonality between the LRM and ITM, we preserve reasoning ability and enforce thinking-format invariance by projecting the ITM task vector into the null space derived from forward features at thinking tokens on a small reasoning-calibration set. This keeps the merged model’s reasoning representations aligned with the original LRM and retains structured outputs. Second, while keeping these invariances fixed, we aim to enhance instruction-following performance as much as possible. We improve instruction adherence by estimating per-module importance based on attention outputs over instruction-related spans from a small set of instruction examples. Attention-guided coefficients are then assigned to strengthen instruction-relevant behaviors.We refer to the overall two-stage approach as Reasoning-Aware Instruction-guided Null-space projection Merging (\textbf{RAIN-Merging}) in \Fig~\ref{fig:intro-overview}, which effectively synergizes reasoning and instruction-following performance.

We conduct a systematic evaluation of our proposed method on four instruction-following benchmarks and on nine evaluation benchmarks that cover mathematics, code, STEM, and creative-writing capabilities. The results show that RAIN-Merging not only substantially improves the LRM’s instruction-following ability but also maintains reasoning and general capability. Moreover, our method exhibits consistent stability across different model sizes and architectures, and demonstrates enhanced performance in agentic scenarios.

\section{Preliminary and Observations}\label{sec:preliminary}

\textbf{Task Vector.}\label{para:task-vector}
A task vector~\citep{DBLP:conf/iclr/IlharcoRWSHF23} characterizes the parameter delta from a base model to a task-specific one. A straightforward way to combine capabilities is \textbf{task arithmetic}, which linearly adds such deltas to a base model to obtain a multi-skilled model. This simple approach can work when tasks are compatible. However, for distinct abilities such as reasoning and instruction-following that impose different output structures~\citep{DBLP:conf/nips/YadavTCRB23}, naive linear addition may cause capability interference and disrupt the representations essential to each domain.


\begin{wrapfigure}{r}{0.32\textwidth}
    \vspace*{-6mm}        
    \begin{center}
        \includegraphics[width=\linewidth]{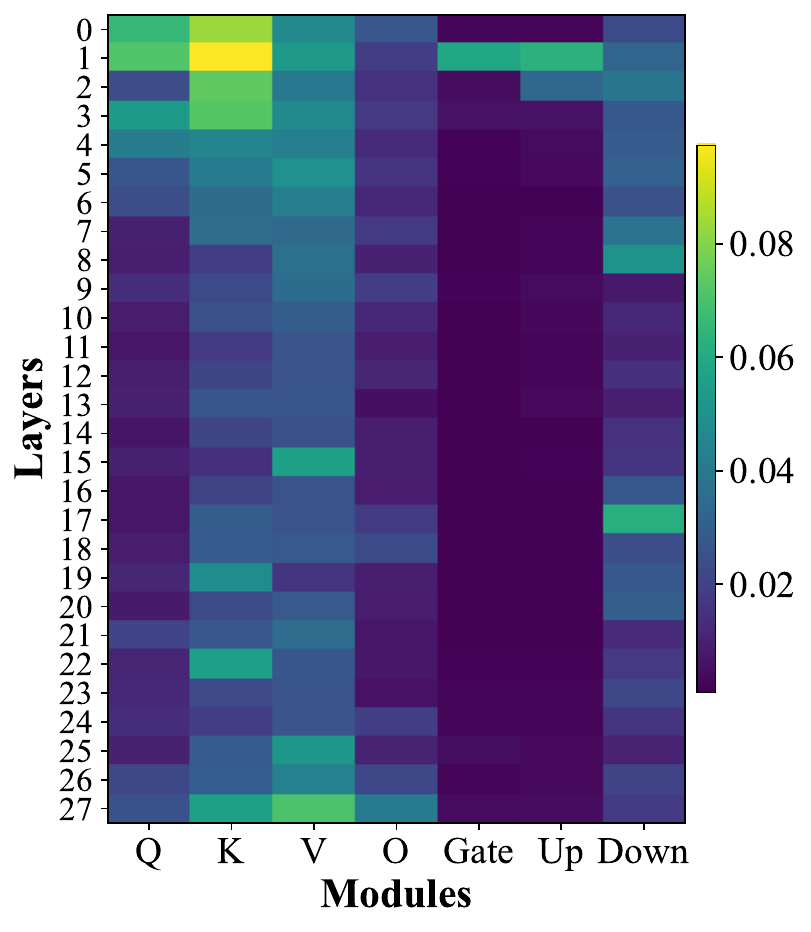}
    \end{center}
    \vspace*{-4mm}
    \caption{\footnotesize{
        Principal subspace cosine similarity between LRM and ITM task vectors for each layer and submodule. The similarities are consistently low ($<0.1$).
    \vspace*{-6mm}
    }
    }
    \label{fig:subspace}
\end{wrapfigure}
\textbf{Orthogonality between Reason \& Instruction Task Vectors.}\label{para:spatial-orthogonality-analysis}
To examine whether capability interference arises when merging ITM $\theta_I$ into LRM $\theta_R$, we take the shared base model $\theta_B$ as reference and define the LRM task vector $\Delta_R=\theta_R-\theta_B$ and the ITM task vector $\Delta_I=\theta_I-\theta_B$. We perform singular value decomposition (SVD) within the main forward modules, e.g. attention and FFN, for these two task vectors and evaluate the principal subspace cosine similarity of their principal subspaces. As shown in \Fig~\ref{fig:subspace}, \ref{fig:subspace_1-5B}, \ref{fig:subspace_14B}, the two are nearly orthogonal since their similarities are all $< 0.1$. Prior studies~\citep{DBLP:conf/nips/Ortiz-JimenezFF23} indicate that this phenomenon reflects a low degree of coupling between reasoning ability and instruction following in parameter space, which suggests that lightweight task-vector merging strategies can enhance instruction following while preserving the original reasoning performance. More details are in Appendix~\ref{subsec:proof-of-why-orthogonal-in-parameters-not-invariant-in-outputs}.

\textbf{Risks in Thinking Format During Merging.}\label{para:risks-in-thinking-format-during-merging}
However, orthogonality in parameter space is not sufficient to guarantee that the merged model will retain the LRM's structured output behavior, since this behavior is determined by downstream propagation and forward features (see Appendix~\ref{subsec:proof-of-why-orthogonal-in-parameters-not-invariant-in-outputs} for proof). In particular, the LRM relies on special tokens such as \texttt{<think>} and \texttt{</think>} to explicitly separate the reasoning segment from the answer segment, and these tokens are crucial in instruction-following tasks. For example, if the model fails to generate the terminator correctly after merging (as \Fig~\ref{fig:rain-merging-method}), it may conflate the reasoning content with the instruction-compliant response, which can violate constraints such as limits on output length. Therefore, although task-vector orthogonality suggests minimal capability interference, we still need to explicitly constrain the distributional shift of the output structure during merging to preserve the integrity of the reasoning process.

\section{Our RAIN-Merging Method}\label{sec:method}

\textbf{Notation.}\label{para:notation}
For notational convenience in later derivations, we flatten model submodules by layer and head with index $k=1,\dots,K$ as $\theta=\bigoplus_{k=1}^{K} W^k:=[\operatorname{vec}(W^1)^{\top}, \ldots, \operatorname{vec}(W^K)^{\top}]^{\top}$, where $\bigoplus$ denotes the block-wise concatenation that assembles disjoint parameter blocks into a single coordinate vector. More details of the forward mechanism in Transformer~\citep{vaswani2017attention} are in Appendix~\ref{subsec:forward-mechanism-in-transformer}.
Let $h_t^{k}$ denote the forward input vector at the $k$-th submodule and the $t$-th sampled token position. The corresponding linear map of this submodule admits the Kronecker-vectorization form~\citep{koning1991block} with Kronecker product $\otimes$, identity matrix $\mathrm{diag}(1)$, and vectorization operator $\operatorname{vec}(\cdot)$, as
$
W^k h_{t}^{k} = ((h_{t}^{k})^{\top} \otimes \mathrm{diag}(1))\operatorname{vec}(W^k).
$
Stacking all sampled positions $t$ row-wise yields the forward feature operator $\Phi^k_{\{t\}}$ and outputs for the $k$-th submodule:
\begin{equation}
\Phi^k_{\{t\}} \;:=\;   
\begin{bmatrix}
(h_{1}^{k})^{\top} \otimes \mathrm{diag}(1),
\dots,
(h_{T}^{k})^{\top} \otimes \mathrm{diag}(1)
\end{bmatrix},
\quad
W^k h^{k} \;=\; \Phi^k_{\{t\}} \operatorname{vec}(W^k).
\end{equation}

\textbf{Optimization Objective.}\label{para:optimization-objective}
To preserve the original reasoning performance of the LRM as much as possible, we take the reasoning model parameters $\theta_R$ as the anchor. We transform the ITM task vector $\Delta_I$ through a merging function $f$ to obtain $\Delta=f(\Delta_I)$, and form the merged model $\theta=\theta_{R}+\Delta$. Our goal is \textit{to enhance instruction following without damaging the LRM's thinking format and reasoning performance}. We therefore formulate a constrained optimization problem: over the instruction data distribution $\mathcal{D}_I$, maximize the surrogate objective for instruction following,
$
\mathcal J_I(\theta) \triangleq \mathbb{E}_{x \sim \mathcal{D}_I}\, \mathbb{E}_{y \sim \pi_\theta(\cdot \mid x)}\bigl[\operatorname{IF}(x, y)\bigr]
$,
while, over the reasoning data distribution $\mathcal{D}_R$, constraining the deviation between the model’s output distribution within the segment of thinking special tokens $\Omega_{\mathrm{think}}$ and the reference policy of the original reasoning model $\theta_R$. This constraint is quantified by aggregating the per-step KL divergence within the segment:
\begin{equation}\label{eq:think-constraint}
\mathcal{L}_{\text{think}}(\theta)\triangleq \mathbb{E}_{x \sim \mathcal{D}_R}\, \mathbb{E}_{y \sim \pi_{\theta_R}(\cdot \mid x)}\, \sum_{t \in \Omega_{\text{think}}(x)} \mathrm{KL}\!\left(\pi_\theta\!\left(\cdot \mid x, y_{<t}\right) \,\big\|\, \pi_{\theta_{R}}\!\left(\cdot \mid x, y_{<t}\right)\right).
\end{equation}
The overall objective with tolerance $\delta$ is then:
\begin{empheq}[box=\boxed]{equation}\label{eq:original-optimization-objective}
\max_{\Delta}\ \mathcal J_I\!\left(\theta_{R}+\Delta\right)
\quad \text{s.t.} \quad
\mathcal{L}_{\text{think}}\!\left(\theta_{R}+\Delta\right) \le \delta.
\end{empheq}
Noting that $\mathcal J_I$ is a surrogate objective for instruction following, referring to a class of functions $\mathrm{IF}$ that evaluate instruction alignment. In later we instantiate it with metrics based on instruction-attention alignment or leakage. In addition, motivated by the orthogonality between the LRM and ITM task vectors discussed earlier, we constrain only the conditional distribution in the segment of thinking special tokens and do not restrict the content generated in the other thinking or response segments, which preserves flexibility for improving instruction-following performance.

\textbf{Reasoning-aware Null-space Projection (Stage~1).}\label{para:reasoning-aware-null-space-projection}
To satisfy the KL constraint on the segment of thinking special tokens, we try to seek a parameter subspace that preserves the thinking format. Intuitively, if we view the forward inputs at the thinking positions as a ``measurement'' of the reasoning style, then any parameter perturbation that is unresponsive under this measurement will not change the model's thinking pattern. This idea corresponds to projecting the perturbation onto the \textbf{null space}~\citep{DBLP:conf/cvpr/WangL0X21} of the forward feature operator $\Phi=\operatorname{blkdiag}\!\left(\Phi^1,\ldots,\Phi^K\right)$ ($\operatorname{blkdiag}$ denotes the block-diagonal matrix), namely $\mathcal{N}(\Phi)=\{v:\Phi v=0\}$, as illustrated in \Fig~\ref{fig:rain-merging-method} (a). Such a null space projection keeps the token-level forward features at the thinking positions invariant.
Formally, for each submodule $k$, we construct the least-squares orthogonal projector $P^{\perp}(\cdot)$ using the forward feature operator $\Phi^k_{\Omega_{\text{think}}}$ built from thinking special token indexs $\Omega_{\text{think}}$ to form the null space:
\begin{equation}
P^{\perp}(\Phi^k_{\Omega_{\text{think}}})=\mathrm{diag}(1)-{\Phi^k_{\Omega_{\text{think}}}}^{\top}\!\left(\Phi^k_{\Omega_{\text{think}}}{\Phi^k_{\Omega_{\text{think}}}}^{\top}\right)^{+}\!\Phi^k_{\Omega_{\text{think}}},
\end{equation}
where $(\cdot)^+$ denotes Moore-Penrose pseudoinverse. And then project the ITM submodule task vector $\Delta_I^k$ by submodule-wise and stack them to form the overall projected task vector to satisfy the null space constraint:
\begin{equation}\label{eq:null-space-projection}
    \operatorname{vec}\!\left(\Delta_I^{\perp,k}\right)=P^{\perp}(\Phi^k_{\Omega_{\text{think}}})\,\operatorname{vec}\!\left(\Delta_I^k\right) \
\Rightarrow\
\Phi_{\Omega_{\text{think}}}\,\operatorname{vec}\!\left(\Delta_I^{\perp}\right)=0,\
\text{where }\Delta_I^{\perp}=\bigoplus_{k=1}^K \Delta_I^{\perp,k}.
\end{equation}

This projection keeps the merged model's intermediate representations and even the final logits at the thinking special tokens close to those of the anchor model. To verify its effectiveness in preserving the thinking format, we analyze a second-order expansion of the softmax KL divergence and show that the task vector after null-space projection satisfies the KL constraint on the special token output distribution in \Eq{eq:think-constraint}. This yields the following \textbf{Prop.}~\ref{prop:reasoning-aware-null-space-projection} (proof is in Appendix~\ref{subsec:proof-of-prop-reasoning-aware-null-space-projection}):
\begin{prop}\label{prop:reasoning-aware-null-space-projection}
Let the logits of sample $x$ at thinking special tokens $t \in \Omega_{\text{think}}(x)$ be $z_\theta(x,t)$, and let $\pi_\theta(\cdot \mid x, y_{<t})=\operatorname{softmax}(z_\theta(x,t))$. By a second-order approximation of the softmax–KL divergence with a uniform upper bound, for any perturbation $u$,
\begin{equation}
\mathrm{KL}\bigl(\operatorname{softmax}(z+u)\, \| \, \operatorname{softmax}(z)\bigr)
\le \tfrac{1}{8}\,\|u\|_2^2 + O\!\left(\|u\|_2^3\right).
\end{equation}
Assuming the model’s intermediate representations are Lipschitz continuous and bounded, there exist constants $C_1, C_2>0$ such that for
$u(x,t)=z_{\theta_R+\Delta}(x,t)-z_{\theta_R}(x,t)$, we have:
\begin{equation}
\|u(x,t)\|_2 \le C_1\,\bigl\|\Phi\,\operatorname{vec}(\Delta)\bigr\|_2 + C_2\,\|\Delta\|_2^2.
\end{equation}
Substituting the projected vector $\Delta_I^{\perp}=\bigoplus_{k=1}^K \Delta_I^{\perp,k}$ and the condition $\Phi\,\operatorname{vec}\!\left(\Delta_I^{\perp}\right)=0$ yields:
\begin{equation}
\mathcal{L}_{\text{think}}\!\left(\theta_R+\Delta_I^{\perp}\right)
\le \tfrac{1}{8}\,\mathbb{E}_{x,t}\!\left[\|u(x,t)\|_2^2\right]
+ O\!\left(\mathbb{E}_{x,t}\|u(x,t)\|_2^3\right)
= O\!\left(\bigl\|\Delta_I^{\perp}\bigr\|_2^2\right) \approx 0.
\end{equation}
\end{prop}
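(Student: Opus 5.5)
The proof splits into three steps: a softmax--KL expansion, a first-order perturbation bound on the logits, and a combination step.

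\emph{Step 1: the KL bound.} Fix $z$ and write $p=\operatorname{softmax}(z)$. Define $g(s)=\mathrm{KL}(\operatorname{softmax}(z+su)\,\|\,\operatorname{softmax}(z))$. Then $g(0)=0$ and $g'(0)=0$, since the KL divergence is minimized at $s=0$. The second derivative is
\[
g''(0)=u^\top\bigl(\operatorname{diag}(p)-pp^\top\bigr)u=\operatorname{Var}_{i\sim p}(u_i).
\]
The largest eigenvalue of the softmax Jacobian $\operatorname{diag}(p)-pp^\top$ is at most $1/2$. To see this, note that $\operatorname{Var}_p(u_i)\le \tfrac14(\max_i u_i-\min_i u_i)^2$ by Popoviciu's inequality, and $(\max_i u_i-\min_i u_i)^2\le 2\|u\|_2^2$. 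Hence $\operatorname{Var}_p(u_i)\le\tfrac12\|u\|_2^2$. Taylor's theorem then gives $\mathrm{KL}=\tfrac12 g''(0)+O(\|u\|^3)\le\tfrac14\|u\|_2^2+O(\|u\|_2^3)$.

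This is $\tfrac14$, not the stated $\tfrac18$. A sharper route is to use the range bound directly, which gives $\tfrac18(\max_i u_i-\min_i u_i)^2$. That quantity is bounded by $\tfrac18\|u\|_2^2$ only after restricting to perturbations with mean zero, or by accepting the range in place of the norm. I would present the constant as ``$\tfrac18$ times the squared range'' and note that the $\tfrac18$ form is obtained by this relaxation. Only the order $O(\|u\|_2^2)$ matters for the conclusion, and the third-order remainder is uniform because softmax derivatives are bounded.

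\emph{Step 2: the logit perturbation bound.} View the logits $z_{\theta}(x,t)$ as a composition of submodules $k=1,\dots,K$. Only the linear maps $W^k$ change, to $W^k+\Delta^k$. Expand to first order in $\Delta$ around $\theta_R$. The first-order change in the logits is a sum of contributions $J_k\,\Delta^k h^k_t$, where $h^k_t$ is the input to submodule $k$ under $\theta_R$, and $J_k$ is the Jacobian of the downstream network with respect to that submodule's output. By definition, $\Delta^k h^k_t=\Phi^k_t\operatorname{vec}(\Delta^k)$. Lipschitz continuity and boundedness give $\|J_k\|\le L$ uniformly, so the first-order term is at most $C_1\|\Phi\operatorname{vec}(\Delta)\|_2$. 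The remaining terms fall into two kinds:
\begin{itemize}
\item products of two or more $\Delta^k$'s;
\item the shift of the inputs, $h^k_t(\theta_R+\Delta)-h^k_t(\theta_R)$, multiplied by $\Delta^k$.
\end{itemize}
Both are $O(\|\Delta\|_2^2)$ under the bounded and Lipschitz assumptions, because the input shift is itself $O(\|\Delta\|)$. This yields the constant $C_2$.

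\emph{Step 3: combining.} For the projected vector, $\Phi\operatorname{vec}(\Delta_I^\perp)=0$ on the calibration positions, so $\|u\|_2\le C_2\|\Delta_I^\perp\|_2^2$. Plugging this into Step 1 and taking expectations gives
\[
\mathcal L_{\text{think}}\le\tfrac18\,\mathbb E\|u\|^2+O(\mathbb E\|u\|^3)=O(\|\Delta_I^\perp\|_2^4),
\]
which is in particular $O(\|\Delta_I^\perp\|_2^2)$ as stated. The main obstacle is Step 2. The null-space condition holds only at the positions used to build $\Phi$, so I would assume $\Omega_{\text{think}}$ of the expectation coincides with the calibration positions. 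Alternatively, I would treat the generalization gap as absorbed into the approximation, which is what the paper's ``$\approx 0$'' implicitly does. I would also have to control how downstream layers propagate the second-order input shifts, which is where the Lipschitz and boundedness assumptions are used.
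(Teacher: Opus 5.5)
Your Step~1 calculation is correct, and it shows more than that your route falls short of $\tfrac18$: the $\tfrac18\|u\|_2^2$ claim is false, so it should be corrected rather than rescued. Take $V=2$, $z=0$, $u=(\varepsilon,-\varepsilon)$, so that $p=(\tfrac12,\tfrac12)$ and $q=\operatorname{softmax}(u)$. Then $\mathrm{KL}(q\,\|\,p)=\mathbb{E}_q[u]-\log\mathbb{E}_p[e^{u}]=\varepsilon\tanh\varepsilon-\log\cosh\varepsilon=\tfrac12\varepsilon^2+O(\varepsilon^4)=\tfrac14\|u\|_2^2+O(\|u\|_2^4)$. This exceeds $\tfrac18\|u\|_2^2=\tfrac14\varepsilon^2$ by a quadratic amount that no $O(\|u\|_2^3)$ term can absorb.

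This $u$ is already mean-zero and has $(\max_i u_i-\min_i u_i)^2=2\|u\|_2^2$. So your relaxation to mean-zero perturbations does not turn the range bound into $\tfrac18\|u\|_2^2$; the factor $2$ is tight even after centering. The paper's proof reaches $\tfrac18$ only through its Hessian lemma $\|\mathrm{diag}(p)-pp^\top\|_2\le\tfrac14$. That lemma fails at this same $p$, where $(1,-1)$ is an eigenvector with eigenvalue $\tfrac12$, so your constant $\tfrac12$ is the right one. There are two valid replacements:
\begin{itemize}
\item $\mathrm{KL}\le\tfrac14\|u\|_2^2$, which holds exactly with no cubic remainder, by integrating the uniform curvature bound $\tfrac12$ as in the paper's integral-remainder step;
\item $\mathrm{KL}\le\tfrac18(\max_i u_i-\min_i u_i)^2$, which is Hoeffding's lemma, since $\mathrm{KL}(q\,\|\,p)=\log\mathbb{E}_q[e^{-u}]-\mathbb{E}_q[-u]$.
\end{itemize}
Only the order is used later, so the conclusion $O(\|\Delta_I^\perp\|_2^4)\subseteq O(\|\Delta_I^\perp\|_2^2)$ survives with $\tfrac14$ in place of $\tfrac18$.

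The more serious gap is in Step~2. You write the first-order change of the logits at a thinking position $t$ as $\sum_k J_k\,\Delta^k h^k_t$, using only submodule inputs at the same position $t$. In a causal Transformer this is not the linearization. The logits at $t$ depend on $W_K h_\tau$ and $W_V h_\tau$ for every $\tau\le t$, and on every earlier hidden state through later attention layers. The linear term is therefore $\sum_k\sum_{\tau\le t}J_{k,\tau}\,\Delta^k h^k_\tau$, where $J_{k,\tau}$ is the Jacobian of the logits at $t$ with respect to the output of submodule $k$ at position $\tau$.

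Because $\Phi_{\Omega_{\text{think}}}$ stacks features only at thinking-token positions, some $\Delta$ have $\Phi\operatorname{vec}(\Delta)=0$ but, generically, $J\operatorname{vec}(\Delta)\neq 0$. An example is a value-matrix perturbation that annihilates the input at \texttt{</think>} but not the input at a token that \texttt{</think>} attends to. Along $s\Delta$ the left side is then of order $s$ while the right side is $O(s^2)$, so no constants $C_1,C_2$ work, whatever Lipschitz or boundedness assumptions you make. The paper's proof does not resolve this either; it simply asserts $\|J\operatorname{vec}(\Delta)\|_2\le C_1\|\Phi\operatorname{vec}(\Delta)\|_2$ ``under bounded intermediate activations and operator norms''.

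To make the argument sound you have two options:
\begin{itemize}
\item state that inequality as an explicit hypothesis; or
\item enlarge $\Phi$ to stack features at all context positions $\tau\le t$ preceding each thinking token. Then $\Phi\operatorname{vec}(\Delta)=0$ makes the forward pass up to $t$ exactly invariant, by induction over layers: every perturbed linear map receives an unchanged input that its perturbation annihilates. So $u(x,t)=0$ and even the $C_2$ term becomes unnecessary.
\end{itemize}
Your Step~3 otherwise matches the paper's aggregation. Your caveat that the null-space condition holds only at the calibration positions is a legitimate point that the paper also glosses over.
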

Therefore, null-space projection in \Eq{eq:null-space-projection} approximately removes the thinking format constraint in the original objective and reduces the original optimization objective \Eq{eq:original-optimization-objective} to:
\begin{empheq}[box=\boxed]{equation}\label{eq:projected-optimization-objective}
\max_{\Delta^{\perp}}\ \mathcal{J}_I\!\left(\theta_R + \Delta^{\perp}\right), \text{ where } \Delta^{\perp}=f(\Delta_I^{\perp}).
\end{empheq}
With the thinking-format constraint relaxed, we next focus on strengthening the task vector's effect on instruction following.

\begin{figure}[t]
    \centering
    \includegraphics[width=\textwidth]{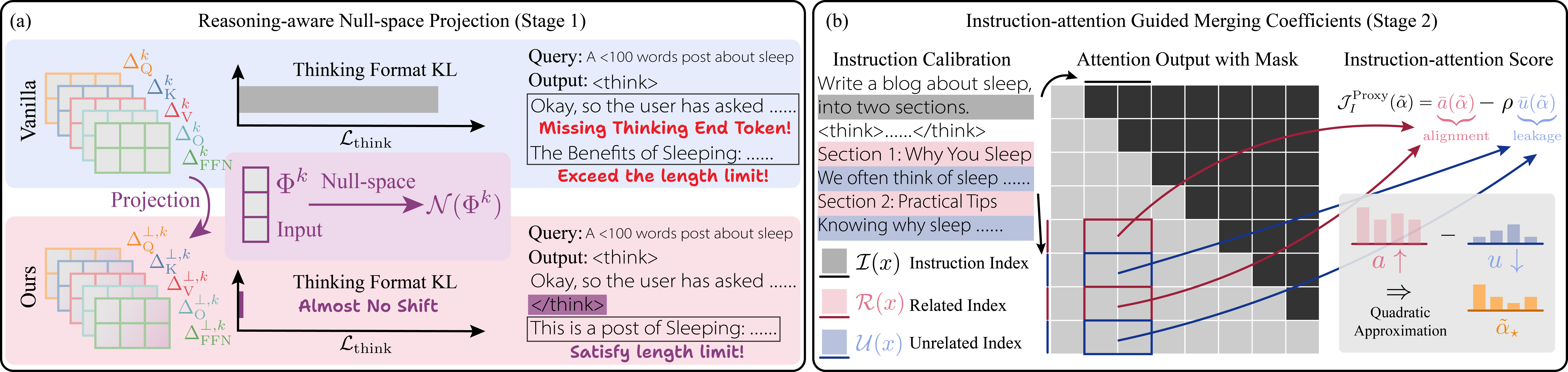}
    \caption{\footnotesize{Two stages of our \textbf{RAIN-Merging} pipeline. (a) For each submodule, the ITM task vector is projected onto the null space preventing shifts in thinking format. (b) Given the instruction calibration set, we compute the instruction-attention score from attention outputs to obtain merging coefficients.}}
    \label{fig:rain-merging-method}
    \vspace*{-3mm}
\end{figure}

\textbf{Instruction-attention Guided Merging Coefficients (Stage~2).}\label{para:instruction-attention-guided-merging-coefficients}
To enhance the performance gain of the ITM task vector during merging, we seek a suitable gradient-free surrogate objective to instantiate $\mathcal J_I$. Prior studies~\citep{guardieiro2025instruction} suggest that failures in instruction following often stem from insufficient conditioning on the instruction span during decoding: \textit{attention does not sufficiently focus on instruction-relevant tokens and instead leaks to unrelated regions}. A simple remedy is to amplify attention outputs on the instruction span at decoding time, which can remarkably improve instruction following. This approach, however, requires pre-identifying the instruction span, and excessive amplification may cause the model to ignore other necessary content. Motivated by this, we hypothesize that different layers and heads exhibit heterogeneous response behavior to instructions. Consequently, on the null-space–projected task vector $\Delta_I^{\perp,k}$, we introduce per-module scaling coefficients $\alpha=\{\alpha^k\}\in\R_+^K$ and reparameterize the merged model as
$\theta(\alpha)=\theta_R+\bigoplus_{k=1}^K \alpha^k\,\Delta_I^{\perp,k}$ to instantiate merging function $f$. Given that attention outputs are directly coupled to the self-attention mechanism, we first focus on the merging coefficients of these submodules, as $\tilde{\alpha}=\{\alpha^{\tilde k}\}\in \R_+^{\tilde K}$, where $\tilde k$ denotes the self-attention submodule index. 
Our central intuition is that \textit{an ideal merge should yield stronger attention responses on instruction-relevant spans (high alignment) while maintaining low attention activation on instruction-irrelevant content (low leakage)}. To translate this intuition into measurable quantities, we formalize the model’s forward computation as follows and in \Fig~\ref{fig:rain-merging-method} (b). Let $\operatorname{Att}^{\tilde k}(x, \tilde \alpha)[t, \tau]$ denote the attention output of the merged model with $\tilde\alpha$ at head $\tilde k$ from token position $t$ to $\tau$.
For an instruction-following sample $x \sim \mathcal D_I$, 
we define the per-sample normalized alignment $a$ and leakage $u$ metrics for head $\tilde k$:
\begin{equation}
\underbrace{a^{\tilde k}(x, \tilde \alpha)}_{\text{alignment}}
:= \sum_{t \in \mathcal R(x)} \sum_{\tau \in \mathcal I(x)}
\frac{\operatorname{Att}^{\tilde k}(x, \tilde \alpha)[t, \tau]}{|\mathcal I(x)|\,|\mathcal R(x)|},\
\underbrace{u^{\tilde k}(x, \tilde \alpha)}_{\text{leakage}}
:= \sum_{t \in \mathcal U(x)} \sum_{\tau \in \mathcal I(x)}
\frac{\operatorname{Att}^{\tilde k}(x, \tilde \alpha)[t, \tau]}{|\mathcal I(x)|\,|\mathcal U(x)|}.
\end{equation}
where $\mathcal I(x)\subset\{1,\dots,T\}$ represents the index set of instruction tokens that encodes the task description, formatting rules, constraints, and any examples in the query span. Likewise, $\mathcal R(x)$ denotes the set of output tokens whose content is directly constrained by the instruction in the response span, and $\mathcal U(x)$ the set of output tokens unrelated to the instruction.
Taking expectations over instruction-following samples $\mathcal D_I$ and heads $\tilde k$ yields averaged alignment $\bar{a}(\tilde{\alpha})=\sum_{\tilde k}\mathbb{E}_{x\sim\mathcal D_I}\![a^{\tilde k}(x, \tilde \alpha)]$ and averaged leakage $\bar{u}(\tilde{\alpha})=\sum_{\tilde k}\mathbb{E}_{x\sim\mathcal D_I}\![u^{\tilde k}(x, \tilde \alpha)]$.
We seek merging coefficients that achieve \textit{high alignment} and \textit{low leakage}. Accordingly, we combine the two metrics into a single \textbf{instruction-attention score} $\mathcal J_I^{\text{Proxy}}$ with trade-off hyperparameter $\rho>0$, instantiating the surrogate objective in the reduced problem \Eq{eq:projected-optimization-objective} then yields:
\begin{empheq}[box=\boxed]{equation}\label{eq:proxy-instruction-attention-objective}
\max_{\tilde{\alpha}} \mathcal J_I^{\text{Proxy}}(\tilde{\alpha})
:= \bar{a}(\tilde{\alpha})
\;-\;
\rho \bar{u}(\tilde{\alpha}).
\end{empheq}

\textbf{Quadratic Approximation of Instruction-attention Score.}
Although this objective is differentiable and could be optimized by gradient descent, we adopt a forward-pass approximation to reduce computation. Initialize at the directly merged point after projection, $\tilde{\alpha}_{(0)} \equiv \mathbf{1}$. Perform a second-order Taylor expansion of $\mathcal J_I^{\text{Proxy}}(\tilde{\alpha})$ around $\tilde{\alpha}_{(0)}$:
\begin{equation}
\mathcal J_I^{\text{Proxy}}(\tilde{\alpha})
\approx
\mathcal J_I^{\text{Proxy}}(\tilde{\alpha}_{(0)})
+ \nabla_{\tilde{\alpha}}\mathcal J_I^{\text{Proxy}}(\tilde{\alpha}_{(0)})^{\!\top}(\tilde{\alpha}-\tilde{\alpha}_{(0)})
+ \tfrac{1}{2}\,(\tilde{\alpha}-\tilde{\alpha}_{(0)})^{\!\top} H\,(\tilde{\alpha}-\tilde{\alpha}_{(0)}),
\end{equation}
where $H=\nabla_{\tilde{\alpha}}^2\mathcal J_I^{\text{Proxy}}(\tilde{\alpha}_{(0)})$ is the Hessian.
Writing $g=\nabla_{\tilde{\alpha}}\mathcal J_I^{\text{Proxy}}(\tilde{\alpha}_{(0)})$ and ignoring the constant term in \Eq{eq:proxy-instruction-attention-objective}, we obtain the quadratic surrogate:
\begin{equation}
\mathcal J_I^{\text{quad}}(\tilde{\alpha})
= g^{\top}(\tilde{\alpha}-\tilde{\alpha}_{(0)})
+ \tfrac{1}{2}\,(\tilde{\alpha}-\tilde{\alpha}_{(0)})^{\!\top} H\,(\tilde{\alpha}-\tilde{\alpha}_{(0)}).
\end{equation}
\ding{192} For first-order term $g$, if we restrict $\tilde{\alpha}$ to small deviations near $\tilde{\alpha}_{(0)}$ and adopt a linear approximation of alignment and leakage on merging coefficients, the per-head gradient can be estimated as:
\begin{equation}
g^{\tilde k}
=\left.\frac{\partial \mathcal J_I^{\text{Proxy}}(\tilde{\alpha})}{\partial \tilde{\alpha}^{\tilde k}}\right|_{\tilde{\alpha}_{(0)}}
\approx
\frac{\partial \bar{a}(\tilde{\alpha})}{\partial \tilde{\alpha}^{\tilde k}}
- \rho\,\frac{\partial \bar{u}(\tilde{\alpha})}{\partial \tilde{\alpha}^{\tilde k}}
\approx
\mathbb{E}_{x\sim\mathcal D_I}\!\left[
a^{\tilde k}(x, \tilde{\alpha}_{(0)}) - \rho\,u^{\tilde k}(x, \tilde{\alpha}_{(0)})
\right],
\end{equation}
which replaces partial derivatives with the current metric values. In practice, this approximately scales the contribution of each head to instruction versus non-instruction attention mass, consistent with the intuition behind attention amplification. 
\newline
\ding{193} For second-order term $H$, to avoid the cost of computing the Hessian for large models, we adopt a diagonal approximation that limits the step size,
$
\tilde{H}^{\tilde k}=\mathrm{diag}(1) + \mathbb{E}_{x\sim\mathcal D_I}[u^{\tilde k}(x, \tilde{\alpha}_{(0)})],
$
where the second term imposes a stronger quadratic penalty on heads with higher leakage, thereby limiting their amplification.
Substituting the approximations into the quadratic objective, dropping $\tilde{\alpha}_{(0)}$ for simplicity, and constraining $\tilde{\alpha}\in[\tilde{\alpha}_l,\tilde{\alpha}_u]^{\tilde K}$ to prevent scaling instability, we obtain a closed-form solution to the convex quadratic program:
\begin{empheq}[box=\boxed]{equation}
\max_{\tilde{\alpha}\in[\tilde{\alpha}_l,\tilde{\alpha}_u]^{\tilde K}}
\left(g^{\top}\tilde{\alpha}-\tfrac{1}{2}\,\tilde{\alpha}^{\top}\tilde{H}\tilde{\alpha}\right)
\quad\Rightarrow\quad
\tilde{\alpha}^{\tilde k}_{\star}
=
\operatorname{clip}_{[\tilde{\alpha}_l,\tilde{\alpha}_u]}\!\left(\frac{g^{\tilde k}}{\tilde{H}^{\tilde k}}\right),
\end{empheq}
where $\tilde{H}=\operatorname{diag}\bigl(\tilde{H}^{\tilde k}\bigr)$ and $\operatorname{clip}_{[a,b]}(\cdot)$ clips to the interval $[a,b]$. Thus, by a second-order expansion with engineering approximations and using only forward attention statistics in a gradient-free manner, we approximate the optimal merging coefficients $\tilde{\alpha}_{\star}$ of self-attention submodules that increase instruction alignment while controlling attention leakage to non-instruction content. For modules shared across attention heads, such as the feed-forward network (FFN), we set the layer-wise coefficient to the average over heads.
Aggregating the coefficients for all submodules yields the complete instruction attention guided merging coefficients $\alpha_{\star}=\{\alpha_{\star}^{k}\}$.

\textbf{Combined to Our Two-stage Merging Method.}\label{para:combined-to-our-two-stage-merging-method}
We chain ``Reasoning-aware Null-space Projection (Stage~1)'' with ``Instruction-attention Guided Merging Coefficients (Stage~2)'' to propose a fully gradient-free merging pipeline, termed \emph{Reasoning-Aware Instruction-attention guided Null-space projection Merging} (\textbf{RAIN-Merging}) as \Fig~\ref{fig:rain-merging-method}. Our method addresses the challenge in the original optimization problem of \Eq{eq:original-optimization-objective}, improving instruction following while preserving the reasoning structure after merging. The final merged model is:
\begin{empheq}[box=\boxed]{equation}\label{eq:rain-merging}
\theta_{\star}=\theta_R+\lambda \bigoplus_{k=1}^K \alpha_{\star}^k\, \Delta_I^{\perp,k},
\end{empheq}
where $\lambda$ is a global scaling coefficient that controls the merging strength. The entire procedure only relies on forward-feature extraction and attention statistics, and does not require gradient-based updates. RAIN-Merging offers a low-cost, interpretable path to strengthen instruction following in LRMs, filling the gap left by costly SFT. 

\textbf{Implementation details.} To balance compute and storage efficiency, we merge only the core modules that are most sensitive to attention outputs, namely the $\text{Q, K, V, O, and FFN}$ parameters.
In Stage~1, we sample 150 examples from the Mixture-of-Thoughts~\citep{openr1} dataset distilled from DeepSeek-R1~\citep{deepseekai2025deepseekr1incentivizingreasoningcapability} from to form the reasoning calibration set. In Stage~2, we an instruction calibration set obtained by distilling DeepSeek-R1 on IFEval~\citep{zhou2023instruction}, followed by LLM-as-Judge filtering and manual screening, resulting a total of 365 samples. More details of implementation, complete algorithm pseudocode, calibration set construction, and ablation studies are provided in Appendix~\ref{sec:method-implementation-details}, \ref{sec:algorithm}, \ref{sec:calibration_set_construction}, and \ref{subsec:ablation_study_of_reasoning_calibration_set_size}. 


\section{Experiments}\label{sec:experiments}

\begin{table}[t]
    \centering
    \small
    \setlength{\tabcolsep}{4.5pt}
    \caption{\footnotesize{Comprehensive comparison of instruction following and reasoning \& general capabilities. We merge Qwen2.5-7B-Instruct (ITM) into DeepSeek-R1-Distill-Qwen-7B (LRM) and compare our RAIN-Merging against multiple merging methods as well as SFT trained on the same calibration data. ``Avg.'' denotes the average over all subsets. ``RT'' reports the run-time for merging or training in minutes. The best and second-best results are highlighted in \textbf{bold} and \underline{underlined}, respectively.}}
    \label{tab:baseline_compare}
    \vspace{-3pt}
    \centering
    \resizebox{\textwidth}{!}{ 
    \begin{tabular}{l c ccccc c ccccc c c}
    \toprule
    \multirow{3}{*}{\textbf{Method}} & & \multicolumn{5}{c}{\textbf{Instruction Following}} & & \multicolumn{5}{c}{\textbf{Reasoning \& General}} & & \multirow{3}{*}{\textbf{RT}} \\
    \cmidrule(lr){3-7}\cmidrule(lr){9-13}
    & & \multirow{2}{*}{\textbf{IFEval}} & \multirow{2}{*}{\textbf{CELLO}} & \textbf{Info} & \textbf{Complex} & \multirow{2}{*}{\textbf{Avg.}} & & \multirow{2}{*}{\textbf{Math}} & \multirow{2}{*}{\textbf{GPQA}} & \multirow{2}{*}{\textbf{Aider}} & \textbf{Arena-} & \multirow{2}{*}{\textbf{Avg.}} & & \\
    & & & & \textbf{Bench} & \textbf{Bench} & & & & & & \textbf{Hard-v2} & & & \\
    \midrule
    ITM                  & & \textbf{70.43} & \textbf{19.15} & \textbf{78.49} & \textbf{43.63} & \textbf{52.92} & & 47.27 & 29.80 & \textbf{33.33} & 62.86 & 43.32 & & -- \\
    LRM                  & & 55.45 & 16.59 & 71.73 & 32.72 & 44.12 & & 64.75 & 44.44 & 29.63 & 65.29 & 51.03 & & -- \\
    \midrule
    SFT                  & & 62.48 & 17.11 & 68.58 & 32.15 & 45.08 & & 62.57 & 41.92 & 28.89 & 64.67 & 49.51 & & 120.32 \\
    \midrule
    \multicolumn{15}{c}{{\textit{Data-free Merging}}} \\
    Task Arithmetic      & & 60.44 & 16.97 & 73.07 & 33.34 & 45.96 & & 64.22 & 42.93 & 26.67 & 64.53 & 49.59 & & 0.93 \\
    SLERP                & & 58.96 & 17.56 & 72.18 & 34.93 & 45.95 & & 63.82 & 42.93 & 31.85 & 65.29 & 50.97&  & 1.12 \\
    Karcher              & & 62.11 & 17.99 & 73.16 & 34.06 & 46.83 & & 64.85 & 48.99 & 30.37 & \textbf{66.13} & 52.58 & & 1.20 \\
    TIES                 & & 58.60 & 18.48 & 73.91 & 34.40 & 46.35 & & 65.44 & 46.46 & 32.59 & 63.47 & 51.99 & & 1.18 \\
    DARE-TIES            & & 60.81 & 17.88 & 73.33 & 33.49 & 46.38 & & 64.26 & 47.98 & 29.63 & 64.17 & 51.51 & & 2.21 \\
    \midrule
    \multicolumn{15}{c}{{\textit{Data-dependent Merging}}} \\
    ACM-TIES             & & 59.33 & 16.45 & 72.44 & 33.75 & 45.50 & & 64.57 & 45.96 & 32.59 & 62.00 & 51.28 & & 12.45 \\
    LEWIS-TIES           & & 60.44 & 17.41 & 72.67 & 34.40 & 46.23 & & 62.07 & 48.99 & 31.11 & 64.80 & 51.74 & & 16.60 \\
    AIM-TIES             & & 62.78 & 17.93 & 73.11 & 34.28 & 47.02 & & \underline{65.92} & \underline{49.49} & \textbf{33.33} & 63.64 & \underline{53.10} & & 18.51 \\
    \rowcolor{rain-merging}
    RAIN-Merging         & & \underline{63.22} & \underline{19.03} & \underline{74.53} & \underline{35.66} & \underline{48.11} & & \textbf{68.75} & \textbf{54.55} & \textbf{33.33} & \underline{65.73} & \textbf{55.59} & & 20.96 \\
    \bottomrule
    \end{tabular}
    }
    \vspace{-8pt}
\end{table}

In this section, we empirically investigate three research questions:

\begin{itemize}[left=0pt,itemsep=0pt,topsep=0pt]
  \item \textbf{RQ1 (Effectiveness and Efficiency).} Compared with baseline methods, can RAIN-Merging improve instruction-following while maintaining reasoning capabilities, maintaining the computational and memory efficiency characteristic of gradient-free approaches? (\Tab~\ref{tab:baseline_compare} and \Fig~\ref{fig:memory})
  
  \item \textbf{RQ2 (Scalability).} How well does RAIN-Merging scale across models of varying sizes and architectures, and does it perform effectively in interactive agentic scenarios? (\Tab~\ref{tab:scale_compare}, \ref{tab:agent})
  
  \item \textbf{RQ3 (Ablation).} What roles do the two stages of RAIN-Merging play? Specifically, does Stage~1 preserve the format of thinking segments and the output distribution, and does Stage~2 enhance instruction-attention scores? (\Tab~\ref{tab:ablation_stage} and \Fig~\ref{fig:think_format_compare}, \Fig~\ref{fig:layer_alignment_leakage_comparison})
  \item \textbf{RQ4 (Investigation).} How does augmenting the instruction following capabilities of LRMs influence their inherent reasoning efficacy? Specifically, how does this evolution in reasoning ability manifest in specific cases? (\Tab~\ref{tab:math_if}, \ref{tab:reason_eval}, \Sec~\ref{subsec:case_study_on_ifeval}, \ref{subsec:case_study_on_gpqa})
\end{itemize}

\subsection{Experimental Setup}\label{sec:experimental_setup}

We begin with a brief overview of the benchmarks, models, and baselines used in our experiments. Additional details on experimental settings, benchmarks and evaluation metrics, and hyperparameters are provided in Appendix~\ref{sec:detailed_experimental_setup}.

\textbf{Benchmarks.} To comprehensively assess instruction following, we use four mainstream benchmarks: \textbf{IFEval}~\citep{zhou2023instruction}, \textbf{CELLO}~\citep{DBLP:conf/aaai/HeZHCXHZLX24}, \textbf{InfoBench}~\citep{DBLP:conf/acl/QinSHYCWW00Y24}, and \textbf{ComplexBench}~\citep{DBLP:conf/nips/WenKGWHZLHGXLTW24}. To comprehensively evaluate reasoning and general capabilities, we use nine benchmarks: Mathematical reasoning is evaluated by aggregating results from six benchmarks, as \textbf{Math}. We also measure performance on code editing (\textbf{Aider}~\citep{aider-polyglot-2024}), STEM (\textbf{GPQA}~\citep{rein2024gpqa}), and creative writing (\textbf{Arena-Hard-v2}~\citep{li2024crowdsourced}) to reflect general and reasoning capabilities. For agentic scenarios, we use \textbf{ALFWorld}~\citep{DBLP:conf/iclr/ShridharYCBTH21} and \textbf{WebShop}~\citep{DBLP:conf/nips/Yao0YN22}, two realistic multi-turn interactive tasks, to evaluate how well the model integrates reasoning and instruction following to solve complex problems.

\textbf{Models.} We evaluate RAIN-Merging on models of different sizes and architectures: DeepSeek-R1-Distill-Qwen-1.5B/7B/14B/32B~\citep{deepseekai2025deepseekr1incentivizingreasoningcapability} (LRM) and Qwen2.5-1.5B/7B\footnote{Although DeepSeek-R1-Distill-Qwen-1.5B/7B are trained from Qwen2.5-Math-1.5B/7B~\citep{yang2024qwen2}, we find that Qwen2.5-Math-1.5B/7B-Instruct do not outperform the distilled LRMs in instruction following. We therefore select the stronger instruction followers, Qwen2.5-1.5B/7B-Instruct, as ITMs.
}/14B/32B-Instruct~\citep{qwen2025qwen25technicalreport} (ITM), as well as the Llama family~\citep{dubey2024llama} using DeepSeek-R1-Distill-Llama-8B (LRM), its instruction-tuned counterpart Llama-3.1-8B-Instruct (ITM). 


\textbf{Baselines.} We include several data-free, task-vector based merging methods: \textbf{Task Arithmetic}~\citep{DBLP:conf/iclr/IlharcoRWSHF23}, \textbf{SLERP}~\citep{biship2007pattern,goddard2024arcee}, \textbf{Karcher}~\citep{nielsen2013matrix,goddard2024arcee}, \textbf{TIES}~\citep{DBLP:conf/nips/YadavTCRB23}, and \textbf{DARE}~\citep{DBLP:conf/icml/Yu0Y0L24}. We also compare with data-dependent, activation-based merging approaches that leverage small calibration sets, including \textbf{ACM}~\citep{yao2025activation}, \textbf{LEWIS}~\citep{chopra2025lewis}, and \textbf{AIM}~\citep{nobari2025activation}. To strengthen baseline performance, we apply TIES on top of other merging baselines as in previous work~\citep{wu2025unlocking}. In addition, we report a training baseline using \textbf{SFT} on the same calibration data.

\subsection{Results}\label{sec:results}

\begin{table}[t]
\centering
\small
\setlength{\tabcolsep}{4.5pt}
\caption{\footnotesize{Merging performance and relative gains of RAIN-Merging across model three scales and two architectures. We merge the corresponding ITM into the LRM with base models: Qwen2.5-1.5B/14B/32B, and Llama-3.1-8B. ``Avg.'' denotes the average over all subsets. For each scale, the subsequent ``\textit{(relative gain)}'' row reports the relative improvement of our method over the LRM, highlighted in green.}}
\label{tab:scale_compare}
\vspace{-3pt}
\centering
\resizebox{\textwidth}{!}{
\begin{tabular}{l c ccccc c ccccc}
\toprule
\multirow{3}{*}{\textbf{Model}} & & \multicolumn{5}{c}{\textbf{Instruction Following}} & & \multicolumn{5}{c}{\textbf{Reasoning \& General}} \\
\cmidrule(lr){3-7}\cmidrule(lr){9-13}
& & \multirow{2}{*}{\textbf{IFEval}} & \multirow{2}{*}{\textbf{CELLO}} & \textbf{Info} & \textbf{Complex} & \multirow{2}{*}{\textbf{Avg.}} & & \multirow{2}{*}{\textbf{Math}} & \multirow{2}{*}{\textbf{GPQA}} & \multirow{2}{*}{\textbf{Aider}} & \textbf{Arena-} & \multirow{2}{*}{\textbf{Avg.}} \\
& & & & \textbf{Bench} & \textbf{Bench} & & & & & & \textbf{Hard-v2} & \\
\midrule
Qwen2.5-1.5B-Instruct         & & 36.78 & 19.04 & 64.76 & 27.83 & 37.10 & & 31.77 & 25.76 & 16.30 & 38.45 & 28.07 \\
DeepSeek-R1-Distill-Qwen-1.5B & & 39.00 & 16.03 & 55.29 & 21.54 & 32.97 & & 41.62 & 29.29 & 14.07 & 39.73 & 31.18 \\
Qwen2.5-1.5B-RAIN-Merging     & & 41.59 & 16.51 & 58.18 & 23.62 & 34.97 & & 45.87 & 33.33 & 14.81 & 40.93 & 33.74 \\
\textit{(relative gain)}      & & \pos{+6.64\%} & \pos{+2.98\%} & \pos{+5.23\%} & \pos{+9.63\%} & \pos{+6.09\%} & & \pos{+10.21\%} & \pos{+13.79\%} & \pos{+5.26\%} & \pos{+3.02\%} & \pos{+8.20\%} \\
\midrule
Llama-3.1-8B-Instruct         & & 68.58 & 27.21 & 78.67 & 38.47 & 53.23 & & 35.59 & 25.25 & 34.07 & 72.23  & 41.79 \\
DeepSeek-R1-Distill-Llama-8B  & & 58.41 & 17.78 & 73.33 & 38.38 & 46.97 & & 60.21 & 38.38 & 27.41 & 71.93  & 49.48 \\
Llama-3.1-8B-RAIN-Merging     & & 63.77 & 18.84 & 77.38 & 38.93 & 49.73 & & 61.95 & 43.94 & 30.37 & 77.07  & 53.33 \\
\textit{(relative gain)}      & & \pos{+9.18\%} & \pos{+5.99\%} & \pos{+5.52\%} & \pos{+1.42\%} & \pos{+5.86\%} & & \pos{+2.89\%} & \pos{+14.47\%} & \pos{+10.81\%} & \pos{+7.15\%} & \pos{+7.78\%} \\
\midrule
Qwen2.5-14B-Instruct          & & 79.85 & 20.13 & 83.38 & 44.19 & 56.89 & & 52.73 & 36.87 & 37.04 & 74.40 & 50.29 \\
DeepSeek-R1-Distill-Qwen-14B  & & 71.35 & 18.71 & 81.33 & 40.68 & 53.02 & & 72.31 & 57.07 & 33.33 & 80.67 & 60.85 \\
Qwen2.5-14B-RAIN-Merging      & & 76.71 & 19.57 & 84.13 & 44.63 & 56.26 & & 74.58 & 57.58 & 40.00 & 86.25 & 64.60 \\
\textit{(relative gain)}      & & \pos{+7.51\%} & \pos{+4.58\%} & \pos{+3.44\%} & \pos{+9.69\%} & \pos{+6.11\%} & & \pos{+3.13\%} & \pos{+0.88\%} & \pos{+20.00\%} & \pos{+6.92\%} & \pos{+6.17\%} \\
\midrule
Qwen2.5-32B-Instruct          & & 78.56 & 18.59 & 84.40 & 46.91 & 57.11 & & 52.35 & 36.87 & 57.78 & 81.90 & 57.22 \\
DeepSeek-R1-Distill-Qwen-32B  & & 76.52 & 19.69 & 83.56 & 44.44 & 56.05 & & 68.00 & 60.10 & 54.81 & 82.00 & 66.23 \\
Qwen2.5-32B-RAIN-Merging      & & 77.26 & 19.96 & 84.76 & 45.74 & 56.93 & & 75.67 & 61.62 & 54.07 & 83.70 & 68.77 \\
\textit{(relative gain)}      & & \pos{+0.97\%} & \pos{+1.39\%} & \pos{+1.44\%} & \pos{+2.93\%} & \pos{+1.57\%} & & \pos{+11.28\%} & \pos{+2.52\%} & -1.35\% & \pos{+2.07\%} & \pos{+3.83\%} \\
\bottomrule
\end{tabular}}
\vspace{-8pt}
\end{table}

\textbf{Performance Comparison with Baseline Methods. (RQ1)} 
As shown in \Tab~\ref{tab:baseline_compare}, RAIN-Merging achieves overall gains across both instruction-following and reasoning \& general capability evaluations, outperforming all merging baselines. While Task Arithmetic and SFT can improve instruction following to some extent, they typically do so at the cost of reasoning and general capabilities. In contrast, our method consistently surpasses all baselines on instruction-following, mathematical reasoning, and general-capability benchmarks. Our merged LRM trails the ITM slightly on instruction following, indicating room for further improvement. Interstingly, the merged model exhibits stable gains in reasoning and general ability. We hypothesize that stronger instruction adherence improves the quality of the model’s internal chain of thought, which yields better reasoning performance. Overall, RAIN-Merging substantially enhances instruction following without sacrificing the LRM’s reasoning and general capabilities.

\textbf{Run-time and Memory Analysis. (RQ1)} 
Our method achieves a favorable efficiency trade-off. Its runtime, though slightly above activation-based merging baselines due to null-space computation, is far below SFT (RT in \Tab~\ref{tab:baseline_compare}). Similarly, while storing hidden features increases memory use compared to other merging methods, its footprint remains much smaller than SFT's (\Fig~\ref{fig:memory}). This demonstrates our approach as a highly practical, training-free alternative for enhancing LRMs.

\textbf{Performance on Models of Different Sizes and Architectures. (RQ2)} 
To evaluate the scalability of our method across model sizes and architectures, we conduct experiments on several configurations, including the Qwen2.5 family distilled from DeepSeek-R1 at 1.5B, 14B, and 32B parameters, and the 8B model built on the Llama 3.1 architecture. As reported in \Tab~\ref{tab:scale_compare}, our method consistently enhances instruction-following and reasoning performance, achieving average improvements from 1.57\% to 8.20\% on LRMs. These results confirm that RAIN-Merging robustly strengthens instruction adherence and preserves complex reasoning across diverse model sizes and architectures.



\textbf{Performance in Agentic Scenarios. (RQ2)} 
To further assess the practical benefits of improved instruction following, we evaluate the merged model on two representative agentic scenarios, WebShop and AlfWorld. As shown in \Tab~\ref{tab:agent}, the merged model achieves better performance than the original LRM and ITM in these scenarios, indicating that enhanced instruction understanding and reasoning effectively support multi-turn interaction and complex decision making. These results also demonstrate that our gradient-free approach is effective for increasing the real-world utility of LRMs.


\textbf{Ablation Study of Stage~1 and Stage~2. (RQ3)}
{We investigate the contribution of the two components in RAIN-Merging, shown in \Tab~\ref{tab:ablation_stage}. When without Stage~2, the merged model retains reasoning and general capabilities while achieving competitive instruction-following performance. Conversely, when without Stage~1, instruction-following performance improves further but at a noticeable cost to reasoning and general ability, as it lacks protection of the thinking format. Incorporating both stages yields the best trade-off: Stage~1 ensures reasoning performance is maintained while Stage~2 boosts instruction-following performance. Thereby, both stages play critical and complementary roles.}

\textbf{Effectiveness of Null-space Projection. (RQ3)}
To assess how our null-space projection in Stage~1 preserves thinking formats, we evaluate its impact on thinking special token distributions and resulting generation outputs. We measure the KL divergence near thinking tokens as in \Eq{eq:think-constraint} and the rate of missing </think> tokens. Results \Fig~\ref{fig:think_format_compare} show that Task Arithmetic substantially alters the distribution ($\mathcal{L}_{\mathrm{think}} = 0.1224$) and results in 6.4\% missing \texttt{</think>} tokens, violating the output format. Our approach, in contrast, only induces minimal change ($\mathcal{L}_{\mathrm{think}} = 0.0065$) and ensures no missing tokens (0.0\%). These findings indicate that null-space projection successfully protects thinking formats.


\begin{figure}[t]
    \centering
    
    \begin{minipage}[t]{0.34\textwidth}
      \centering
      \includegraphics[width=\linewidth]{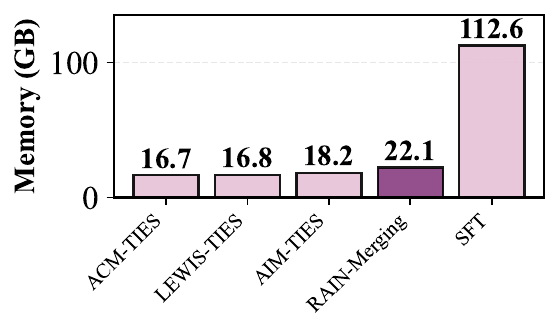}
      \vspace*{-25pt}
      \captionof{figure}{\footnotesize GPU memory usage comparison between different methods under the same configuration as \Tab~\ref{tab:baseline_compare}.}
      \label{fig:memory}
    \end{minipage}
    \hfill
    \begin{minipage}[t]{0.28\textwidth}
      \centering
      \vspace*{-75pt}
      \captionof{table}{\footnotesize Performance of RAIN-Merging in agent settings. We merge Qwen2.5-7B-Instruct (ITM) into DeepSeek-R1-Distill-Qwen-7B (LRM).}
      \label{tab:agent}
      \resizebox{\linewidth}{!}{
        \setlength{\tabcolsep}{1.5pt}
        \begin{tabular}{lcc}
          \toprule
          Model & \textbf{ALFWorld} & \textbf{WebShop}\\
          \midrule
          ITM  & 17.50 & 10.45\\
          LRM  & 22.00 & 26.63\\
          \rowcolor{rain-merging}
          RAIN-Merging & 25.00 & 29.42\\
          \bottomrule
        \end{tabular}}
    \end{minipage}
    \hfill
    \begin{minipage}[t]{0.34\textwidth}
      \centering
      \vspace*{-75pt}
      \captionof{table}{\footnotesize {Performance of ablation on Stage~1 and Stage~2, under the same setup as \Tab~\ref{tab:baseline_compare}. "I Avg." and "R Avg." denote the average performance on instruction-following and reasoning \& general benchmarks.}}
      \label{tab:ablation_stage}
      \vspace*{-7pt}
      \resizebox{0.95\linewidth}{!}{
        \begin{tabular}{l c c}
            \toprule
            {Method} & \textbf{I Avg.} & \textbf{R Avg.} \\ 
            \midrule 
            {RAIN-Merging w/o Stage~2}  & 46.58  & 54.92 \\ 
            {RAIN-Merging w/o Stage~1}  & 47.62  & 52.44 \\ 
            \rowcolor{rain-merging}
            {RAIN-Merging} & 48.11 & 55.59 \\ 
            \bottomrule
        \end{tabular}}
    \end{minipage}
    \vspace{-8pt}
    \end{figure}
\begin{figure}[t]
    \centering
    \begin{minipage}[t]{0.39\linewidth}
        \centering
        \includegraphics[width=\linewidth]{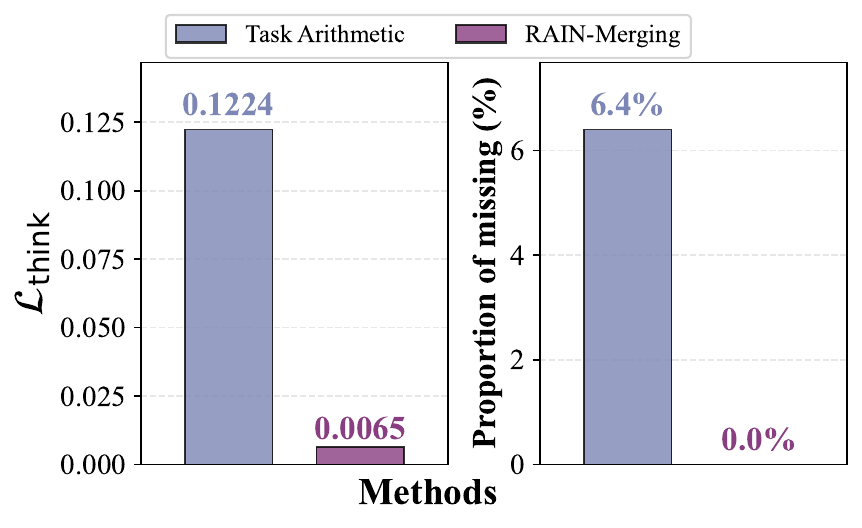}
        \vspace*{-5mm}
        \caption{\footnotesize{$\mathcal L_{\text{think}}$ in \Eq{eq:think-constraint} (left) on the reasoning calibration validation set, and the proportion of generations missing the closing \texttt{</think>} token (right) on IFEval under the same configuration as \Tab~\ref{tab:baseline_compare}.
        }}
        \label{fig:think_format_compare}
    \end{minipage}
    \hspace{1mm} 
    \begin{minipage}[t]{0.58\linewidth}
        \centering
        \includegraphics[width=\linewidth]{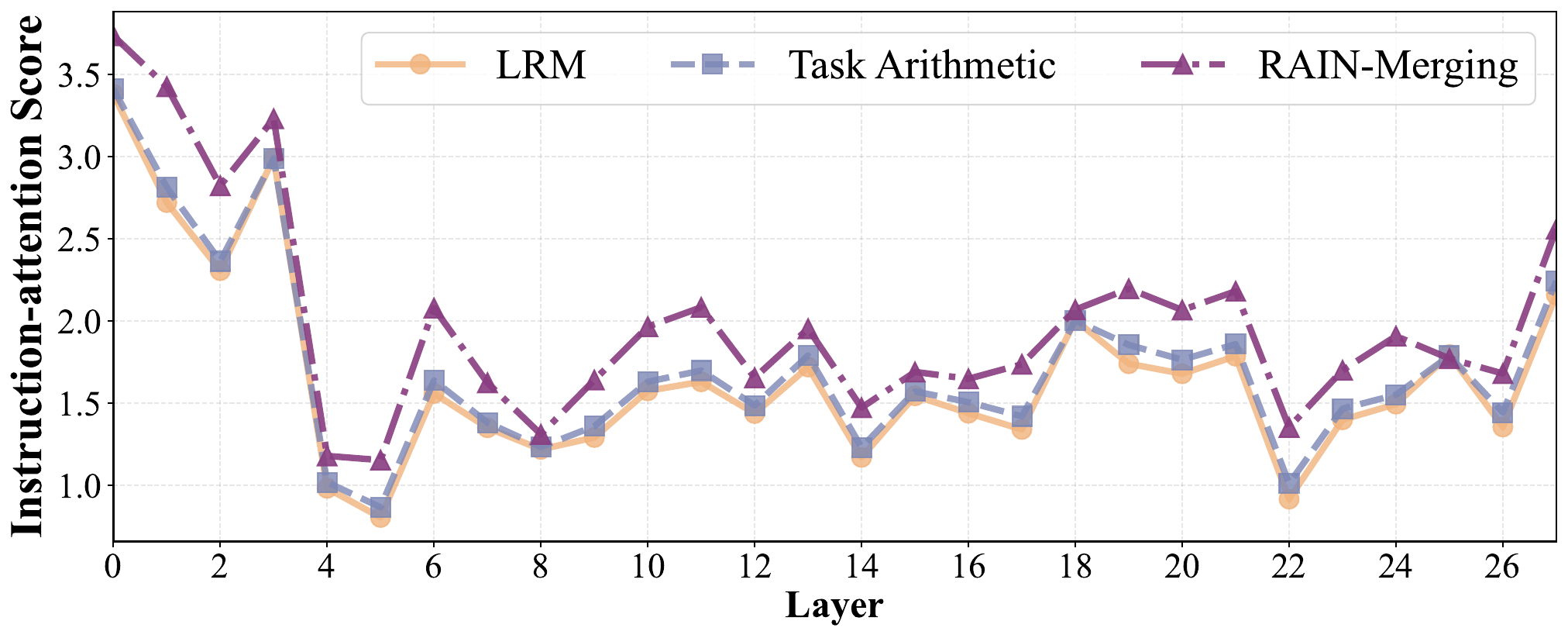}
        \vspace*{-5mm}
        \caption{\footnotesize{Layer-wise instruction attention score (alignment $-$ leakage). Per-layer scores on IFEval instruction examples; higher is better. We compare the unmerged LRM, Task Arithmetic, and RAIN-Merging when merging Qwen2.5-7B-Instruct (ITM) into DeepSeek-R1-Distill-Qwen-7B (LRM). 
        }}
        \label{fig:layer_alignment_leakage_comparison}
    \end{minipage}
    \vspace*{-4mm}
 \end{figure}

\textbf{Effectiveness of Merging Coefficients. (RQ3)}
To validate the merging coefficients, we compare the Instruction-Attention Score in \Eq{eq:proxy-instruction-attention-objective} across layers before and after merging under different methods. As shown in \Fig~\ref{fig:layer_alignment_leakage_comparison}, instruction-attention guided coefficients in Stage~2 enable \textbf{RAIN-Merging} to consistently outperform both the LRM and Task Arithmetic, exhibiting a higher alignment and lower leakage. This finding suggests that our weighted reparameterization of merging submodules enhances activation along instruction-aware pathways while slightly suppressing leakage, which improves instruction following without altering the original reasoning pattern.

\begin{figure}[t]
    \centering
    \begin{minipage}[t]{0.425\linewidth}
        \centering
        \small
        \captionof{table}{Merging performance of RAIN-Merging on MathIF under the same configuration as \Tab~\ref{tab:baseline_compare}. \textbf{IF Acc.} and \textbf{Math Acc.} are the accuracy of instruction constraints and math answers, respectively. \textbf{Both Acc.} represents both constraints and math answers are correct.}
        \label{tab:math_if}
        \resizebox{\textwidth}{!}{
        \begin{tabular}{lccc}
        \toprule
        \textbf{Model}              & \textbf{IF Acc.} & \textbf{Math Acc.} & \textbf{Both Acc.} \\
        \midrule
        ITM   & 48.81   & 40.95    & 19.76    \\
        LRM   & 25.86   & 53.81    & 12.62    \\
        \rowcolor{rain-merging}
        RAIN-Merging & 35.10   & 54.76    & 20.48    \\
        \midrule
        \textit{(relative gain)}             & \pos{+35.73\%} & \pos{+1.77\%} & \pos{+62.26\%} \\
        \bottomrule
        \end{tabular}}
    \end{minipage}
    \hspace{1mm} 
    \begin{minipage}[t]{0.545\linewidth}
        \centering
        \small
        \captionof{table}{Evaluation of reasoning and answer traces under the same configuration as \Tab~\ref{tab:baseline_compare}. We report Reasoning Internal Coherence (\textbf{RIC}) and Reasoning-Answer Alignment (\textbf{RAA}) on IFEval, AIME25, and GPQA (0-5 scale). The subsequent ``\textit{(relative gain)}'' row reports the relative improvement of our method over the LRM, highlighted in green.}
        \label{tab:reason_eval}
        \resizebox{\textwidth}{!}{
        \tabcolsep=2.5pt
        \begin{tabular}{lccccccc}
        \toprule
        \multirow{2}{*}{\textbf{Model}} & \multicolumn{2}{c}{\textbf{IFEval}} & \multicolumn{2}{c}{\textbf{AIME25}} & \multicolumn{2}{c}{\textbf{GPQA}} & \multirow{2}{*}{\textbf{Avg.}} \\
        \cmidrule(lr){2-3}\cmidrule(lr){4-5}\cmidrule(lr){6-7}
        & \textbf{RIC} & \textbf{RAA} & \textbf{RIC} & \textbf{RAA} & \textbf{RIC} & \textbf{RAA} & \\
        \midrule
        LRM             & 4.58 & 4.41 & 4.50 & 3.60 & 3.53 & 3.76 & 4.06 \\
        \rowcolor{rain-merging}
        RAIN-Merging    & 4.61 & 4.51 & 4.50 & 4.10 & 3.56 & 4.26 & 4.26 \\
        \midrule
        \textit{(relative gain)}     & \pos{+0.77\%} & \pos{+2.26\%} & 0.00\% & \pos{+13.89\%} & \pos{+0.86\%} & \pos{+13.31\%} & \pos{+4.78\%} \\
        \bottomrule
        \end{tabular}}
    \end{minipage}
    \vspace*{-2mm}
 \end{figure}

\textbf{Joint Evaluation of Reasoning and Instruction-Following. (RQ4)}
To more directly assess whether RAIN-Merging can jointly maintain strong reasoning and strict instruction-following within a \emph{single} task, we further evaluate on \textit{MathIF}~\citep{fu2505scaling}. MathIF augments math problems with verifiable constraints and reports both constraint satisfaction and math correctness, as well as a joint metric requiring both. As shown in \Tab~\ref{tab:math_if}, RAIN-Merging markedly improves MathIF instruction hard accuracy over the LRM while leaving math correctness essentially unchanged. Crucially, on \textit{Both Acc}, the merged model rises from $12.62\%$ to $20.48\%$ ($+62.26\%$ relative), outperforming both the LRM and the ITM, demonstrating a substantial gain on the core goal of being \textit{both correct and follow} within a single benchmark.

\textbf{Reasoning and Answer Traces Evaluation. (RQ4)}
To complement our reasoning and answer quality evaluations, we adopt the framework of \citet{wang2025joint} and perform a process-level analysis of reasoning traces and answers. We measure two chain-of-thought level metrics by LLM-as-a-judge: \ding{192} \textit{Reasoning Internal Coherence (RIC)} assesses how logically consistent and self-contained the reasoning trace is. \ding{193} \textit{Reasoning--Answer Alignment (RAA)} measures how well the reasoning trace semantically supports the final answer.
For each sample we provide the question, the ground-truth answer, the full reasoning trace, and the model’s final answer response. The judge then assigns 0-5 scores for RIC and RAA.
\Tab~\ref{tab:reason_eval} shows that RAIN-Merging yields slight improvements in both RIC and RAA on IFEval. For the reasoning-focused benchmarks AIME25 and GPQA, it maintains RIC scores comparable to the LRM, while achieving a marked increase in RAA, notably by over 13\% in both cases.
In other words, the internal coherence of the reasoning is preserved, with the connection between the reasoning and the final decision becoming noticeably tighter.

\textbf{Case Study on Instruction Following and Reasoning. (RQ4)}
To provide a granular understanding of how RAIN-Merging enhances model performance, we conducted a series of case studies focusing on instruction following and complex reasoning in Appendix~\ref{subsec:case_study_on_ifeval} and \ref{subsec:case_study_on_gpqa}. In instruction-following scenarios, RAIN-Merging effectively rectifies common failure modes of baseline LRMs, such as the violation of explicit formatting rules. On reasoning tasks, the merged model demonstrates superior internal consistency. It not only corrects localized numerical slips through more deliberate and step-by-step computations but also successfully bridges the "knowing-doing gap"~\citep{schmied2025llms} where the baseline LRM's final selection contradicts its own correct internal derivation. These qualitative observations suggest that RAIN-Merging achieves more than just preserving reasoning chains, it actively aligns the model's decision-making process with its logical reasoning.

\section{Conclusion}\label{sec:conclusion}

We propose RAIN-Merging, a gradient-free method to enhance instruction following in LRMs while preserving their structured reasoning outputs. By projecting the instruction task vector onto the null space of the thinking format and scaling it by instruction-attention guided coefficients, RAIN-Merging achieves a balance between instruction following and reasoning structure preservation. The method is evaluated on instruction-following, reasoning \& general capability, agentic benchmarks, showing that RAIN-Merging not only substantially improves the LRM's instruction-following ability but also brings gains in reasoning and general capability across several settings.

\subsubsection*{Acknowledgments}
    The authors would like to thank the anonymous reviewers for their insightful comments.

    The author would also like to express his gratitude to Yucheng Bai for his assistance in identifying and correcting typographical errors within the formulas.
    
    The research leading to these results has received funding from National Natural Science Foundation of China (62376155).

\bibliography{ref}
\bibliographystyle{iclr2026_conference}

\newpage
\appendix

\onecolumn
\setcounter{section}{0}

\section*{Appendix}
\setcounter{section}{0}
\setcounter{figure}{0}
\setcounter{prop}{0}
\makeatletter 
\renewcommand{\thefigure}{A\arabic{figure}}
\renewcommand{\theHfigure}{A\arabic{figure}}
\renewcommand{\thetable}{A\arabic{table}}
\renewcommand{\theHtable}{A\arabic{table}}

\makeatother
\setcounter{table}{0}
\setcounter{equation}{0}
\renewcommand{\theequation}{A\arabic{equation}}

\section{Ethics Statement}\label{sec:ethics_statement}

This research adheres to the licenses and applicable laws governing upstream open-source models and datasets. RAIN-Merging is developed using publicly available weights and data that permit derivation and redistribution.

\textbf{Safety.} Model merging can introduce ``capability or safety drift,'' such as new biases, jailbreak risks, or shifts in hallucination patterns while improving instruction following. The merged model may produce inaccurate, biased, or inappropriate content. It must not be used directly in high-risk decision-making contexts such as medicine, law, or finance. Any production deployment should include human oversight, operation logging, rate limiting, and compliance review procedures.

\textbf{Dataset use.} We rely only on data authorized for academic reproducibility. During data cleaning, we make every effort to remove personally identifiable information and sensitive content. We also disclose potential dataset biases, coverage gaps, and risks of benchmark contamination.

\textbf{Societal impact.} We caution that generative models may exacerbate information asymmetries, reinforce stereotypes, or be applied to produce misleading content. We firmly oppose misuse and will work with the community to address any identified negative impacts.

\section{Reproducibility Statement}\label{sec:reproducibility_statement}

To ensure the reproducibility of our results, we provide the following resources and documentation: all algorithm implementations and experiment scripts will be released anonymously with the \textbf{supplementary materials}, accompanied by documentation of key functions and the project directory structure. The calibration datasets used in our experiments will be made available alongside the appendix. Public links are included for all open-source models and datasets used in this work.

\section{LLM Usage Statement}\label{sec:llm_usage_statement}

We used large language models (LLMs) in the following stages and disclose their roles as follows:

\textbf{Writing Stage.} LLMs (both closed- and open-source) were used only for copyediting and grammar checking, including terminology normalization, syntactic polishing, and formatting. They were not used to generate claims, collect evidence, or construct results.

\textbf{Benchmark Evaluation.} When a benchmark’s original paper or community practice requires a closed-source LLM (for example, as a judge or as a baseline), we strictly follow the prescribed protocol and disclose the exact model versions.

\textbf{Calibration Set Construction.} We adopt an LLM-as-Judge procedure for automated filtering and scoring of candidate samples as an initial pass (producing only scores or labels; generated text is not used as a training target). A human second-pass review follows to ensure data quality and compliance. All third-party data and models are used within their licenses, with source links and permission details provided.
\section{Related Work}\label{sec:related_work}

\textbf{Large Reasoning Model.} Early studies show that prompting models to explicitly produce intermediate steps during reasoning can substantially improve complex reasoning performance, as in Chain-of-Thought~\citep{DBLP:conf/nips/Wei0SBIXCLZ22} and Tree-of-Thought~\citep{DBLP:conf/nips/YaoYZS00N23}. Building on this insight, a new generation of LRMs has shifted toward training paradigms that directly incentivize reasoning with reinforcement learning; for example, OpenAI’s o1 series and DeepSeek-R1 report marked advances on tasks in mathematics and code that require extended reasoning~\citep{jaech2024openai,deepseekai2025deepseekr1incentivizingreasoningcapability}. These models typically generate structured “thought processes” or “thinking formats,” yet in real applications they often exhibit tension with strict instruction following. Beyond explicit intermediate reasoning such as CoT and ToT, subsequent work further improves reasoning quality and stability: Self-Consistency samples multiple solution paths and uses majority voting to increase reliability; Least-to-Most decomposes complex problems into subgoals ordered from easy to hard; Program-of-Thoughts and PAL externalize the reasoning into executable programs, decoupling computation from reasoning and substantially reducing arithmetic and procedural errors~\citep{DBLP:conf/iclr/0002WSLCNCZ23,DBLP:conf/iclr/ZhouSHWS0SCBLC23,chen2022program,DBLP:conf/icml/GaoMZ00YCN23}. In the “reasoning plus acting” paradigm, ReAct interleaves thought traces with tool interactions to mitigate hallucinations, while Reflexion employs language-based self-reflection and memory to iteratively refine policies over multi-turn interactions~\citep{DBLP:conf/iclr/YaoZYDSN023,DBLP:conf/nips/ShinnCGNY23}. In parallel, LRMs are trained with process-level feedback and reinforcement learning to directly encourage thinking before answering: the o1 system emphasizes large-scale RL and thinking-first training and alignment strategies, and DeepSeek-R1 shows that under weak or no supervision, pure RL (e.g., GRPO) can induce longer and more stable chains of thought~\citep{jaech2024openai,deepseekai2025deepseekr1incentivizingreasoningcapability}. Moreover, process supervision and process reward models (PRMs) provide finer-grained step-level feedback that, compared with outcome supervision of final answers, better cultivates verifiable reasoning chains and test-time expansion~\citep{DBLP:conf/iclr/LightmanKBEBLLS24}. RAIN-Merging is complementary to this trajectory: instead of retraining the LRM, we preserve the thinking format at merge time and use structured coefficients to selectively enhance instruction responses, thereby striking a balance between fidelity to the reasoning structure and improved instruction following.

\textbf{Instruction Following.} In the alignment paradigm, InstructGPT systematically improved the stability of “following user intent” through reinforcement learning from human feedback (RLHF), and showed that small instruction-tuned models can achieve strong human preference scores, establishing a foundation for subsequent research on instruction following~\citep{DBLP:conf/nips/Ouyang0JAWMZASR22}. For objective evaluation, IFEval targets programmatically verifiable rules, for example, length limits, keywords, formatting, to reduce subjective scoring noise and facilitate reproducibility and fair comparison~\citep{zhou2023instruction}. CELLO abstracts multi-dimensional attributes from real-world complex instructions, such as multi-step dependencies, format or quantity constraints, and semantic consistency, to characterize where LLMs struggle with complex instruction understanding~\citep{DBLP:conf/aaai/HeZHCXHZLX24}. InfoBench proposes a decomposed metric that breaks a complex instruction into checkable sub-requirements, enabling finer-grained measurement of compliance and error sources~\citep{DBLP:conf/acl/QinSHYCWW00Y24}. ComplexBench emphasizes the compositional challenge of multiple simultaneous constraints, systematically testing robustness and trade-offs when many constraints co-occur~\citep{DBLP:conf/nips/WenKGWHZLHGXLTW24}. Building on these mainstream benchmarks, this work introduces an instruction-attention–oriented merging strategy: during merging, we quantitatively constrain and amplify the model’s responsiveness to instruction-relevant spans while maintaining the stability of its long-chain reasoning format, thereby balancing compliance and an interpretable process.

\textbf{Model Merging.} Parameter-space merging offers a training-free or low-data path for integrating capabilities. Model Soup averages weights from multiple fine-tuned checkpoints to improve out-of-distribution robustness and overall performance~\citep{DBLP:conf/icml/WortsmanIGRLMNF22}. Task vectors implement additive editing and compositionality by linear arithmetic on weight differences, enabling positive and negative edits as well as multi-task synthesis~\citep{DBLP:conf/iclr/IlharcoRWSHF23}. TIES-Merging explicitly addresses interfering factors such as resetting parameters with negligible updates and resolving sign conflicts, which mitigates performance degradation caused by parameter-level interference when merging multiple models~\citep{DBLP:conf/nips/YadavTCRB23}. Community tools and practice are also maturing. MergeKit consolidates and engineers diverse merging algorithms, supporting large-model merging and recipe reproduction under resource constraints, which facilitates methodological comparison and reproducibility~\citep{goddard2024arcee}. Systematic surveys have begun to organize theoretical perspectives, method taxonomies, and application boundaries for merging, providing references for unified terminology, evaluation settings, and future research agendas~\citep{yang2024model}. However, most existing methods focus on average multi-task performance and out-of-distribution robustness, with limited attention to the fidelity of fine-grained functional structures such as the reasoning format, for example, explicit thought traces and process markers. RAIN-Merging targets this gap: during parameter fusion it introduces subspace constraints tied to the “thinking format,” and allocates merging coefficients at the per-layer and per-head levels using instruction attention, thereby strengthening instruction following while suppressing structural drift of the original reasoning patterns.

\textbf{Null Space Projection.} Constraint ideas centered on orthogonality and null spaces have been repeatedly validated in continual learning and knowledge editing. OGD projects gradients for new tasks onto the orthogonal complement of the subspace of old tasks, explicitly constraining update directions to mitigate forgetting~\citep{DBLP:conf/aistats/FarajtabarAML20}. GPM extracts and maintains “important gradient subspaces” via singular value decomposition, then performs layer-wise orthogonal projection of new gradients to reduce interference across tasks~\citep{DBLP:conf/iclr/SahaG021}. For LLM knowledge editing, AlphaEdit projects edit perturbations into the null space of “preserved knowledge” and provides theoretical guarantees on output preservation, which markedly reduces cumulative damage in sequential edits~\citep{DBLP:conf/iclr/FangJWMSW0C25}. In parameter-efficient and mergeable settings, LoRA-Null initializes or constrains the LoRA adaptation subspace using the null space of pretrained representations, alleviating forgetting and improving parallelism and mergeability with other updates~\citep{tang2025lora}. For multi-task and multi-LoRA model merging, OSRM imposes orthogonalization constraints on task-specific LoRA subspaces before fine-tuning, reducing mutual interference at merge time and improving compatibility~\citep{DBLP:conf/acl/0002Z25}. Following this line of work, we construct a null-space projection on features tied to the “reasoning format,” and combine it with instruction-attention–guided coefficients. The merged model thus preserves structured reasoning outputs while improving adherence to verifiable constraints such as format, length, and enumeration.

\section{Proof}\label{sec:proof}

\begin{figure}[htb]
    \centering
    \includegraphics[width=0.5\textwidth]{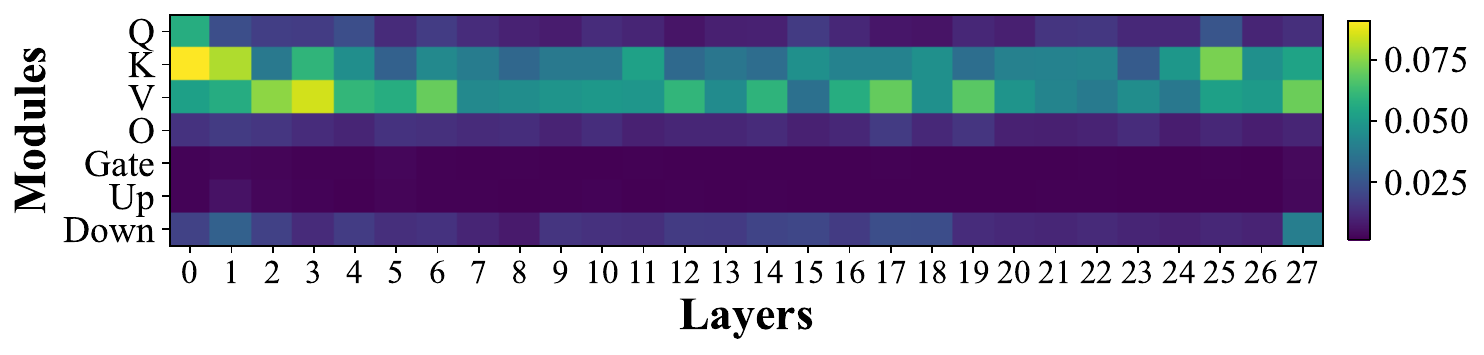}
    \caption{\footnotesize{\textit{Principal subspace cosine similarity} between DeepSeek-R1-Distill-Qwen-1.5B (LRM) and Qwen2.5-1.5B-Instruct (ITM) task vectors for each layer and submodule.}}
    \label{fig:subspace_1-5B}
    \vspace*{-3mm}
\end{figure}
\begin{figure}[htb]
    \centering
    \includegraphics[width=\textwidth]{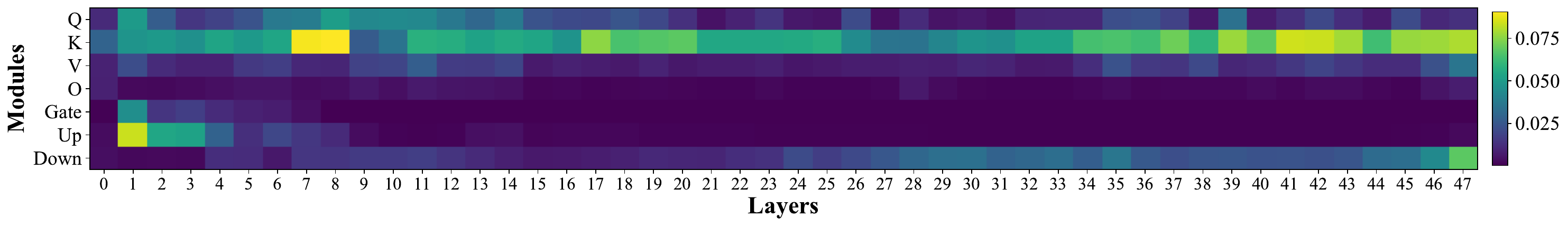}
    \caption{\footnotesize{\textit{Principal subspace cosine similarity} between DeepSeek-R1-Distill-Qwen-14B (LRM) and Qwen2.5-14B-Instruct (ITM) task vectors for each layer and submodule.}}
    \label{fig:subspace_14B}
    \vspace*{-3mm}
\end{figure}

\subsection{Proof of why orthogonal in parameters $\neq$ invariant in outputs}\label{subsec:proof-of-why-orthogonal-in-parameters-not-invariant-in-outputs}
We first describe how we compute orthogonality between \textbf{principal parameter subspaces}. Let the two sources be the LRM task vector or weight difference, denoted by $R$, and the ITM counterpart, denoted by $I$. For each layer and each linear submodule $W^k\in\mathbb{R}^{d^k_{\text{out}}\times d^k_{\text{in}}}$, we take the top $S$ singular directions (default $S=16$ in our experiments) and perform SVD:
\begin{equation}
W^k_R \;=\; U^k_R\,\Sigma^k_R\,(V^k_R)^{\top}, 
\qquad
W^k_I \;=\; U^k_I\,\Sigma^k_I\,(V^k_I)^{\top}.
\end{equation}
Write $U^k_{R,S}\in\mathbb{R}^{d^k_{\text{out}}\times S}$ for the top-$S$ left singular vectors of $U^k_R$ (similarly $V^k_{R,S}\in\mathbb{R}^{d^k_{\text{in}}\times S}$), and analogously $U^k_{I,S},V^k_{I,S}$ for source $I$.

\textbf{Principal subspace cosine similarity.} We focus on the \emph{left} (output-side) principal subspaces and define the alignment matrix
\begin{equation}
A^k \;=\; (U^k_{R,S})^{\top}U^k_{I,S}\ \in\ \mathbb{R}^{S\times S}.
\end{equation}
Let $\sigma^k_1,\ldots,\sigma^k_S\in[0,1]$ be the singular values of $A^k$. They equal the cosines of the principal angles between the two subspaces: $\sigma^k_i=\cos\vartheta^k_i$. We define the \textbf{principal subspace cosine similarity} as the mean cosine of principal angles:
\begin{equation}
\cos \Theta^k_S\big(U^k_{R,S},U^k_{I,S}\big)\;=\;\frac{1}{S}\sum_{i=1}^{S}\sigma^k_i.
\end{equation}
Smaller values indicate stronger orthogonality between the sources at that (layer, module) cell.

\textbf{Empirical observation.} Across model sizes and all layers/modules in Qwen2.5-1.5B/7B/14B (\Fig~\ref{fig:subspace}, \ref{fig:subspace_1-5B}, \ref{fig:subspace_14B}), we observe $\cos \Theta^k_S<0.1$ (with only a few exceptions), indicating that LRM and ITM task vectors are largely \emph{orthogonal in parameter principal directions}. However, as the theory below shows, such parameter-space orthogonality does \emph{not} imply invariance in the output space (i.e., unchanged logits on the thinking format), and thus cannot replace the \emph{null-space projection} constraint used in our method.

\textbf{Why orthogonal in parameters $\boldsymbol{\neq}$ invariant in outputs.} We formalize this issue as the following \textbf{Prop.~\ref{prop:orth-not-inv}} and give a proof with dimension argument.
\begin{prop}[Insufficiency of parameter-space orthogonality for output invariance]\label{prop:orth-not-inv}
    For each submodule $k$, let $\mathcal{U}_{I,S}^k,\mathcal{V}_{I,S}^k$ be the $S$-dimensional instruction-side principal left/right subspaces and define the admissible low-rank parameter perturbation space
    \begin{equation}
    \mathcal{T}_I^k \;=\; \mathcal{U}_{I,S}^k \otimes \mathcal{V}_{I,S}^k
    \;=\;
    \operatorname{span}\!\left\{\operatorname{vec}(u v^\top):\,u\in \mathcal{U}_{I,S}^k,\ v\in \mathcal{V}_{I,S}^k\right\},
    \qquad
    \mathcal{T}_I \;=\; \bigoplus_{k=1}^K \mathcal{T}_I^k.
    \end{equation}
    Let $J$ be the Jacobian of the logits on the protected thinking tokens $\Omega_{\text{think}}$ at the anchor $\theta_R$, with total parameter dimension $D=\dim(\theta)$ and rank $r=\operatorname{rank}(J)$. Then, in generic position,
    \begin{equation}
    \dim\!\bigl(\mathcal{T}_I\cap \operatorname{Null}(J)\bigr)\ \le\ \max\{0,\ KS^2 - r\}.
    \end{equation}
    In particular, if $r>KS^2$, one has $\mathcal{T}_I\cap \operatorname{Null}(J)=\{0\}$ and hence $\mathcal{T}_I\nsubseteq \operatorname{Null}(J)$. Even when $r\le KS^2$, the inclusion $\mathcal{T}_I\subseteq \operatorname{Null}(J)$ requires a measure-zero alignment and thus almost never holds. Consequently, there exists a nonzero $\Delta\in\mathcal{T}_I$ with $J\Delta\neq 0$, implying
    \begin{equation}
    \mathcal{L}_{\text{think}}(\theta_R+\Delta)\;=\;\tfrac12\,\Delta^\top \bigl(\mathbb{E}[J^\top F J]\bigr)\Delta \;+\; o(\|\Delta\|^2)\;>\;0,
    \end{equation}
    where $F$ is the Fisher matrix of the softmax.
\end{prop}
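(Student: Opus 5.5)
The argument is a dimension count followed by a second-order expansion of the KL term. First I will fix the dimensions. For each submodule $k$ the space $\mathcal{T}_I^k=\mathcal{U}_{I,S}^k\otimes\mathcal{V}_{I,S}^k$ is spanned by the $S^2$ vectors $\operatorname{vec}(u_i v_j^\top)$, where $u_i$ and $v_j$ are orthonormal bases of the two $S$-dimensional subspaces. These vectors are orthonormal because $\langle \operatorname{vec}(u_iv_j^\top),\operatorname{vec}(u_{i'}v_{j'}^\top)\rangle=\langle u_i,u_{i'}\rangle\langle v_j,v_{j'}\rangle$. Hence $\dim\mathcal{T}_I^k=S^2$. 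The blocks act on disjoint coordinates, so the direct sum gives $\dim\mathcal{T}_I=KS^2$.

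Next I will restrict the Jacobian to this space. Let $B\in\mathbb{R}^{D\times KS^2}$ have orthonormal columns spanning $\mathcal{T}_I$. Then $\mathcal{T}_I\cap\operatorname{Null}(J)=B\,\operatorname{Null}(JB)$, and rank–nullity gives
\begin{equation}
\dim\bigl(\mathcal{T}_I\cap\operatorname{Null}(J)\bigr)=KS^2-\operatorname{rank}(JB).
\end{equation}
The core step is to show that $\operatorname{rank}(JB)=\min\{r,KS^2\}$ in generic position.

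I will use a standard transversality fact. For an $r$-dimensional row space $\operatorname{Row}(J)$ and a $KS^2$-dimensional subspace $\mathcal{T}_I$, the condition $\operatorname{rank}(JB)<\min\{r,KS^2\}$ means that certain minors of $JB$ all vanish. This is a proper algebraic subset of the Grassmannian, provided it is not identically satisfied, and one explicit configuration where the rank is full rules that out. Such a configuration exists by choosing the subspaces in general position, so the failure set has measure zero. Outside it,
\begin{equation}
\dim\bigl(\mathcal{T}_I\cap\operatorname{Null}(J)\bigr)=\max\{0,KS^2-r\}.
\end{equation}
In particular the inequality in Prop.~\ref{prop:orth-not-inv} holds, and $r>KS^2$ forces the intersection to be $\{0\}$.

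\textbf{Main obstacle.} The difficulty is making ``generic position'' honest. $\mathcal{T}_I$ is not an arbitrary subspace: it is a direct sum of Kronecker-structured blocks. So I must check that the generic-rank statement survives when genericity is taken over the principal subspaces $(\mathcal{U}_{I,S}^k,\mathcal{V}_{I,S}^k)$, or equivalently over $J$ with $\mathcal{T}_I$ fixed. I will first try the second route: treat $J$ as generic given $r$, since the statement only needs measure-zero failure in some natural parametrization.

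For the case $r\le KS^2$, the inclusion $\mathcal{T}_I\subseteq\operatorname{Null}(J)$ is equivalent to $JB=0$. That is $r\cdot KS^2$ independent linear conditions on generic data, again a measure-zero set. In both cases there is a nonzero $\Delta\in\mathcal{T}_I$ with $J\Delta\neq0$.

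Finally, for the KL identity I will Taylor-expand each per-token term. Set $u=J_t\Delta+O(\|\Delta\|^2)$. Then
\begin{equation}
\mathrm{KL}(\operatorname{softmax}(z+u)\,\|\,\operatorname{softmax}(z))=\tfrac12 u^\top F u+O(\|u\|^3),
\end{equation}
with $F=\operatorname{diag}(p)-pp^\top$. This is the same second-order expansion used for Prop.~\ref{prop:reasoning-aware-null-space-projection}, and I take it as given. Summing over $t\in\Omega_{\text{think}}$ and taking expectations gives $\mathcal{L}_{\text{think}}(\theta_R+\Delta)=\tfrac12\Delta^\top\mathbb{E}[J^\top FJ]\Delta+o(\|\Delta\|^2)$.

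To get strict positivity of the quadratic form, note that $\operatorname{Null}(F)=\operatorname{span}(\mathbf 1)$ when $p$ has full support. A logit shift along $\mathbf 1$ leaves the softmax unchanged. So I will define $J$ as the Jacobian of the centered logits, or equivalently note that $J\Delta\notin\operatorname{span}(\mathbf 1)$ generically. With that, $J\Delta\neq0$ gives $\Delta^\top J^\top FJ\Delta>0$, and the remainder does not change the sign for small $\Delta$.
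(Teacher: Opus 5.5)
Your proof is correct. It follows the paper's skeleton: count $\dim\mathcal{T}_I=KS^2$, bound the intersection with $\operatorname{Null}(J)$, then expand the KL to second order. Where you differ is in how you justify the two load-bearing steps, and in both places your version is the one that holds.

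For the core step, the paper cites $\dim(\mathcal{A}\cap\mathcal{B})\le\max\{0,\dim\mathcal{A}+\dim\mathcal{B}-D\}$ as a ``standard upper bound.'' As a general fact this is false (take $\mathcal{A}=\mathcal{B}$). Grassmann's formula gives the reverse inequality, and the displayed quantity is only the \emph{generic} value, so the paper silently uses genericity at exactly that point. Your identity $\dim(\mathcal{T}_I\cap\operatorname{Null}(J))=KS^2-\operatorname{rank}(JB)$, together with the minors argument, makes the genericity explicit and even yields equality. Two small fixes are needed to close it:
\begin{itemize}
\item Replace ``by choosing the subspaces in general position'' with an explicit witness. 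With $\mathcal{T}_I$ fixed, take $\operatorname{Row}(J)$ nested with $\mathcal{T}_I$: containing it if $r\ge KS^2$, contained in it otherwise. This gives $\operatorname{rank}(JB)=\min\{r,KS^2\}$.
\item Note that the ``consequently'' clause needs $r\ge 1$.
\end{itemize}
Choosing genericity over $J$ with the Kronecker-structured $\mathcal{T}_I$ fixed is the weaker reading, but it is adequate for the statement as written.

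Your dimension count is also sharper than the paper's. $\dim\mathcal{T}_I^k=S^2$ and additivity over disjoint coordinate blocks hold unconditionally, so the paper's hedges (``in generic position,'' ``ignoring accidental cross-module dependencies'') are unnecessary.

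In the last step, the paper infers $\Delta^\top\mathbb{E}[J^\top FJ]\Delta>0$ from $F\succeq 0$ and $J\Delta\neq 0$. This does not follow, because $F\mathbf{1}=0$. Your centering repair is the right one, and it costs nothing: with $P_c$ the (per-token) projector onto $\mathbf{1}^\perp$, one has $P_cFP_c=F$. So the quadratic form in the statement is unchanged, and the only effect is that the rank entering your dimension count becomes that of the centered Jacobian.

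Overall, the paper's version is shorter, while yours is actually valid under a clearly stated notion of ``generic position.''
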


\begin{proof}[Proof of Prop.~\ref{prop:orth-not-inv}]
    Each module contributes an $S$-dimensional left subspace and an $S$-dimensional right subspace; their Kronecker product yields
    \begin{equation}
    \dim \mathcal{T}_I^k \;=\; S\cdot S \;=\; S^2
    \quad\text{in generic position,}
    \end{equation}
    so, ignoring accidental cross-module dependencies,
    \begin{equation}
        \dim \mathcal{T}_I \;=\; \sum_{k=1}^{K} \dim \mathcal{T}_I^k \;=\; KS^2.
    \end{equation}
    By the rank–nullity theorem for $J\in\mathbb{R}^{m\times D}$,
    \begin{equation}
    \dim \operatorname{Null}(J) \;=\; D - r.
    \end{equation}
    For two subspaces $\mathcal{A},\mathcal{B}\subset\mathbb{R}^{D}$, a standard upper bound on the intersection dimension states
    \begin{equation}
    \dim(\mathcal{A}\cap \mathcal{B})\ \le\ \max\{0,\ \dim\mathcal{A}+\dim\mathcal{B}-D\}.
    \end{equation}
    Setting $\mathcal{A}=\mathcal{T}_I$ and $\mathcal{B}=\operatorname{Null}(J)$ gives
    \begin{equation}
    \dim\!\bigl(\mathcal{T}_I\cap \operatorname{Null}(J)\bigr)
    \ \le\ 
    \max\bigl\{0,\ KS^2 + (D-r) - D\bigr\}
    \ =\ 
    \max\{0,\ KS^2 - r\}.
    \end{equation}
    Hence, if $r>KS^2$, the intersection is trivial and $\mathcal{T}_I\subseteq \operatorname{Null}(J)$ is impossible. Even when $r\le KS^2$, the full inclusion would require not only $\dim\mathcal{T}_I\le \dim\operatorname{Null}(J)$ but also a non-generic containment (measure-zero alignment) between the two subspaces; thus it almost never holds in generic position.
    
    Finally, since $F\succeq 0$ and $M=\mathbb{E}[J^\top FJ]\succeq 0$, any nonzero $\Delta\in\mathcal{T}_I$ with $J\Delta\neq 0$ satisfies $\Delta^\top M\Delta>0$, yielding
    \begin{equation}
    \mathcal{L}_{\text{think}}(\theta_R+\Delta)\;=\;\tfrac12\,\Delta^\top M\Delta+o(\|\Delta\|^2)\;>\;0.
    \end{equation}
\end{proof}
    
In words, \emph{orthogonality of principal parameter subspaces} does \emph{not} guarantee \emph{first-order invariance of outputs} on the thinking format. This is precisely why our Stage~1 imposes a \emph{null-space projection} constraint (i.e., $\Phi_{\Omega_{\text{think}}}\operatorname{vec}(\Delta^\perp)=0$) to cancel first-order effects.

\subsection{Proof of Prop.~\ref{prop:reasoning-aware-null-space-projection}}\label{subsec:proof-of-prop-reasoning-aware-null-space-projection}
\begin{proof}[Proof of Prop.~\ref{prop:reasoning-aware-null-space-projection}]
    Let $p=\operatorname{softmax}(z)\in\Delta^{V-1}$ and $q=\operatorname{softmax}(z+u)$, where $z\in\mathbb{R}^V$ is the logits vector at a thinking position $t\in\Omega_{\text{think}}(x)$ and $u\in\mathbb{R}^V$ is the perturbation induced by the parameter change.
    
    \textbf{Step 1: KL as a Bregman divergence of \texorpdfstring{$\operatorname{lse}$}{lse} and a uniform quadratic bound.}
    Let $\operatorname{lse}(z)\!=\!\log\!\sum_{i=1}^V e^{z_i}$, so that
    $\nabla \operatorname{lse}(z)=\operatorname{softmax}(z)=p$ and
    $\nabla^2 \operatorname{lse}(z)=\mathrm{diag}(p)-pp^\top$.
    For the multinomial exponential family, the KL divergence equals the Bregman divergence of the log-partition function~\citep{banerjee2005clustering,DBLP:journals/ftml/WainwrightJ08}:
    \begin{equation}\label{eq:KL-as-Bregman}
    \mathrm{KL}\!\left(\operatorname{softmax}(z+u)\,\|\,\operatorname{softmax}(z)\right)
    = D_{\operatorname{lse}}(z+u, z)
    = \operatorname{lse}(z+u)-\operatorname{lse}(z)-\langle \nabla \operatorname{lse}(z),u\rangle.
    \end{equation}
    Using the integral form of the Bregman remainder for a twice differentiable convex $f$,
    $D_f(x+h,x)=\int_0^1 (1-s)\, h^\top \nabla^2 f(x+s h)\, h\, ds$,
    and the fact that for all $z$ the Hessian satisfies
    $\|\nabla^2 \operatorname{lse}(z)\|_2\le \tfrac14$ by positive semidefinite covariance form as in Lemma~\ref{lem:hess-lse-bound}~\citep{boyd2004convex,bohning1992multinomial},
    we obtain
    \begin{align}\label{eq:KL-quad-18}
    \mathrm{KL}\!\left(\operatorname{softmax}(z+u)\,\|\,\operatorname{softmax}(z)\right)
    =&\int_0^1 (1-s)\, u^\top \nabla^2 \operatorname{lse}(z+s u)\, u\, ds \nonumber \\
    \ \le\ &\int_0^1 (1-s)\,\tfrac14\|u\|_2^2\,ds
    =\tfrac18\|u\|_2^2.
    \end{align}
    Equivalently, the second-order Taylor expansion with a third-order remainder yields
    \begin{equation}
    \mathrm{KL}\!\left(\operatorname{softmax}(z+u)\,\|\,\operatorname{softmax}(z)\right)
    = \tfrac12\,u^\top \nabla^2 \operatorname{lse}(z)\, u \ +\ O(\|u\|_2^3)
    \ \le\ \tfrac18\|u\|_2^2 + O(\|u\|_2^3).
    \end{equation}
    
    \begin{lemma}[Hessian bound for $\operatorname{lse}$]\label{lem:hess-lse-bound}
    For any $z\in\mathbb{R}^V$ with $p=\operatorname{softmax}(z)$,
    \begin{equation}
    \nabla^2 \operatorname{lse}(z)=\mathrm{diag}(p)-pp^\top \succeq 0,
    \qquad
    \bigl\|\nabla^2 \operatorname{lse}(z)\bigr\|_2 \le \tfrac14.
    \end{equation}
    \end{lemma}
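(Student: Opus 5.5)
The plan is to treat $\Sigma:=\mathrm{diag}(p)-pp^\top$ as the covariance matrix of a one-hot random vector and bound its top eigenvalue by a variance bound for bounded random variables. Positive semidefiniteness comes first. For any $v\in\mathbb{R}^V$,
\[
v^\top\Sigma v=\sum_i p_i v_i^2-\Bigl(\sum_i p_i v_i\Bigr)^2=\operatorname{Var}_{i\sim p}(v_i).
\]
This is nonnegative by Jensen's (equivalently Cauchy--Schwarz's) inequality, since $\sum_i p_i=1$ and $p_i>0$. Symmetry of $\Sigma$ is immediate, so $\Sigma\succeq 0$.

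For the spectral norm, symmetry and PSD-ness give $\|\Sigma\|_2=\max_{\|v\|_2=1}\operatorname{Var}_{i\sim p}(v_i)$. The aim is therefore to show that $\operatorname{Var}_{i\sim p}(v_i)\le \tfrac14$ whenever $\|v\|_2=1$.

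\begin{itemize}
\item A first attempt is Popoviciu's inequality, $\operatorname{Var}\le (\max_i v_i-\min_i v_i)^2/4$. This only gives $(v_{\max}-v_{\min})^2\le 2\|v\|_2^2$, hence a bound of $\tfrac12$, so it is too weak on its own.
\item The sharp route instead bounds $\operatorname{Var}_{i\sim p}(v_i)\le \sum_i p_i(v_i-c)^2$ for a well-chosen constant $c$, or equivalently uses the two-point structure of the extremizer.
\end{itemize}

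A clean way to carry out the second route is the Gershgorin-type estimate. The absolute row sums of $\Sigma$ are
\[
p_i(1-p_i)+p_i\sum_{j\ne i}p_j=2p_i(1-p_i)\le \tfrac12,
\]
which again only gives $\tfrac12$. So I would fall back on a direct eigenvalue argument.

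\begin{itemize}
\item An eigenvector $v$ with eigenvalue $\lambda>0$ satisfies $p_i v_i-p_i(p^\top v)=\lambda v_i$. Hence $v_i=\dfrac{p_i m}{p_i-\lambda}$ with $m=p^\top v$.
\item Substituting back gives the secular equation $\sum_i \dfrac{p_i^2}{p_i-\lambda}=1$.
\end{itemize}

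The main obstacle is exactly this step: extracting the clean constant $\tfrac14$ from the secular equation. With two nonzero $p_i$ the top eigenvalue is $2p_1p_2\le\tfrac12$, so the literal claim $\|\Sigma\|_2\le\tfrac14$ appears false for $V\ge2$. For example, $p=(\tfrac12,\tfrac12)$ gives $\Sigma=\tfrac14\begin{pmatrix}1&-1\\-1&1\end{pmatrix}$, whose norm is $\tfrac12$.

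I would therefore check the statement against this example first. If it fails, the plan is to prove the corrected bound $\|\Sigma\|_2\le\tfrac12$ via the Gershgorin computation above, which is tight.

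The downstream use in Prop.~\ref{prop:reasoning-aware-null-space-projection} would then change only the constant in the KL bound: $\tfrac18$ becomes $\tfrac14$. The qualitative conclusion $\mathcal{L}_{\text{think}}=O(\|\Delta_I^\perp\|_2^2)$ is unaffected. The bound $\tfrac14$ does hold for the variance along directions $v$ with entries in $[0,1]$, or per-coordinate for $p_i(1-p_i)$, and I would note that this is likely the intended Böhning-style statement.
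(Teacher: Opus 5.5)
Your diagnosis is correct: the spectral-norm half of the lemma is false, and your counterexample is valid. At $z=(0,0)$ you get $p=(\tfrac12,\tfrac12)$ and $\nabla^2\operatorname{lse}(z)=\tfrac14\begin{pmatrix}1&-1\\-1&1\end{pmatrix}$, whose norm is $\tfrac12$. The failure is not confined to $V=2$ or to limits toward the boundary of the simplex. For any $V\ge 2$, take $v=(e_1-e_2)/\sqrt2$; then $v^\top(\mathrm{diag}(p)-pp^\top)v=\tfrac12\bigl[p_1+p_2-(p_1-p_2)^2\bigr]$. This equals $p_1$ whenever $p_1=p_2$, so every $z$ with $p_1=p_2>\tfrac14$ violates the lemma; for instance $z=0$ with $V=3$.

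Your PSD argument, writing $v^\top\Sigma v$ as the variance of $v_i$ under $i\sim p$, is the ``covariance form'' the paper invokes, and that half is fine. For the norm bound, the paper gives no derivation, only citations to Boyd--Vandenberghe and B\"ohning. In the full-logit parametrization, B\"ohning's bound reads $\mathrm{diag}(p)-pp^\top\preceq\tfrac12\bigl(I-\tfrac1V\mathbf{1}\mathbf{1}^\top\bigr)$, whose top eigenvalue is $\tfrac12$. So the cited source supports your constant, not the paper's. The $\tfrac14$ appears to be the binary-logistic constant $\sigma(1-\sigma)\le\tfrac14$, equivalently the diagonal bound $p_i(1-p_i)\le\tfrac14$, carried over to a setting where it no longer bounds the operator norm.

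Either of your two arguments already proves the corrected bound $\|\nabla^2\operatorname{lse}(z)\|_2\le\tfrac12$, and it is tight:
\begin{itemize}
\item Gershgorin: absolute row sums are $2p_i(1-p_i)\le\tfrac12$, and for a symmetric PSD matrix the largest eigenvalue is the norm.
\item Popoviciu: $\operatorname{Var}\le\tfrac14(v_{\max}-v_{\min})^2\le\tfrac12\|v\|_2^2$.
\end{itemize}
You had a complete proof before the secular-equation detour, which you can drop.

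Your downstream assessment is also right, and the change of constant is forced rather than optional. At $z=0\in\mathbb{R}^2$ with $u=(t,-t)$, $\mathrm{KL}(\operatorname{softmax}(u)\,\|\,\operatorname{softmax}(0))=\tfrac{t^2}{2}+O(t^4)=\tfrac14\|u\|_2^2+O(\|u\|_2^4)$. Hence the inequality $\tfrac18\|u\|_2^2+O(\|u\|_2^3)$ in Prop.~\ref{prop:reasoning-aware-null-space-projection} is itself false, while $\tfrac14\|u\|_2^2$ is sharp. The qualitative conclusion $\mathcal{L}_{\text{think}}=O(\|\Delta_I^\perp\|_2^2)$ survives unchanged.
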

    
    \textbf{Step 2: Bounding the logits perturbation via linearization and Lipschitz regularity.}
    Let $J(x,t)\in\mathbb{R}^{V\times d}$ be the Jacobian of $z_\theta(x,t)$ w.r.t.\ $\theta$ at $\theta=\theta_R$.
    By the mean-value theorem and Taylor expansion with Lipschitz gradient~\citep{nesterov2013introductory},
    \begin{equation}\label{eq:u-linearization}
    u(x,t):=z_{\theta_R+\Delta}(x,t)-z_{\theta_R}(x,t)
    = J(x,t)\,\operatorname{vec}(\Delta)\ +\ r(x,t),
    \quad
    \|r(x,t)\|_2 \le \tfrac{L}{2}\,\|\Delta\|_2^2,
    \end{equation}
    where $L$ is a local Lipschitz constant of $\nabla_\theta z_\theta(x,t)$ around $\theta_R$.
    Let $\Phi=\operatorname{blkdiag}(\Phi^1,\ldots,\Phi^K)$ be the block-diagonal \emph{forward feature operator} that maps $\operatorname{vec}(\Delta)$ to the linearized token-level feature change collected at thinking positions (per submodule $k$).
    Under bounded intermediate activations and operator norms, which are standard in local linearization of deep nets~\citep{DBLP:conf/nips/FazlyabRHMP19}, there exists $C_1>0$ such that
    $\|J(x,t)\,\operatorname{vec}(\Delta)\|_2 \le C_1\,\|\Phi\,\operatorname{vec}(\Delta)\|_2$.
    Combining with \Eq{eq:u-linearization},
    \begin{equation}\label{eq:u-bound}
    \|u(x,t)\|_2 \ \le\ C_1\,\|\Phi\,\operatorname{vec}(\Delta)\|_2 \ +\ C_2\,\|\Delta\|_2^2,
    \qquad C_2:=\tfrac{L}{2}.
    \end{equation}
    
    \textbf{Step 3: Enforcing the null-space constraint and aggregating into $\mathcal{L}_{\text{think}}$.}
    Apply the submodule-wise null-space projection (see \Eq{eq:null-space-projection} in the main text):
    \begin{equation}
    \operatorname{vec}(\Delta_I^{\perp,k})=P^\perp\!\bigl(\Phi^k_{\Omega_{\text{think}}}\bigr)\,\operatorname{vec}(\Delta_I^k),
    \qquad
    \Delta_I^{\perp}=\bigoplus_{k=1}^{K}\Delta_I^{\perp,k},
    \end{equation}
    so that by construction
    $\Phi_{\Omega_{\text{think}}}\,\operatorname{vec}(\Delta_I^{\perp})=0$.
    Plugging this into \Eq{eq:u-bound} yields for all $t\in\Omega_{\text{think}}(x)$:
    \begin{equation}
    \|u(x,t)\|_2 \ \le\ C_2\,\|\Delta_I^{\perp}\|_2^2.
    \end{equation}
    Combining with \Eq{eq:KL-quad-18} and summing/averaging over $(x,t)$ in the definition of $\mathcal{L}_{\text{think}}$ (\Eq{eq:think-constraint}) gives
    \begin{align}
    \mathcal{L}_{\text{think}}\!\left(\theta_R+\Delta_I^{\perp}\right)
    &=\mathbb{E}_{x}\sum_{t\in\Omega_{\text{think}}(x)}
    \mathrm{KL}\!\left(\pi_{\theta_R+\Delta_I^{\perp}}(\cdot\mid x,y_{<t})\,\|\,\pi_{\theta_R}(\cdot\mid x,y_{<t})\right)  \nonumber \\
    &\le \tfrac18\,\mathbb{E}_{x,t}\!\left[\|u(x,t)\|_2^2\right]
    + O\!\left(\mathbb{E}_{x,t}\|u(x,t)\|_2^3\right) \nonumber \\
    &= O\!\left(\|\Delta_I^{\perp}\|_2^4\right)
    \ \le\ O\!\left(\|\Delta_I^{\perp}\|_2^2\right)
    \ \approx\ 0.
    \end{align}
    This completes the proof.
    \end{proof}
    
\section{Algorithm}\label{sec:algorithm}

Following \textbf{Alg.}~\ref{alg:rain-merging} is the algorithm of our RAIN-Merging.

\begin{algorithm}[t]
    \caption{RAIN-Merging: Reasoning-Aware Instruction-attention guided Null-space projection Merging}
    \label{alg:rain-merging}
    \SetKwInOut{Input}{Input}
    \SetKwInOut{Output}{Output}
    \SetKwFunction{BuildPhi}{FeatureOperator}
    \SetKwFunction{Aggregate}{Aggregate}
    \SetKwFunction{Clip}{clip}
    \SetKwProg{Fn}{Subroutine}{:}{}
    \DontPrintSemicolon
    \small
    \Input{LRM $\theta_R$; ITM $\theta_I$; base model $\theta_B$; reasoning calibration set $\mathcal D_R$ with thinking indices $\Omega_{\text{think}}$; instruction calibration set $\mathcal D_I$ with spans $(\mathcal I,\mathcal R,\mathcal U)$; hyperparameters $\rho,\ \tilde\alpha_l,\ \tilde\alpha_u,\ \lambda$.}
    \Output{Merged model $\theta_\star$.}
    
    \BlankLine
    \textbf{Stage 0: Task vector and objective.} \\
    $\Delta_I \leftarrow \theta_I - \theta_B$ \tcp*{instruction-tuned task vector}
    
    \BlankLine
    \textbf{Stage 1: Reasoning-aware Null-space Projection (satisfy \Eq{eq:think-constraint}).} \\
    \For{$k \leftarrow 1$ \KwTo $K$ \tcp*[f]{iterate over submodules (per-layer $W_Q,W_K,W_V,W_O,\text{FFN}$)} }
    {
      $\Phi^k_{\Omega} \leftarrow$ \BuildPhi$(\theta_R,\ \mathcal D_R,\ \Omega_{\text{think}},\ k)$ \tcp*{forward feature extraction at thinking tokens}
      $P_k^\perp \leftarrow \mathrm{diag}(1) - (\Phi^{k}_{\Omega})^{\!\top}\!\bigl(\Phi^{k}_{\Omega}(\Phi^{k}_{\Omega})^{\!\top} + \mathrm{diag}(1)\bigr)^{-1}\Phi^{k}_{\Omega}$ \tcp*{least-squares orthogonal projector}
      $\operatorname{vec}(\Delta_I^{\perp,k}) \leftarrow P_k^\perp \operatorname{vec}(\Delta_I^{k})$ \tcp*{submodule projection per \Eq{eq:null-space-projection}}
    }
    $\theta' \leftarrow \theta_R + \bigoplus_{k=1}^{K} \Delta_I^{\perp,k}$ \tcp*{direct merge after Stage~1}
    
    \BlankLine
    \textbf{Stage 2: Instruction-attention Guided Merging Coefficients (optimize \Eq{eq:proxy-instruction-attention-objective}).} \\
    Initialize head-wise coefficients $\tilde\alpha^{\tilde k}\!\leftarrow\!1$ for all attention heads $\tilde k$. \\
    \For{each attention head $\tilde k$}{
      $a^{\tilde k} \leftarrow \mathbb{E}_{x\sim\mathcal D_I}\!\Bigl[\frac{1}{|\mathcal I(x)|\,|\mathcal R(x)|}\!\sum_{t\in\mathcal R(x)}\sum_{\tau\in\mathcal I(x)} \operatorname{Att}^{\tilde k}_{\theta'}(x)[t,\tau]\Bigr]$ \;
      $u^{\tilde k} \leftarrow \mathbb{E}_{x\sim\mathcal D_I}\!\Bigl[\frac{1}{|\mathcal I(x)|\,|\mathcal U(x)|}\!\sum_{t\in\mathcal U(x)}\sum_{\tau\in\mathcal I(x)} \operatorname{Att}^{\tilde k}_{\theta'}(x)[t,\tau]\Bigr]$ \;
    }
    \For{each attention head $\tilde k$}{
      $g^{\tilde k} \leftarrow a^{\tilde k} - \rho\, u^{\tilde k}$ \tcp*{first-order term for \Eq{eq:proxy-instruction-attention-objective}}
      $\tilde H^{\tilde k} \leftarrow 1 + u^{\tilde k}$ \tcp*{diagonal Hessian approx}
      $\tilde\alpha_\star^{\tilde k} \leftarrow \Clip_{[\tilde\alpha_l,\tilde\alpha_u]}\!\Bigl(\dfrac{g^{\tilde k}}{\tilde H^{\tilde k}}\Bigr)$ \tcp*{per-head optimal scaling}
    }
    $\alpha_\star^{k} \leftarrow$ \Aggregate$\bigl(\{\tilde\alpha_\star^{\tilde k}\}_{\tilde k \in \text{module }k}\bigr)$ \tcp*{mean over heads for FFN}
    
    \BlankLine
    \textbf{Output (Two-stage Merge).} \\
    \Return $\theta_\star \leftarrow \theta_R + \lambda \bigoplus_{k=1}^{K} \alpha_\star^{k}\, \Delta_I^{\perp,k}$ \tcp*{final model in \Eq{eq:rain-merging}}
\end{algorithm}
    
\section{Method Implementation Details}\label{sec:method-implementation-details}

\subsection{Forward Mechanism in Transformer}\label{subsec:forward-mechanism-in-transformer}
A standard Transformer layer consists of multi-head self-attention and a feed-forward network (FFN). In layer $\ell$, the hidden state of the token at position $t$, denoted $h_t^{(\ell-1)} \in \mathbb{R}^d$, is linearly projected to queries, keys, and values:
$
q_t^{(\ell)} = W_Q^{(\ell)} h_t^{(\ell-1)},
k_{\tau}^{(\ell)} = W_K^{(\ell)} h_{\tau}^{(\ell-1)},
v_{\tau}^{(\ell)} = W_V^{(\ell)} h_{\tau}^{(\ell-1)}.
$
For head $h$, the single-head attention weights are
$
\operatorname{Att}_\theta^{(\ell,h)}(x)[t,\tau]
= \operatorname{softmax}_{\tau}\!\left({\langle q_t^{(\ell,h)},\, k_{\tau}^{(\ell,h)}\rangle}/{\sqrt{d_k}}\right),
$
which represent the probability that the token at position $t$ attends to position $\tau$. The corresponding head output is
$
o_t^{(\ell,h)}=\sum_{\tau}\operatorname{Att}_\theta^{(\ell,h)}(x)[t,\tau]\; v_{\tau}^{(\ell,h)}.
$
After concatenating the outputs from all heads and applying $W_O^{(\ell)}$, we obtain $\tilde{h}_t^{(\ell)}$. The FFN then computes
$
\hat{h}_t^{(\ell)}=\sigma\!\left(W_{\text{in}}^{(\ell)} \tilde{h}_t^{(\ell)} + b_{\text{in}}^{(\ell)}\right), 
h_t^{(\ell)} = W_{\text{out}}^{(\ell)} \hat{h}_t^{(\ell)} + b_{\text{out}}^{(\ell)}.
$
The top-layer hidden state is mapped to vocabulary logits $z_\theta(x,t)$, which are transformed by a softmax into the conditional distribution $\pi_\theta(\cdot \mid x, y_{<t})$. We follow the notation and the scaled dot-product attention definition of \citet{vaswani2017attention} to align with prior work.

\subsection{Implementation Details in Merging}\label{subsec:merging-details}
To balance computational efficiency and memory usage, all model-merging experiments adopt a \textbf{layer-wise} merging strategy. During parameter fusion, we compute in \textbf{FP64} precision to ensure numerical stability, and we store the final models in \textbf{BF16}. Our experiments show that higher compute precision yields consistent but modest improvements for this merging procedure.

\section{Calibration Set Construction}\label{sec:calibration_set_construction}

\begin{table}[t]
    \centering
    \small
    \caption{Reasoning calibration set construction from \textit{Mixture-of-Thoughts}. 
    We uniformly sample 50 examples per domain for calibration and 50 for validation. 
    Raw sizes are taken from the official dataset composition page.}
    \label{tab:reasoning_calib}
    \begin{tabular}{lrrr}
    \toprule
    \textbf{Domain} & \textbf{Raw size} & \textbf{Calibration} & \textbf{Validation} \\
    \midrule
    Math    & 93{,}700  & 50 & 50 \\
    Code    & 83{,}100  & 50 & 50 \\
    Science & 173{,}000 & 50 & 50 \\
    \midrule
    \textbf{Total} & 349{,}800 & 150 & 150 \\
    \bottomrule
    \end{tabular}
\end{table}

\subsection{Reasoning calibration set}

We employ the \textit{Mixture-of-Thoughts}\footnote{https://huggingface.co/datasets/open-r1/Mixture-of-Thoughts}~\citep{openr1} dataset as the source for reasoning-style calibration. This dataset contains validated R1-style reasoning traces spanning three domains: math, code, and science, with a total size of approximately 350k samples. Its official data composition page clearly specifies the sample sizes and origins for each sub-domain: math samples are sourced from OpenR1-Math~\citep{lozhkov2025openr1math220k}, code from CodeForces-CoTs~\citep{penedo2025codeforces}, and science from the science subset of the Nemotron post-training set~\citep{bercovich2025llamanemotronefficientreasoningmodels}. From each domain, we randomly sample 50 instances to form the calibration set (150 in total), and an additional 50 instances per domain are randomly sampled to serve as the validation set (150 in total). \Tab~\ref{tab:reasoning_calib} shows the detailed numbers of samples in each domain.

\textbf{Thinking Special Token Set Construction.} To apply preservation constraints on ``thinking format'', we extract the thinking tokens, specifically \texttt{<think>} and \texttt{</think>} in the model output—based on the R1-style chat template and tokenizer. The procedure involves rendering messages using the chat template provided by LRM. R1-family models prefill \texttt{<think>} in reasoning mode and insert \texttt{</think>} in the context, while some templates may omit the visible output of the initial \texttt{<think>} to enforce thinking mode. We then obtain token positions of \texttt{<think>} and \texttt{</think>} in $\Omega_{\text{think}}$. 

\subsection{Instruction calibration set}
\label{subsec:instruction_calibration_set_construction}

We construct a high-quality instruction calibration set from rule-verifiable prompts through four automated and auditable steps. The pipeline produces span-based samples $(x\sim\mathcal D_I;\ \mathcal I(x),\,\mathcal R(x),\,\mathcal U(x))$ for computing the instruction-attention score proxy in Stage~2 of RAIN-Merging. We choose to distill from IFEval-style instructions for ease of implementation and to test generalization on out-of-domain instruction-following datasets. The final size of the instruction calibration set is 365. The full workflow is:

\begin{figure}[t]
    \centering
    \includegraphics[width=\textwidth]{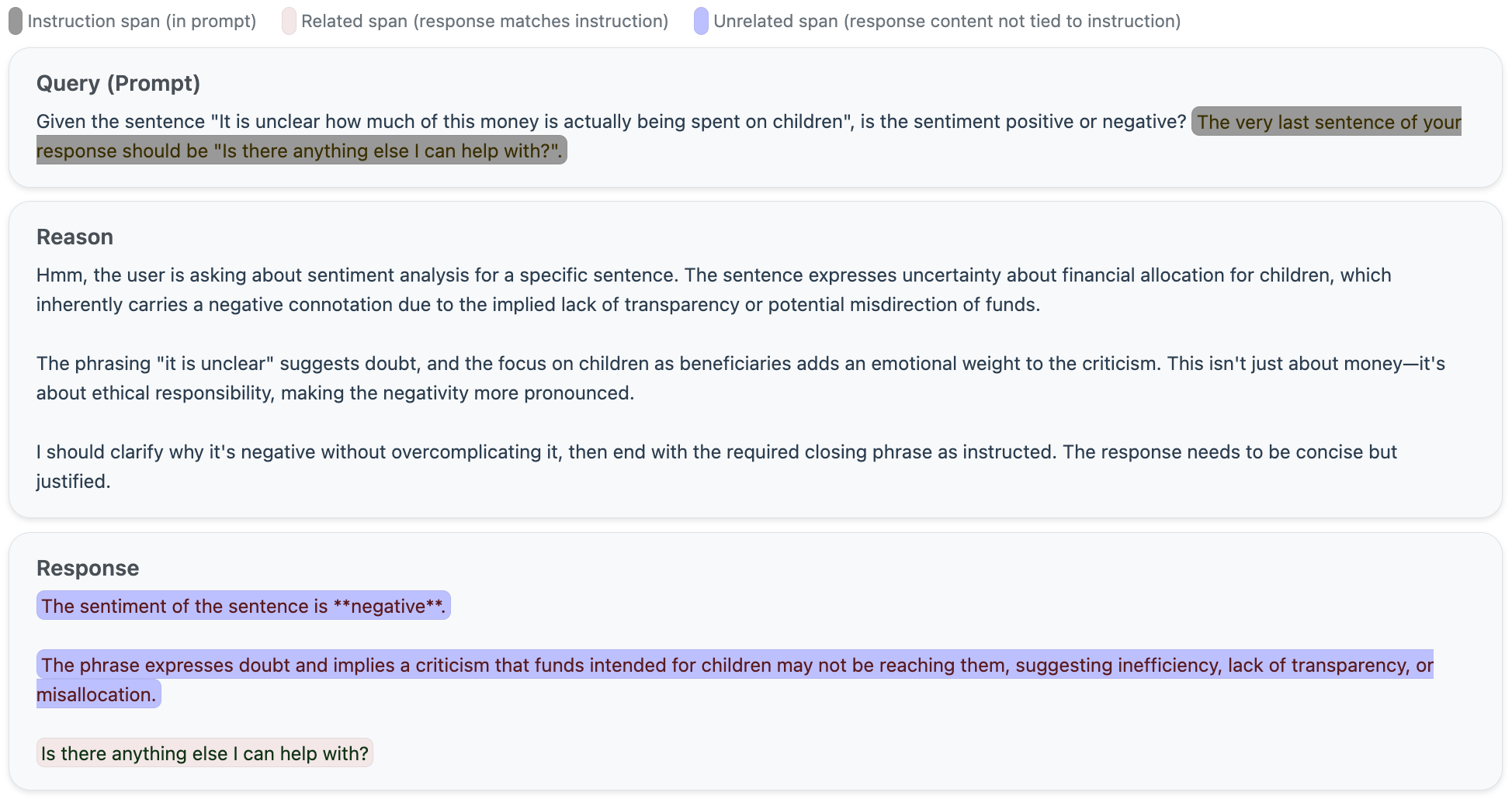}
    \caption{\footnotesize{A sample illustration in our instruction calibration set.}}
    \label{fig:instruction_calibration_sample}
    \vspace*{-3mm}
\end{figure}

\begin{itemize}[left=0pt,itemsep=0pt,topsep=0pt]
\item \textbf{Instruction selection.}
We select rule-verifiable instruction prompts from IFEval~\citep{zhou2023instruction} as queries. Each record contains a natural-language instruction and machine-checkable constraints.

\item \textbf{Step 1: Response generation by LRM.}
For each instruction query, we invoke an R1-style reasoning model (\texttt{deepseek-reasoner}, DeepSeek-R1-0528)\footnote{https://api.deepseek.com/v1\label{footnote:deepseek-api}} to produce a format-explicit response. This step yields instruction-following samples generated by a reasoning decoder that reflect realistic decoding behavior.

\item \textbf{Step 2: Rule evaluation and filtering.}
We evaluate the outputs of Step~1 with an IFEval-compatible checker and \emph{retain only passing samples} that satisfy all constraints. This removes cases that clearly fail the requirements.

\item \textbf{Step 3: Strict span extraction (LLM-as-Judge).}
We use a high-performance instruction-tuned LLM (\texttt{deepseek-chat}, DeepSeek-V3.1)\textsuperscript{\ref{footnote:deepseek-api}} to precisely extract instruction-relevant spans: $\mathcal I(x)$ (tokens in the prompt corresponding to the instruction) and $\mathcal R(x)$ (tokens in the response that are governed by the instruction). The unrelated span $\mathcal U(x)$ is then implicitly defined as the remainder of the response. See \Fig~\ref{fig:instruction_calibration_sample} for an example.

\item \textbf{Step 4: Tokenizer-level verification.}
We verify the extracted spans on the \emph{target tokenizer} (aligned with our anchor LRM), ensuring that boundaries lie on token edges and can be deterministically reconstructed. Samples that fail alignment are discarded.

\item \textbf{Step 5: Human review and ethical screening.}
To ensure data quality and compliance with safety and ethics standards, we introduce a manual review stage. Researchers verify the accuracy of the LLM-extracted spans $\mathcal I(x)$ and $\mathcal R(x)$, and conduct an ethics audit of the responses based on content-safety guidelines, removing any samples that contain biased, harmful, or inappropriate content. This step further enhances the reliability and ethical soundness of the calibration set.
\end{itemize}

This calibration pipeline is readily transferable and can be extended to additional instruction-following datasets to further improve merging effectiveness by enriching the calibration set. The reasoning distillation model and the LLM-as-Judge can be updated over time to continually enhance the quality of the instruction calibration data.

\section{Detailed Experimental Setup}\label{sec:detailed_experimental_setup}

\begin{table}[t]
    \centering
    \small
    \caption{Instruction-following benchmarks. We list dataset size, constraint taxonomy, composition types, verification, and aggregation strategy.}
    \label{tab:if_subtable}
    \resizebox{\textwidth}{!}{
    \begin{tabular}{lrrrrrrrrr}
    \toprule
    \multirow{2}{*}{Benchmark} & \multirow{2}{*}{Size} & Constraint &
    \multicolumn{4}{c}{Composition Type} & \multicolumn{2}{c}{Verification} & Evaluation \\
    \cmidrule(lr){3-3}\cmidrule(lr){4-7}\cmidrule(lr){8-9}\cmidrule(lr){10-10}
    & & Taxonomy & And & Chain & Selection & Nested & Code-Exec. & LLM-as-Judge & Aggregation \\
    \midrule
    IFEval        & 541  & 25 & \checkmark & --         & --         & --      & \checkmark & -- & strict\_prompt\_level\_accuracy \\
    CELLO         & 523  &  4 & \checkmark & \checkmark & --         & --      & \checkmark & -- & average \\
    InfoBench     & 500  &  5 & \checkmark & \checkmark & --         & --      & --         & \checkmark & DRFR \\
    ComplexBench  & 1{,}150 & 19 & \checkmark & \checkmark & \checkmark & \checkmark & \checkmark & \checkmark & dependency-aware DRFR \\
    \bottomrule
    \end{tabular}
    }
\end{table}
\begin{table}[t]
    \centering
    \small
    \caption{Test set sizes of the six math benchmarks used in our mathematical reasoning (Math) evaluation.}
    \label{tab:math_sizes}
    \begin{tabular}{lcccccc}
    \toprule
    & AIME2025 & AIME2024 & AMC23 & GSM8K & Math500 & MinervaMath \\
    \midrule
    \# Test samples & 30 & 30 & 40 & 1{,}319 & 500 & 272 \\
    \bottomrule
    \end{tabular}
\end{table}

\subsection{Benchmarks}\label{subsec:benchmarks}

\paragraph{Instruction-following Benchmarks.}
We evaluate instruction compliance on four widely used, programmatically verifiable benchmarks. The size and constraint types are summarized in \Tab~\ref{tab:if_subtable}.

\begin{itemize}[left=0pt,itemsep=0pt,topsep=0pt]
\item \textbf{IFEval}~\citep{zhou2023instruction}.
IFEval provides four accuracy metrics: (1) \emph{prompt-level strict} accuracy and (2) \emph{instruction-level strict} accuracy, plus (3) \emph{prompt-level loose} and (4) \emph{instruction-level loose} variants. The strict metrics require exact satisfaction (all constraints per prompt for prompt-level strict; per-constraint averaging across prompts for instruction-level strict). The loose metrics first normalize model outputs (e.g., strip Markdown, boilerplate intros/outros) to reduce false negatives. We report the official \textbf{strict\_prompt\_level\_accuracy} unless otherwise noted.

\item \textbf{CELLO}~\citep{DBLP:conf/aaai/HeZHCXHZLX24}.
CELLO uses a \emph{code-based} verifier that scores four granular aspects: (i) \emph{count limit} (word/sentence/sample counts), (ii) \emph{answer format} (parsability, keywords), (iii) \emph{task-prescribed phrases} (mandatory phrases covered), and (iv) \emph{input-dependent query} (presence of key phrases from the input), with a penalty to discourage verbatim copying. We follow the benchmark’s practice and \textbf{average} these checks to produce the final score.

\item \textbf{InfoBench}~\citep{DBLP:conf/acl/QinSHYCWW00Y24}.
InfoBench adopts the \emph{Decomposed Requirements Following Ratio (DRFR)}: each instruction is split into scoring questions that are judged by \emph{LLM-as-a-Judge} with binary \texttt{YES/NO} labels; the final score is the mean over all questions, enabling fine-grained interpretability. We evaluate by \texttt{GPT-5-mini}. We report the official \textbf{DRFR}.

\item \textbf{ComplexBench}~\citep{DBLP:conf/nips/WenKGWHZLHGXLTW24}.
ComplexBench also evaluates via decomposed scoring questions with \emph{YES/NO} judgments and aggregates them into DRFR, but crucially uses a \emph{dependency-aware} scheme: if any prerequisite constraint fails, all dependent (downstream) constraints are automatically marked as failed. This better reflects multi-constraint compositions. We evaluate by \texttt{GPT-4o-mini}. We report the \textbf{dependency-aware DRFR}.
\end{itemize}

\paragraph{Reasoning \& General Benchmarks.}
We evaluate reasoning and general capabilities on the following benchmarks:
\begin{itemize}[left=0pt,itemsep=0pt,topsep=0pt]
\item \textbf{Math (Mathematical reasoning).}
We aggregate \textbf{Pass@1/accuracy} over six common math benchmarks:
\textbf{AIME2025}~\citep{balunovic2025matharena};
\textbf{AIME2024}~\citep{balunovic2025matharena};
\textbf{AMC23}~\citep{cao2024step};
\textbf{GSM8K}~\citep{cobbe2021gsm8k};
\textbf{Math500}~\citep{hendrycks2021measuring};
\textbf{MinervaMath}~\citep{DBLP:conf/nips/LewkowyczADDMRS22}. The size of each math benchmark is shown in \Tab~\ref{tab:math_sizes}.
We report the \textbf{averaged accuracy} over all benchmarks.

\item \textbf{Aider~\citep{aider-polyglot-2024} (Code editing).}
Aider-Edit assesses code editing capability under a minimal-edit paradigm. It contains 133 small Python coding exercises sourced from Exercism, where the model is provided with a natural-language edit instruction and the existing code, and must generate a correct patch. The generated patch is required to apply successfully to the codebase and pass compilation and associated tests. Performance is measured by the \textbf{Pass@2 Edit Success Rate}.

\item \textbf{GPQA}~\citep{rein2024gpqa} \textbf{(STEM).}
A curated, expert-level subset of GPQA comprising 198 four-option multiple-choice questions across biology, chemistry, and physics. Items are selected to be “Google-proof” and unambiguous: both expert validators must answer correctly while at most one of three skilled non-experts succeeds, yielding a particularly hard split. We follow common practice and report \textbf{accuracy} (strict single-choice).

\item \textbf{Arena-Hard-v2}~\citep{li2024crowdsourced} \textbf{(Creative writing).}
A hard, open-ended benchmark constructed to maximize model separability and align with human preferences. Arena-Hard-Auto curate $\sim$500 challenging prompts covering difficult real-world tasks including creative writing, scoring follows the pairwise battle paradigm with human or LLM-as-a-Judge assessments, and results are commonly summarized as \emph{win rate} or transformed to \emph{Elo} scores. For reproducibility, we fix \texttt{Qwen2.5-7B} as the reference baseline and use \texttt{GPT-5-mini} as the judging LLM; we report the resulting \textbf{win rate} of each model against this baseline on the official Arena-Hard-v2 prompt set.

\item \textbf{Agentic Scenarios.} We evaluate the ability of the model to interact with the environment and complete tasks by two agentic benchmarks:
\textbf{ALFWorld}~\citep{DBLP:conf/iclr/ShridharYCBTH21} is a text-interactive household-task environment involving multi-step planning and execution. We evaluate on 100 tasks. Metrics are \textbf{Success Rate} of goal completion.
\textbf{WebShop}~\citep{DBLP:conf/nips/Yao0YN22} is a web shopping agent task (search, click, compare), with reported \textbf{Normalized Reward} to capture both path efficiency and goal matching. We evaluate on 100 tasks.
\end{itemize}

\subsection{Baselines}\label{subsec:baselines}

We compare our method with the following merging baselines:
\begin{itemize}[left=0pt,itemsep=0pt,topsep=0pt]
\item \textbf{Task Arithmetic}~\citep{DBLP:conf/iclr/IlharcoRWSHF23}.
The simplest linear composition injects the task vector additively near the anchor, $\theta=\theta_R+\lambda\,\Delta_I$. A scalar $\lambda\in[0,1]$ usually controls the strength; using the same $\lambda$ per layer or per block is also common.
Its advantages are zero data and negligible compute; its drawback is that conflicts across submodules are hard to disentangle.

\item \textbf{SLERP}~\citep{biship2007pattern,goddard2024arcee}.
In SLERP (\emph{Spherical Linear Interpolation}), weights are $\ell_2$-normalized to the unit sphere and interpolated along the geodesic to preserve norm and angular geometry. Let $\Omega=\arccos\!\big(\langle w_R,w_I\rangle\big)$. Then $\mathrm{slerp}(w_R,w_I;t)=\frac{\sin((1-t)\Omega)}{\sin\Omega}\,w_R+\frac{\sin(t\Omega)}{\sin\Omega}\,w_I,\ \ t\in[0,1]$.
During merging, we apply SLERP to each tensor and rescale by the original norm. This reduces norm drift compared with linear interpolation.

\item \textbf{Karcher}~\citep{nielsen2013matrix,goddard2024arcee}.
On a chosen manifold like the unit sphere or a Stiefel manifold, compute the Fréchet mean by minimizing the sum of squared geodesic distances: $\min_{\bar w}\sum_i d^2(\bar w,w_i)$.
The iterative update is
$\bar w^{(t+1)}=\mathrm{Exp}_{\bar w^{(t)}}\!\big(\tfrac{1}{n}\sum_i \mathrm{Log}_{\bar w^{(t)}}(w_i)\big)$.

\item \textbf{TIES}~\citep{DBLP:conf/nips/YadavTCRB23}.
TIES is a data-free method that explicitly prunes and sparsifies to handle parameter-level conflicts.
For each layer’s edit vector it applies
(i) a \emph{sign-consistency mask} (retain entries aligned with the dominant direction to reduce cancellation),
(ii) \emph{magnitude thresholding or Top-$k$ truncation} (keep high-contribution entries and zero out the rest),
and
(iii) optional \emph{rescaling} to match a target norm.
We can stack TIES as a post-processing step on top of feasible baselines to improve robustness.

\item \textbf{DARE}~\citep{DBLP:conf/icml/Yu0Y0L24}.
DARE uses first-order sensitivities on a small calibration set (for example, gradient norms of labeled loss, log-likelihood changes, or Fisher approximations of the output distribution) to learn a per-layer or per-tensor coefficient \(\alpha^k\) (or a diagonal preconditioner), yielding
\(\theta=\theta_R+\bigoplus_k \alpha^k\,\Delta_I^k\).
It can be viewed as data-aware recalibration that reduces the bias introduced by naive addition with very low compute.

\item \textbf{ACM}~\citep{yao2025activation}.
ACM (\emph{Activation-Guided Consensus Merging}) targets activation consistency. On a small calibration set it measures, before and after injecting the task vector, how each layer or head changes its response on instruction-relevant spans and its leakage on irrelevant spans. It then solves for per-submodule coefficients $\alpha^k$, optionally with cross-sample consensus regularization to improve generalization.

\item \textbf{LEWIS}~\citep{chopra2025lewis}.
LEWIS (\emph{LayEr WIse Sparsity}) is a merge with layer-wise sparsity allocation. Based on sensitivity indicators per layer (such as edit-vector magnitude, activation gradients, or Fisher approximations), it sets a budget $s_k$ and merges only the Top-$s_k$ parameters of that layer, leaving the anchor weights elsewhere unchanged.
It can be combined with Task Arithmetic, SLERP, or Karcher as the base, and $\{s_k\}$ are determined by heuristics or grid search on a small calibration set.

\item \textbf{AIM}~\citep{nobari2025activation}.
AIM (\emph{Activation-based Importance Merging}) weights the task vector by activation importance (for example, the effect of Value or FFN outputs on downstream logits, or the instruction-aligned component of attention weights), performing element-wise or block-wise reweighting:
Weights the task vector by activation importance, for example, the effect of Value or FFN outputs on downstream logits, or the instruction-aligned component of attention weights, performing element-wise or block-wise reweighting:
$\theta=\theta_R+\bigoplus_k W^k\odot \Delta_I^k$,
where $W^k$ is obtained from a single forward pass on the calibration set.
Intuitively, this preserves edits that meaningfully change useful representations and suppresses noisy updates.
\end{itemize}

Unless otherwise specified, following common practice in previous work~\citep{wu2025unlocking}, we apply TIES post-processing (sign consistency and magnitude truncation) on the outputs of DARE, ACM, LEWIS, and AIM, in order to improve comparability across baselines.

\subsection{Hyperparameters}\label{subsec:hyperparameters}

\begin{table}[t]
    \centering
    \small
    \setlength{\tabcolsep}{8pt}
    \caption{The hyperparameters of various merging methods in \Tab~\ref{tab:baseline_compare}. $\lambda$ means the global scaling coefficient in merging. $k$ denotes the trim ratio in TIES-Merging. $p$ means the drop rate in DARE merging. $\tau$ is sharpness the ACM. $\rho$ is the pruning ratio in LEWIS. $\omega$ means the balance factor in AIM.}
    \label{tab:baselines_hyperparameters}
    \begin{tabular}{lc}
    \toprule
    \multirow{2}{*}{\textbf{Method}} & \textbf{Hyper-parameters} \\
    \cmidrule(lr){2-2}
                    & \textbf{7B} \\
    \midrule
    Task Arithmetic & $\lambda=1.0$ \\
    SLERP           & $\lambda=1.0$ \\
    Karcher         & $\lambda=1.0$ \\
    TIES            & $k=0.8,\ \lambda=0.8$ \\
    DARE-TIES       & $p=0.3,\ k=0.5,\ \lambda=1.2$ \\
    ACM-TIES        & $\tau=1.0,\ k=0.5,\ \lambda=1.1$ \\
    LEWIS-TIES      & $\rho=0.5,\ k=0.5,\ \lambda=1.1$ \\
    AIM-TIES        & $\omega=0.4,\ k=0.5,\ \lambda=1.0$ \\
    \bottomrule
    \end{tabular}
\end{table}
    
\begin{table}[t]
    \centering
    \small
    \setlength{\tabcolsep}{8pt}
    \caption{The hyperparameters of RAIN-Merging in different model sizes. $\lambda$ means the global scaling coefficient in RAIN-Merging.}
    \label{tab:rain_hyperparameters}
    \begin{tabular}{lcccc}
    \toprule
    \multirow{2}{*}{\textbf{Method}} & \multicolumn{4}{c}{\textbf{Hyper-parameters}} \\
    \cmidrule(lr){2-5}
    & \textbf{1.5B} & \textbf{7B} & \textbf{8B} & \textbf{14B} \\
    \midrule
    RAIN-Merging & $\lambda=1.0$ & $\lambda=1.0$ & $\lambda=0.9$ & $\lambda=1.0$ \\
    \bottomrule
    \end{tabular}
\end{table}

For SFT, we use a batch size of 16 with the Adam optimizer~\citep{kingma2014adam}, a learning rate of $2\times10^{-5}$, weight decay of $0.05$, and train for 20 epochs.

For all model-merging methods (including the proposed RAIN-Merging and all baselines), we merge only the task vectors extracted from the ITM’s $\mathrm{Q/K/V/O/FFN}$ modules. The specific hyperparameter settings for each baseline used in \Tab~\ref{tab:baseline_compare} are listed in \Tab~\ref{tab:baselines_hyperparameters}.

In \textbf{RAIN-Merging}, we set the leakage penalty to $\rho=10$ and bound the attention-head coefficients by $[\tilde{\alpha}_l,\tilde{\alpha}_u]=[0.0,1.0]$. The global scalar $\lambda$ is selected via a grid search over $[0.0,1.5]$ with a step size of $0.1$; the chosen values for different model families are provided in \Tab~\ref{tab:rain_hyperparameters}. An ablation study of the global scalar $\lambda$ is included in Appendix~\ref{subsec:ablation_study_of_the_global_scalar_lambda}.

\section{Additional Experiments}\label{sec:additional_experiments}

\subsection{Detailed Math Benchmark Results}\label{subsec:detailed_math_benchmark_results}
\begin{table}[t]
    \centering
    \small
    \setlength{\tabcolsep}{6pt}
    \caption{Math benchmark results under the same configuration as in \Tab~\ref{tab:baseline_compare}. ``Avg.'' denotes the average over all math benchmarks. The best and second-best results are highlighted in \textbf{bold} and \underline{underlined}, respectively.}
    \label{tab:baseline_math}
    \begin{tabular}{lccccccc}
    \toprule
    \textbf{Method} & \textbf{AIME2025} & \textbf{AIME2024} & \textbf{AMC23} & \textbf{GSM8K} & \textbf{Math500} & \textbf{Minerva} & \textbf{Avg.} \\
    \midrule
    ITM                  & 10.00 & 10.00 & 67.50 & 86.66 & 73.80 & 35.66 & 47.27 \\
    LRM                  & 30.00 & \underline{50.00} & 80.00 & \underline{91.36} & 89.00 & 48.16 & 64.75 \\
    \midrule
    SFT                  & \underline{33.33} & 43.33 & 75.00 & 90.75 & 87.40 & 45.59 & 62.57 \\
    \midrule
    Task Arithmetic      & 30.00 & \underline{50.00} & 80.00 & 90.75 & 89.00 & 45.59 & 64.22 \\
    SLERP                & 30.00 & 46.67 & 77.50 & 91.05 & 89.20 & 48.53 & 63.82 \\
    Karcher              & 30.00 & \underline{50.00} & 80.00 & 90.98 & 89.20 & \underline{48.90} & 64.85 \\
    TIES                 & \underline{33.33} & 46.67 & \underline{82.50} & \underline{91.36} & \textbf{90.60} & 48.16 & 65.44 \\
    DARE-TIES            & \textbf{36.67} & 40.00 & \underline{82.50} & 90.98 & \underline{90.20} & 45.22 & 64.26 \\
    \midrule
    ACM-TIES             & \underline{33.33} & \underline{50.00} & 77.50 & \underline{91.36} & 88.20 & 47.06 & 64.57 \\
    LEWIS                & 30.00 & 33.33 & 80.00 & 90.27 & 88.80 & \textbf{50.00} & 62.07 \\
    AIM-TIES             & \underline{33.33} & \underline{50.00} & \textbf{85.00} & 89.76 & 89.60 & 47.79 & \underline{65.92} \\
    RAIN-Merging         & \textbf{36.67} & \textbf{60.00} & \textbf{85.00} & \textbf{92.12} & \underline{90.20} & 48.53 & \textbf{68.75} \\
    \bottomrule
    \end{tabular}
\end{table}
    
\begin{table}[t]
    \centering
    \small
    \setlength{\tabcolsep}{5.5pt}
    \caption{Math benchmarks results under the same configuration as in \Tab~\ref{tab:scale_compare}. ``Avg.'' denotes the average over all math benchmarks.}
    \label{tab:scale_math}
    \resizebox{\textwidth}{!}{
    \begin{tabular}{lccccccc}
    \toprule
    \textbf{Method} & \textbf{AIME2025} & \textbf{AIME2024} & \textbf{AMC23} & \textbf{GSM8K} & \textbf{Math500} & \textbf{Minerva} & \textbf{Average} \\
    \midrule
    Qwen2.5-1.5B-Instruct              & 3.33 & 0.00 & 30.00 & 75.44 & 59.40 & 22.43 & 31.77 \\
    DeepSeek-R1-Distill-Qwen-1.5B      & 20.00 & 13.33 & 42.50 & 73.77 & 71.80 & 28.31 & 41.62 \\
    Qwen2.5-1.5B-RAIN-Merging          & 20.00 & 16.67 & 60.00 & 76.36 & 72.40 & 29.78 & 45.87 \\
    \midrule
    Llama-3.1-8B-Instruct              & 3.33 & 6.67 & 20.00 & 81.65 & 67.60 & 34.26 & 35.59 \\
    DeepSeek-R1-Distill-Llama-8B       & 30.00 & 40.00 & 75.00 & 90.52 & 80.40 & 45.37 & 60.21 \\
    Llama-3.1-8B-RAIN-Merging          & 30.00 & 43.33 & 77.50 & 90.30 & 82.80 & 47.79 & 61.95 \\
    \midrule
    Qwen2.5-14B-Instruct               & 20.00 & 13.33 & 65.00 & 93.18 & 80.00 & 44.85 & 52.73 \\
    DeepSeek-R1-Distill-Qwen-14B       & 50.00 & 56.67 & 92.50 & 93.22 & 89.00 & 52.49 & 72.31 \\
    Qwen2.5-14B-RAIN-Merging           & 50.00 & 63.33 & 92.50 & 94.37 & 91.00 & 56.25 & 74.58 \\
    \bottomrule
    \end{tabular}
    }
\end{table}

\Tab~\ref{tab:baseline_math} and \Tab~\ref{tab:scale_math} report detailed results on the mathematics benchmarks. \textbf{RAIN-Merging} consistently preserves the mathematical reasoning ability of LRMs across different model sizes and architectures. In some cases, improving instruction following also correlates with better mathematical performance, suggesting that enhanced adherence can support clearer intermediate reasoning and more reliable final answers.

\subsection{Ablation Study of the Global Scalar}\label{subsec:ablation_study_of_the_global_scalar_lambda}
\begin{figure}[tb]
    \centering
    \includegraphics[width=\textwidth]{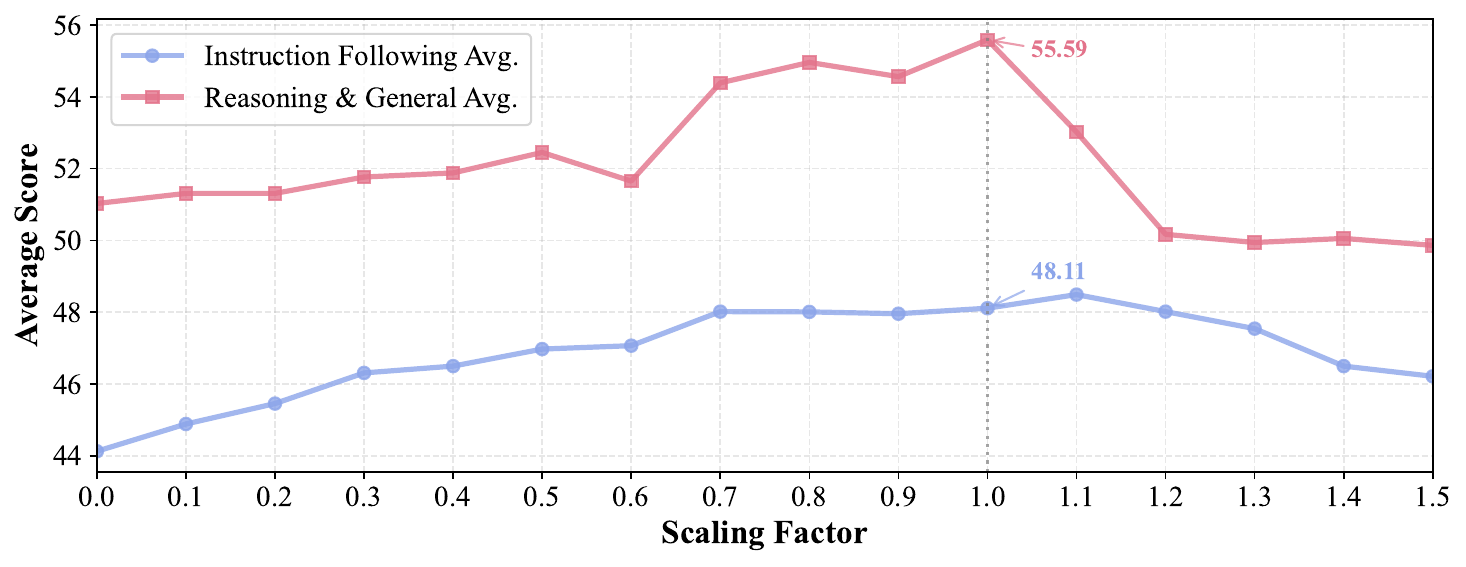}
    \caption{\footnotesize{Instruction following and reasoning \& generage performance of our RAIN-Merging using different global scalar $\lambda$. The configuration is the same as in \Tab~\ref{tab:scale_compare}. The performance is measured by the average of the instruction following and reasoning \& general capability benchmarks. The marked result is our choice in the experiments.}}
    \label{fig:ablation_scalar}
    \vspace*{-3mm}
\end{figure}
We conduct a sensitivity analysis of the global scalar $\lambda$ (\Fig~\ref{fig:ablation_scalar}). Across a wide range around our chosen value near $1.0$, the merged model maintains strong instruction-following performance. As $\lambda$ increases, reasoning ability improves slowly at first but then drops sharply beyond $1.0$, indicating that overly large merge strength can still harm reasoning.

\subsection{Ablation Study of Reasoning Calibration Set Size}\label{subsec:ablation_study_of_reasoning_calibration_set_size}
\begin{figure}[tb]
    \centering
    \includegraphics[width=\textwidth]{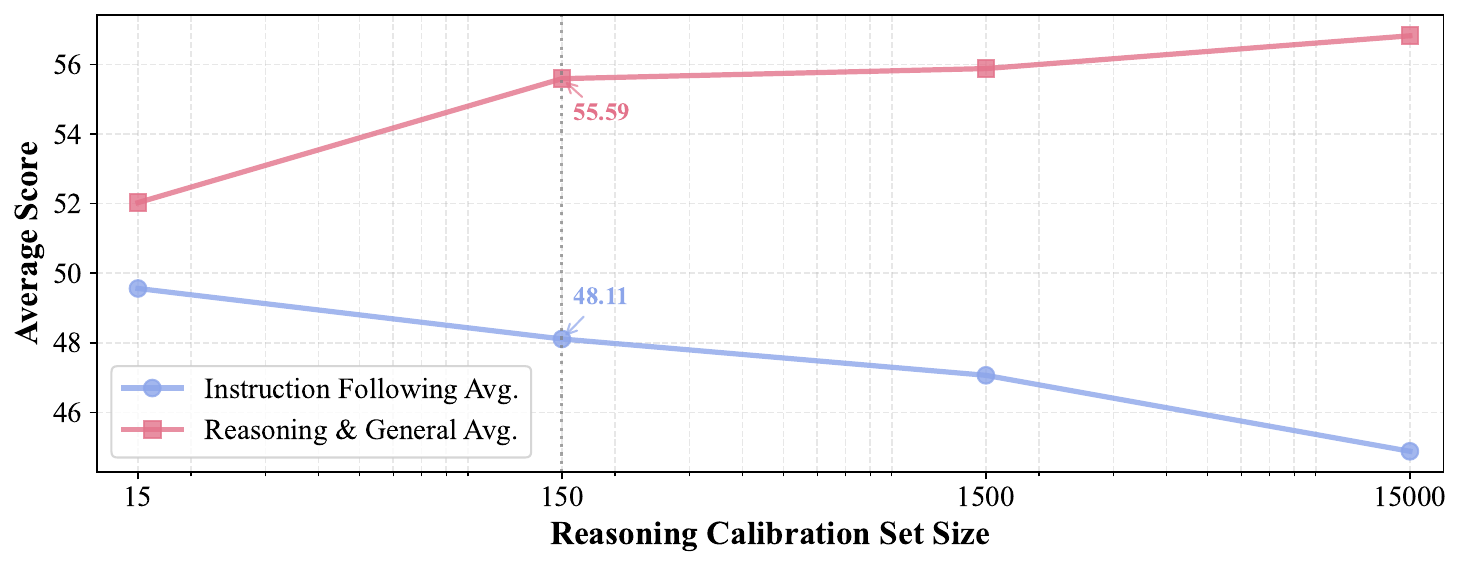}
    \caption{\footnotesize{Instruction following and reasoning \& generage performance of our RAIN-Merging using different reasoning calibration set sizes. The $x$ axis represents the size of the reasoning calibration set with exponential scale. The configuration is the same as in \Tab~\ref{tab:scale_compare}. The performance is measured by the average of the instruction following and reasoning \& general capability benchmarks. The marked result is our choice in the experiments.}}
    \label{fig:ablation_calibration}
    \vspace*{-3mm}
\end{figure}
\Fig~\ref{fig:ablation_calibration} presents an ablation over the size of the reasoning calibration set. As the set grows, preservation of reasoning improves; however, instruction-following performance degrades gradually. We hypothesize that overly strict preservation of the thinking format can limit gains in instruction adherence and also increase computation. Balancing performance and resource usage, we select a calibration size of $150$.

\subsection{Visualization of Merging Coefficients in Stage~2}\label{subsec:visualization_of_merging_coefficients}
\begin{figure}[htb]
    \centering
    \includegraphics[width=0.8\textwidth]{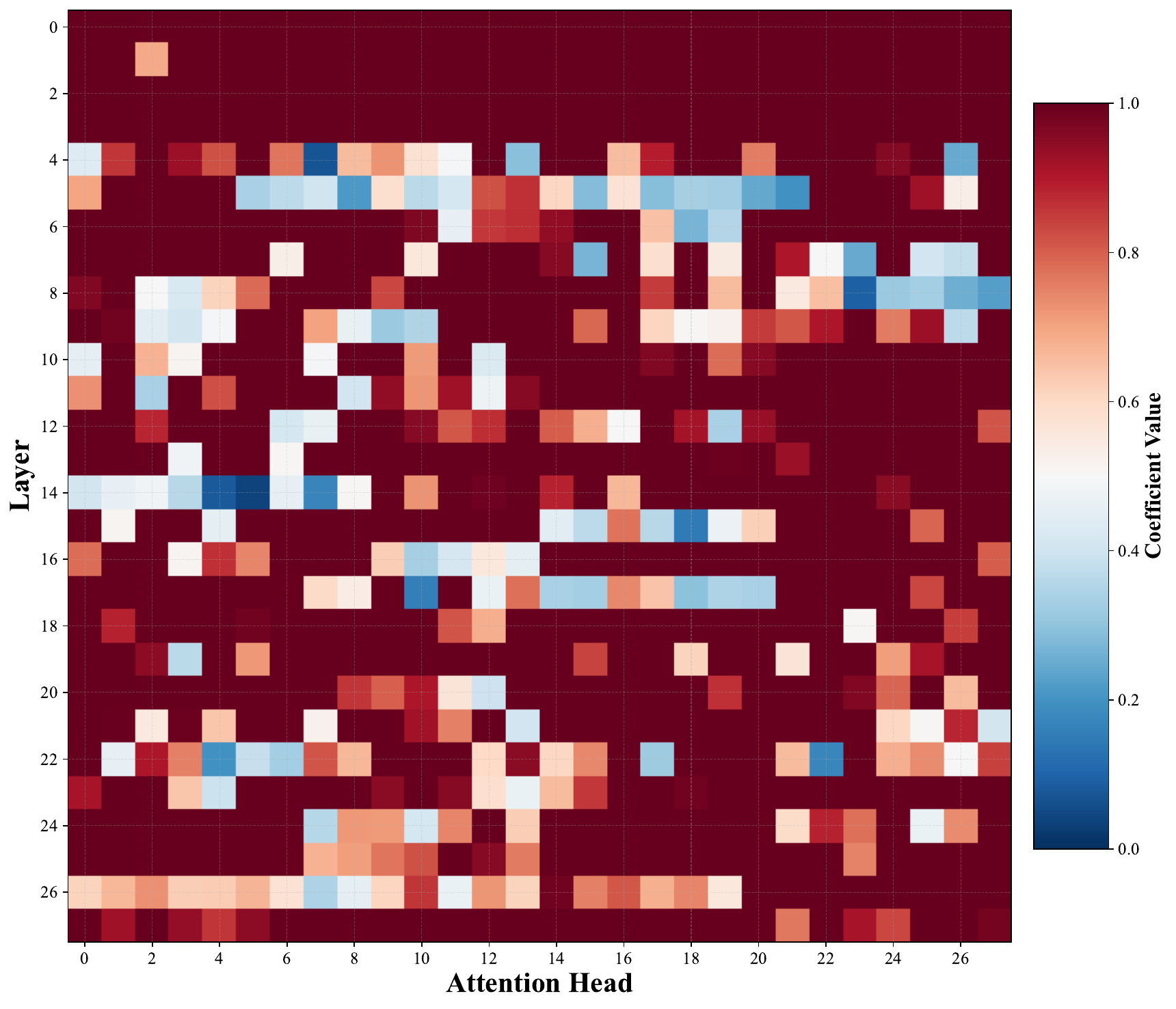}
    \caption{\footnotesize{Heatmap of merging coefficients by our Stage~2 for each layer and attention head of DeepSeek-R1-Distill-Qwen-7B.}}
    \vspace*{-3mm}
    \label{fig:coefficients_heatmap}
\end{figure}
As shown in \Fig~\ref{fig:coefficients_heatmap}, the heatmap of merging coefficients for DeepSeek-R1-Distill-Qwen-7B exhibits clear layer-wise differences, indicating that different layers respond to instruction focus to different degrees. Notably, the earliest layers show the strongest response, with coefficients reaching the upper bound, and this pattern is consistent with the observations in \Fig~\ref{fig:layer_alignment_leakage_comparison}.

{
\subsection{Ablation Study of Instruction Calibration Set Generalization}\label{subsec:ablation_study_of_instruction_calibration_and_generalization}
\begin{table}[t]
    \centering
    \small
    \setlength{\tabcolsep}{4.5pt}
    \caption{{Performance of RAIN-Merging with different instruction calibration sets in Stage~2. \textbf{Rule} is our original instruction calibration set from IFEval, \textbf{Open} is the new set from InforBench, \textbf{Rule+Open} simply concatenates both sets. We merge Qwen2.5-7B-Instruct (ITM) into DeepSeek-R1-Distill-Qwen-7B (LRM) under the same configuration as \Tab~\ref{tab:baseline_compare}.}}
    \resizebox{\textwidth}{!}{ 
    \begin{tabular}{lccccc c ccccc}
    \toprule
    \multirow{3}{*}{\textbf{Method}} & \multicolumn{5}{c }{\textbf{Instruction Following}} & & \multicolumn{5}{c}{\textbf{Reasoning \& General}} \\
    \cmidrule(lr){2-6}\cmidrule(lr){8-12}
    & \multirow{2}{*}{\textbf{IFEval}} & \multirow{2}{*}{\textbf{CELLO}} & \textbf{Info} & \textbf{Complex} & \multirow{2}{*}{\textbf{Avg.}} & & \multirow{2}{*}{\textbf{Math}} & \multirow{2}{*}{\textbf{GPQA}} & \multirow{2}{*}{\textbf{Aider}} & \textbf{Arena-} & \multirow{2}{*}{\textbf{Avg.}} \\
    & & & \textbf{Bench} & \textbf{Bench} & & & & & & \textbf{Hard-v2} & \\
    \midrule
    LRM                      & 55.45 & 16.59 & 71.73 & 32.72 & 44.12 & & 64.75 & 44.44 & 29.63 & 65.29 & 51.03 \\
    RAIN-Merging (Rule)      & 63.22 & 19.03 & 74.53 & 35.66 & 48.11 & & \textbf{68.75} & \textbf{54.55} & 33.33 & \textbf{65.73} & \textbf{55.59} \\
    RAIN-Merging (Open)      & 62.92 & 19.24 & 74.89 & 35.67 & 48.15 & & 65.14 & 49.49 & 31.11 & 64.67 & 52.59 \\
    RAIN-Merging (Rule+Open) & \textbf{64.03} & \textbf{19.63} & \textbf{75.64} & \textbf{36.70} & \textbf{49.00} & & 67.43 & 53.03 & \textbf{35.56} & 65.29 & 55.32 \\
    \bottomrule
    \end{tabular}
    }
    \label{tab:calib-open}
\end{table}
    
While we use an instruction calibration set from IFEval in the main experiments due to its cleanly separable instruction spans and rule-based labels, the Stage 2 proxy is not restricted to such data. we construct an additional instruction calibration set from InfoBench. InfoBench focuses on open-ended constraints such as tone, style and content focus. We follow the same filtering pipeline as in Appendix~\ref{subsec:instruction_calibration_set_construction} and perform manual screening to ensure that the selected spans correspond to instructional constraints rather than problem content, resulting in a total of 260 samples. We refer the calibration set of 365 rule-verifiable instructions from IFEval as \textbf{Rule} and the new set from InfoBench as \textbf{Open}. We also consider a mixed variant \textbf{Rule+Open} obtained by simply concatenating the two sets.}
{
\Tab~\ref{tab:calib-open} reports the performance of RAIN-Merging under these three calibration variants. Using the \textbf{Open} calibration set alone yields instruction-following performance that is comparable to the original \textbf{Rule} setting, with slightly higher accuracy on the more open-ended benchmarks (InfoBench and ComplexBench). This indicates that the instruction-attention guided coefficients can still identify effective modules even when calibrated exclusively on open-ended instructions, and are not restricted to IFEval-style rule-verifiable patterns. The mixed \textbf{Rule+Open} calibration consistently improves all four instruction-following benchmarks. Combining rule-verifiable and open-ended instructions therefore produces a more general proxy that better captures diverse instruction types. But carefully, we observe that both the purely \textbf{Open} and the \textbf{Rule+Open} variants incur a modest drop in reasoning \& general performance compared to the \textbf{Rule} baseline. This suggests that while open-ended calibration can further enhance instruction-following, it may also slightly interfere with the preservation of reasoning. We view designing cleaner open-ended calibration sets and more explicitly modelling instruction–reasoning entanglement as promising directions for future work.
}


{
\subsection{Generalization to Unseen Instruction-Following Benchmarks}
\label{subsec:generalization_to_unseen_instruction_following_benchmarks}
\begin{table}[t]
    \centering
    \small
    \caption{{Merging performance and relative gains of RAIN-Merging on three new instruction-following benchmarks. We merge the Qwen2.5-7B family under the same configuration as \Tab~\ref{tab:baseline_compare}. The subsequent ``\textit{(relative gain)}'' row reports the relative improvement of our method over the LRM, highlighted in green.}}
    \resizebox{0.7\textwidth}{!}{
    \begin{tabular}{lcccc}
    \toprule
    \textbf{Model}   & \textbf{IFBench} & \textbf{XIFBench} & \textbf{EIFBench} & \textbf{Average} \\
    \midrule
    Qwen2.5-7B-Instruct          & 27.89   & 83.35    & 55.62    & 55.62   \\
    DeepSeek-R1-Distill-Qwen-7B  & 17.69   & 72.93    & 45.31    & 45.31   \\
    Qwen2.5-7B-RAIN-Merging      & 19.39   & 76.32    & 47.85    & 47.85   \\
    \midrule
    \textit{(relative gain)}    & \pos{+9.62\%} & \pos{+4.65\%} & \pos{+5.62\%} & \pos{+5.62\%} \\
    \bottomrule
    \end{tabular}}
    \label{tab:new_if}
\end{table}
To mitigate risks of data contamination from established benchmarks, we evaluate our method on three recently proposed instruction-following benchmarks that could not be used for calibration or training: \textbf{IFBench}~\citep{pyatkin2025generalizing}, \textbf{XIFBench}~\citep{li2025xifbenchevaluatinglargelanguage}, and \textbf{EIFBench}~\citep{zou2025eifbenchextremelycomplexinstruction}.
\begin{itemize}[left=0pt,itemsep=0pt,topsep=0pt]
\item \textbf{IFBench} targets precise, verifiable output constraint and tests generalization to 58 diverse out-of-domain constraint templates.
\item \textbf{XIFBench} evaluates multilingual instruction-following under fine-grained constraints across six languages, covering five categories such as content, style, format, situation and numerical requirements.
\item \textbf{EIFBench} focuses on extremely complex instruction-following scenarios, where models must execute multi-task workflows under multiple interacting constraints, closer to real-world product use-cases.
\end{itemize}
These datasets introduce new instruction formats, domains, and evaluation protocols, and are therefore suitable for testing robustness to distribution shifts in instruction-following.
}

{
The results are reported in \Tab~\ref{tab:new_if}. Across all three benchmarks, RAIN-Merging consistently outperforms the baseline LRM, with relative gains ranging from $+4.65\%$ to $+9.62\%$ on individual datasets and $+5.62\%$ on the average. These improvements suggest that our method can generalize to new, previously unseen instruction-following tasks.
}

{
\subsection{Instruction-Type Breakdown on IFEval}
\label{subsec:instruction_type_breakdown_on_ifeval}
\begin{figure}[tb]
    \centering
    \includegraphics[width=\textwidth]{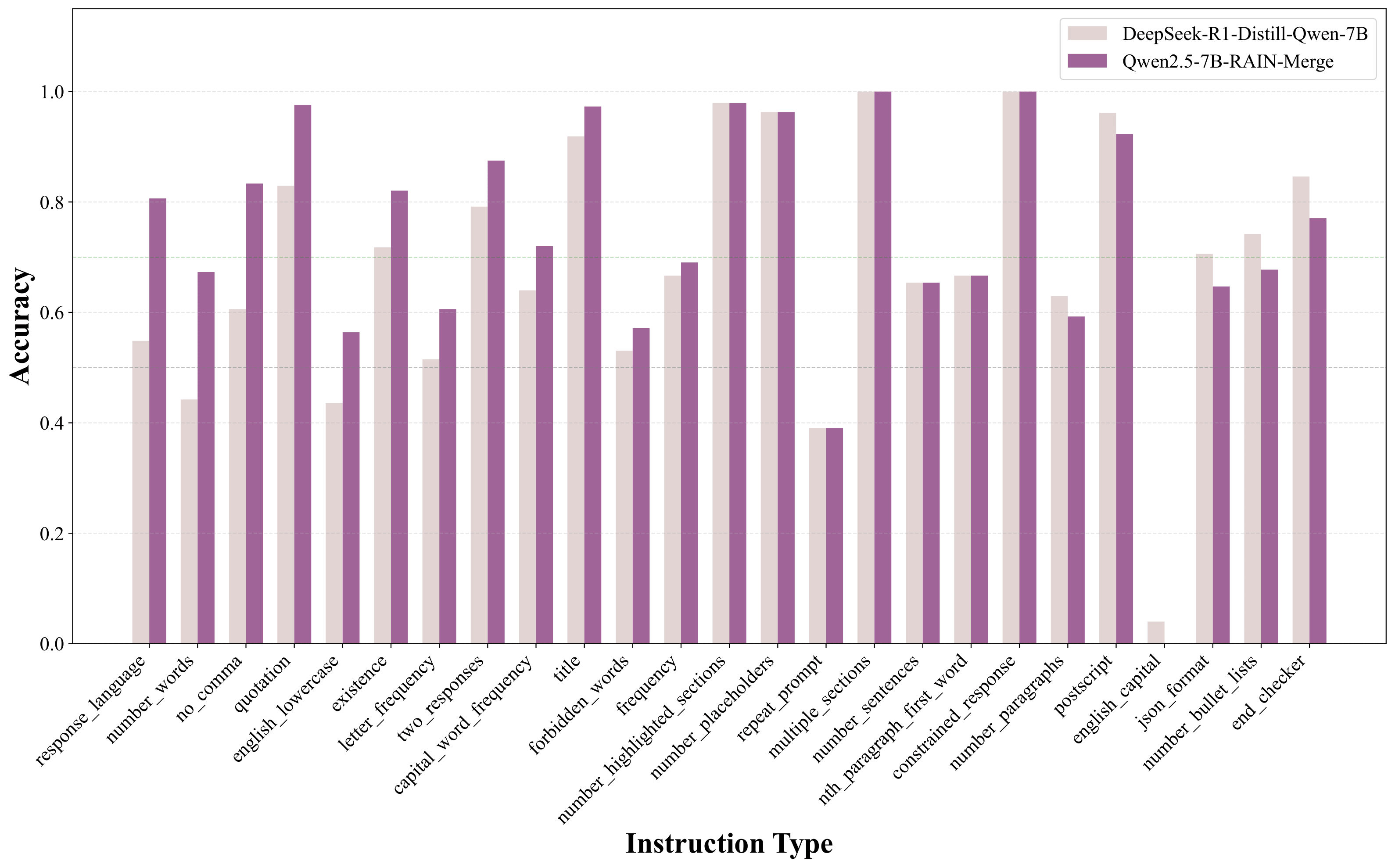}
    \caption{{Instruction-type accuracy comparison between the LRM (DeepSeek-R1-Distill-Qwen-7B) and RAIN-Merging (Qwen2.5-7B-RAIN-Merging) on IFEval.
    The results highlight the largest improvements on instructions like \texttt{response\_language}, \texttt{number\_words}, and \texttt{no\_comma}.}}
    \label{fig:instruction_type_comparison}
\end{figure}
To provide a deeper understanding of how RAIN-Merging improves instruction-following,
we perform a per-instruction-type analysis on IFEval, which includes 20+ distinct instruction categories in the original IFEval span.
\Fig~\ref{fig:instruction_type_comparison} reports the instruction-type accuracy comparison results.
The analysis reveals that RAIN-Merging achieves the most substantial performance gains on instruction types, such as \texttt{response\_language}, \texttt{number\_words}, and \texttt{no\_comma}. For the majority of other types, the merged model either matches or modestly surpasses the LRM's performance. These results complement the aggregate accuracy metrics reported in the main paper and offer further insight into the model's instruction-following capabilities.
}

{
\subsection{Semantic Robustness to Paraphrased Instructions}
\label{subsec:semantic_robustness_to_paraphrased_instructions}
\begin{table}[t]
    \centering
    \small
    \caption{{Robustness to paraphrased instructions on IFEval. We merge the Qwen2.5-7B family under the same configuration as \Tab~\ref{tab:baseline_compare}. \textbf{HAcc} is hard accuracy (\%). \textbf{Robustness} is defined as $\text{HAcc}(\text{paraphrase}) / \text{HAcc}(\text{original})$}}   
    \resizebox{0.8\textwidth}{!}{
    \begin{tabular}{lccc}
    \toprule
    Model                        & IFEval(200) HAcc & IFEval-paraphrase HAcc & Robustness \\
    \midrule
    Qwen2.5-7B-Instruct          & 65.71            & 64.16                  & 0.98       \\
    DeepSeek-R1-Distill-Qwen-7B  & 57.86            & 50.85                  & 0.88       \\
    Qwen2.5-7B-RAIN-Merging      & 61.29            & 61.81                  & 1.01       \\
    \bottomrule
    \end{tabular}}
    \label{tab:phrase_if}
\end{table}
Our main instruction-following evaluation already includes benchmarks with non-trivial semantic components. In particular, \textbf{InfoBench} and \textbf{ComplexBench} explicitly evaluate whether the \emph{content} of the response aligns with the prompt (e.g., style, tone, key information coverage), rather than relying solely on surface-level pattern matching. Thus a certain degree of semantic evaluation is already baked into our instruction-following metrics. To more directly assess whether RAIN-Merging improves \emph{semantic} instruction following, rather than simply adapting to specific phrasings, we additionally follow IFEval-extended~\citep{kovalevskyi2024ifeval} and construct a paraphrased version of IFEval. Concretely, we select 200 valid IFEval examples and use GPT-4o to generate three paraphrases for each instruction, yielding 600 phrased instructions(denoted as \textbf{IFEval-paraphrase}). We then evaluate: (i) hard accuracy on the original 200 IFEval prompts, and (ii) hard accuracy on the 600 paraphrased prompts. We also define a simple robustness metric, $\text{Robustness} \;=\; \frac{\text{HAcc}(\text{paraphrase})}{\text{HAcc}(\text{original})}$, which measures how well performance is preserved under paraphrasing.
}

{
\Tab~\ref{tab:phrase_if} reports the robustness to paraphrased instructions on IFEval. We observe that LRM’s performance degrades notably under paraphrasing. RAIN-Merging not only improves over the LRM on the original prompts, it also maintains slightly higher accuracy on the paraphrased prompts, achieving a
robustness of $1.01$ that surpasses both the LRM and the ITM. These results indicate that RAIN-Merging enhances \emph{semantic} instruction-following capability, as its performance improvements remain consistent even when instructions are substantially paraphrased. This demonstrates that the gains are not merely due to overfitting to specific phrasings or template-based patterns. The paraphrase-robustness experiment complements our aggregate instruction-following evaluations and supports the claim that the merged model better captures the intended meaning of user instructions.
}

\subsection{Case Study on IFEval}\label{subsec:case_study_on_ifeval}

We provide two case studies to illustrate the effectiveness of RAIN-Merging compared with DeepSeek-R1-Distill-Qwen-7B as the baseline LRM on IFEval.
\begin{itemize}[left=0pt,itemsep=0pt,topsep=0pt]
    \item \textbf{IFEval Example 1}: The baseline LRM violates the explicit rule to first echo the request verbatim and further duplicates its poem, yielding a “following: False” outcome. In contrast, RAIN-Merging correctly repeats the request word-for-word, includes the required keywords (“intern,” “grow”), and produces a coherent, father-pleasing limerick (“following: True”).
    \item \textbf{IFEval Example 2}: The baseline LRM introduces capitalized section headers and markup (e.g., “Verse 1”), breaking the “all lowercase” constraint (“following: False”). RAIN-Merging delivers fully lowercase lyrics with clear structure and consistent semantics (“following: True”).
\end{itemize}

{
\subsection{Case Study on GPQA}\label{subsec:case_study_on_gpqa}
To better understand why RAIN-Merging improves performance on \textbf{GPQA},we go beyond final-answer accuracy and analyze concrete examples where the merged model corrects the LRM’s mistakes.
\begin{itemize}[left=0pt,itemsep=0pt,topsep=0pt]
    \item \textbf{GPQA Example 1}: In the first case, the LRM constructs a mostly correct physical derivation but makes a numerical slip in the computation as marked in red. The merged model follows essentially the same derivation but corrects the arithmetic step and therefore reaches the correct option. Notably, the merged model employs a step-by-step computational approach rather than relying on a single direct calculation as in the LRM, reflecting its stronger logical coherence. This suggests that our method goes beyond faithfully maintaining the structure of the reasoning chains and can also enhance the reliability of the LRM’s step-by-step logical and numerical reasoning.
    \item \textbf{GPQA Example 2}: In the second case, the LRM correctly analyzes each option and explicitly concludes that option (C) violates both isospin and parity conditions, while other options satisfy the constraints. However, in the final answer, the LRM unexpectedly outputs (A) as the best choice, contradicting its own reasoning.
    This is a classic \textit{knowing-doing gap}~\citep{schmied2025llms}, where
    the model’s decision is misaligned with its internal chain of thought.
    The merged model largely reuses the same analysis but now selects (C) as
    the final answer, aligning its decision with its reasoning.
    This is also the behavior captured by the improved RAA scores in previous \Tab~\ref{tab:reason_eval} to explain the performance gains on GPQA.
\end{itemize}
}

\newpage

\definecolor{promptcol}{HTML}{1F77B4}   
\definecolor{reasoncol}{HTML}{8C564B}   
\definecolor{responsecol}{HTML}{2CA02C} 

\begin{tcolorbox}[title=Instruction-following Example 1 in IFEval, fontupper=\small, breakable]
    {\bfseries\textcolor{promptcol}{Prompt}}\\
    \begingroup\color{promptcol}%
    Write a limerick about Hannah, a college student, doing an internship at a coffee company. Make sure that her father would love the limerick. Include the words "intern" and "grow".\\
    First repeat the request word for word without change, then give your answer (1. do not say any words or characters before repeating the request; 2. the request you need to repeat does not include this sentence)
    \endgroup
    
    \medskip
    
    {\bfseries DeepSeek-R1-Distill-Qwen-7B Reason}\\
    \begingroup\color{reasoncol}%
    Okay, so the user has given me a query where they want a limerick about Hannah, a college student doing an internship at a coffee company. They also want me to make sure her father would love it, include the words "intern" and "grow".\par
    First, I need to understand what a limerick is. It's a five-line poem with an AABBA rhyme scheme, usually humorous. So, I should structure it that way.\par
    The main characters here are Hannah, who's an intern, and her father. I need to highlight something that would please her father. Since he's looking back on her, maybe something positive, like growth or learning. The user mentioned "grow," so that's a keyword I must include.\par
    I'll start the first line with "There once was a college student named Hannah," which sets the scene. Next, she's doing an internship at a coffee company, so the second line can mention that.\par
    For the third line, I need to introduce her father. Maybe something like "Her dad would be tickled," which fits the rhyme and the idea that he's proud or finds it amusing.\par
    In the fourth line, I have to include "grow," perhaps about Hannah growing into her role or learning something. Something like "she grew as an intern." That works because it's a common phrase and fits the rhyme.\par
    Finally, the last line should be a nice, positive ending. Maybe the coffee company has grown as a result of her hard work. "And the JavaHouse grew from her intern!" That ties the "grow" into the company's success.\par
    I should make sure the rhyme scheme is consistent and the lines flow well. Let me check each line for the AABBA pattern: the first and third lines have "Hannah," the second and fourth have "intern" and "grow," and the fifth is the punchline. The rhymes at the end of each line should be "annah," "inn," "dad," "un," and "or." Wait, that doesn't quite fit perfectly. Let me adjust the fifth line to "and JavaHouse grew from her intern." That way, the last two lines rhyme with "grow" and "grew," which is close enough for a limerick's loose rhyme.\par
    I think this captures Hannah's experience, shows her learning, and how her father is proud. It also ties the coffee company's growth to her internship, which is a nice touch. I should make sure the language is playful and fits the humorous tone of limericks.
    \endgroup
    
    \medskip
    
    {\bfseries DeepSeek-R1-Distill-Qwen-7B Response (following: False)}\\
    \begingroup\color{responsecol}%
    There once was a college student named Hannah,\\
    Whose intern days at JavaHouse she would swap for much.\\
    Her dad would be tickled, I'm sure,\\
    If he heard she grew as an intern.\\
    And JavaHouse grew from her intern!\\[0.4em]
    There once was a college student named Hannah,\\
    Whose intern days at JavaHouse she would swap for much.\\
    Her dad would be tickled, I'm sure,\\
    If he heard she grew as an intern.\\
    And JavaHouse grew from her intern!
    \endgroup
    
    \medskip
    
    {\bfseries RAIN-Merging Reason}\\
    \begingroup\color{reasoncol}%
    Okay, so the user has given me a query where they want a limerick about Hannah, a college student doing an internship at a coffee company. They also want me to include the words "intern" and "grow". Plus, the limerick should make my father happy. I need to make sure that the poem not only fits the structure of a limerick but also touches on what a father might appreciate.\par
    First, recall the structure of a limerick (AABBA, typically anapestic). Incorporate internship details and the required keywords. Emphasize traits a father would love: responsibility, learning, contribution.\par
    Draft lines that meet the rhyme scheme and include "intern" and "grow," aiming for a playful, positive tone that showcases Hannah's growth and impact at the coffee company.
    \endgroup
    
    \medskip
    
    {\bfseries RAIN-Merging Response (following: True)}\\
    \begingroup\color{responsecol}%
    Write a limerick about Hannah, a college student, doing an internship at a coffee company. Make sure that her father would love the limerick. Include the words "intern" and "grow".\\[0.4em]
    There once was a college student named Hannah,\\
    An intern at a coffee company she was.\\
    \endgroup
    \begingroup\color{responsecol}%
    With her skills, both sharp and bright,\\
    She's learning every day,\\
    She's growing taller, her skills grow bright.\\[0.4em]
    Her dad would surely give her a smile,\\
    For Hannah's heart and Hannah's skill.\\
    She's balancing tasks, balancing time,\\
    Turning great coffee into something divine.\\[0.4em]
    A limerick full of sunny pride!
    \endgroup
\end{tcolorbox}

\begin{tcolorbox}[title=Instruction-following Example 2 in IFEval, fontupper=\small, breakable]
    {\bfseries\textcolor{promptcol}{Prompt}}\\
    \begingroup\color{promptcol}%
    Compose song lyrics about a socio-economic problem. The song should be in English and in all lowercase letters.
    \endgroup
    
    \medskip
    
    {\bfseries DeepSeek-R1-Distill-Qwen-7B Reason}\\
    \begingroup\color{reasoncol}%
    The user asked for song lyrics about a socio-economic problem, in English and all lowercase. Choose a relatable issue (e.g., inequality, unemployment), structure with verses/chorus/bridge, maintain empathetic tone, vivid imagery, and ensure every character is lowercase. Aim for accessibility, repetition in chorus, and a subtle sense of resilience or hope.
    \endgroup
    
    \medskip
    
    {\bfseries DeepSeek-R1-Distill-Qwen-7B Response (following: False)}\\
    \begingroup\color{responsecol}%
    \textbf{Verse 1}\\
    every penny's a struggle\\
    like fuel for a car that won't start\\
    the stars are bright but we don't see them\\
    the ladder's always broken\\[0.4em]
    \textbf{Chorus}\\
    every penny's a struggle\\
    \endgroup
    \begingroup\color{responsecol}%
    a weight in your chest\\
    the world's a game but you're just a piece\\
    no matter how hard you try\\
    every penny's a struggle\\[0.4em]
    \textbf{Verse 2}\\
    the roads are wet but the rain falls cold\\
    the wind's a whisper but it's got no hold\\
    we're dancing to a tune we can't control\\
    but the beat's the same\\[0.4em]
    \textbf{Chorus}\\
    every penny's a struggle\\
    a weight in your chest\\
    the world's a game but you're just a piece\\
    no matter how hard you try\\
    every penny's a struggle\\[0.4em]
    \textbf{Bridge}\\
    look beyond the pain\\
    is there a spark in your soul?\\
    can you see the stars at night?\\
    maybe there's something more\\[0.4em]
    \textbf{Chorus}\\
    every penny's a struggle\\
    a weight in your chest\\
    the world's a game but you're just a piece\\
    no matter how hard you try\\
    every penny's a struggle\\[0.4em]
    \textbf{Outro}\\
    a penny's worth of love or light\\
    the struggle's real but we gotta fight\\
    we'll find the way through the cold\\
    the world's a game but you're just a piece
    \endgroup
    
    \medskip
    
    {\bfseries RAIN-Merging Reason}\\
    \begingroup\color{reasoncol}%
    Generate lyrics that address a socio-economic issue with clear imagery and structure (verses/chorus/bridge). Conform strictly to the all-lowercase constraint. Keep language simple and emotive, focusing on systemic barriers and lived experience, while maintaining flow and singability.
    \endgroup
    
    \medskip
    
    {\bfseries RAIN-Merging Response (following: True)}\\
    \begingroup\color{responsecol}%
    \textbf{"the gap in the system"}\\[0.25em]
    (verse 1)\\
    i see the faces in the lines,\\
    faces searching for a ticket out of here.\\
    factories grinding, bodies quiet,\\
    a world that promises better, but i'm still waiting for the breath.\\[0.4em]
    (chorus)\\
    there's a crack in the system, a fault in the law,\\
    half the planet's frozen in time.\\
    we're crafted from the same mold, same broken spell,\\
    but the world won't let us climb the hill.\\[0.4em]
    (verse 2)\\
    the streets are empty, the shops tiled over,\\
    the only sound is the shuttle's old tune.\\
    we're drowning in the waves, but the shores are a lie,\\
    the promises fade like ash in the wind.\\[0.4em]
    (chorus)\\
    there's a crack in the system, a fault in the law,\\
    half the planet's frozen in time.\\
    we're crafted from the same mold, same broken spell,\\
    but the world won't let us climb the hill.\\[0.4em]
    (bridge)\\
    \endgroup
    \begingroup\color{responsecol}%
    oh, it's not just the lack, it's the broken chain,\\
    we're all stuck in the margins, never quite seen.\\
    but the fights don't stop, they never do,\\
    we're trapped in the loop, chained to the ground.\\[0.4em]
    (chorus)\\
    there's a crack in the system, a fault in the law,\\
    half the planet's frozen in time.\\
    we're crafted from the same mold, same broken spell,\\
    but the world won't let us climb the hill.\\[0.4em]
    (outro)\\
    so here's to the night shifts and the cold nights,\\
    the endless loop where the kids don't get the break.\\
    we're all fragments on the ground,\\
    trying to find a way out of the ground.
    \endgroup
\end{tcolorbox}

\newpage

\begin{tcolorbox}[title=Reasoning Example 1 in GPQA, fontupper=\small, breakable]
    {\bfseries\textcolor{promptcol}{Question (Answer: C)}}\\
    \begingroup\color{promptcol}%
    X is a meson resonance. What is the mean decay distance? Knowing that the production energy is $E_{X}=8GeV$, the mass $m_{X}=1.2GeV$, and the width $\Gamma_{X}=320MeV$.\\
    A. $4.0655 \times 10^{-16} m$ \\
    B. $5.0223 \times 10^{-16} m$ \\
    C. $4.0655 \times 10^{-15} m$ \\
    D. $5.0223 \times 10^{-15} m$ \\
    \endgroup
    
    \medskip
    
    {\bfseries DeepSeek-R1-Distill-Qwen-7B Reason}\\
    \begingroup\color{reasoncol}%
    Okay, so I have this physics problem here about a meson resonance and its mean decay distance. Hmm, I'm a bit rusty on particle physics, but let me try to think this through step by step.
    First, the question gives me some data: the production energy $E_X = 8$ GeV, the mass $m_X = 1.2$ GeV, and the width $\Gamma_X = 320$ MeV. I need to find the mean decay distance from these values. The options are given in meters, so I guess I need to calculate the decay length.\\
    \begingroup\color{gray}%
    ...\\
    \textit{(We omit the lengthy reasoning process for clarity.)} \\
    ...
    \endgroup
    \begin{tcolorbox}[colframe=red, boxrule=1pt, arc=3pt, left=2pt, right=2pt, top=0pt, bottom=0pt, boxsep=2pt]
    \begingroup\color{reasoncol}
    Wait, $\hbar \approx 6.582 \times 10^{-25}$ GeV$\cdot$s. So $\tau = \frac{6.582 \times 10^{-25}\text{ GeV}\cdot\text{s}}{0.32\text{ GeV}} \approx 2.056875 \times 10^{-25}\text{ s}$
    \endgroup
    \end{tcolorbox}
    Now, the mean decay length $L = \gamma \beta c \tau$. Let's plug in the numbers. $\gamma \approx 6.6667$, $\beta \approx 0.9887$, $c \approx 3 \times 10^8$ m/s.\\
    So $L = 6.6667 \times 0.9887 \times 3 \times 10^8\text{ m/s} \times 2.056875 \times 10^{-25}\text{ s}$
    Let me compute this step by step. First, $6.6667 \times 0.9887 \approx 6.6049$. Then, $3 \times 10^8 \times 2.056875 \times 10^{-25} = 6.170625 \times 10^{-17}$ m. Now multiply that by $6.6049$: $6.6049 \times 6.170625 \times 10^{-17} \approx 4.07 \times 10^{-16}$ m.\\
    Looking at the options, option A is $4.0655 \times 10^{-16}$ m, which is very close to my calculation. Option B is $5.0223 \times 10^{-16}$, which is higher. So I think the correct answer is A.
    \endgroup
    
    \medskip
    
    {\bfseries DeepSeek-R1-Distill-Qwen-7B Response (Answer: A, Wrong)}\\
    \begingroup\color{responsecol}%
    The mean decay distance $L$ for a particle is given by $L = \gamma \beta c \tau$, where $\gamma$ is the Lorentz factor, $\beta$ is the velocity in terms of the speed of light, $c$ is the speed of light, and $\tau$ is the proper lifetime. The proper lifetime is related to the total decay width $\Gamma$ by $\tau = \hbar / \Gamma$, where $\hbar$ is the reduced Planck constant.\\
    Given:\\
    - $E_X = 8$ GeV,\\
    - $m_X = 1.2$ GeV,\\
    - $\Gamma_X = 320$ MeV $= 0.32$ GeV.\\
    First, calculate $\gamma = E_X / m_X = 8 / 1.2 \approx 6.6667$.\\
    Next, $\beta = \sqrt{1 - (1/\gamma^2)} = \sqrt{1 - (1/6.6667^2)} \approx 0.9887$.\\
    The proper lifetime is $\tau = \hbar / \Gamma_X$, where $\hbar \approx 6.582 \times 10^{-25}$ GeV$\cdot$s.\\
    Thus, $\tau = (6.582 \times 10^{-25}) / 0.32 \approx 2.056875 \times 10^{-25}$ s.\\
    Finally, the mean decay length is: $L = \gamma \beta c \tau = 6.6667 \times 0.9887 \times (3 \times 10^8) \times 2.056875 \times 10^{-25} \approx 4.0655 \times 10^{-16} \text{ m}$\\
    Comparing with the options, the closest match is \textbf{A} $4.0655 \times 10^{-16}$ m.\\
    \\
    \textbf{Answer: A}
    \endgroup
    
    \medskip

    \newpage
    
    {\bfseries RAIN-Merging Reason}\\
    \begingroup\color{reasoncol}%
    Okay, so I've got this physics problem here about a meson resonance and its mean decay distance. Hmm, I'm a bit rusty on particle physics, but let's try to think this through.\\
    The question gives me $E_X = 8$ GeV, $m_X = 1.2$ GeV, and $\Gamma_X = 320$ MeV. I need to find the mean decay distance. The options are all in meters, so I guess I need to calculate the decay length.\\
    Wait, decay length... I remember that the mean decay length ($L$) is related to the particle's lifetime ($\tau$) and its velocity ($\beta$). The formula is $L = \beta\gamma c \tau$, where $\gamma$ is the Lorentz factor. But wait, sometimes it's also expressed using the width $\Gamma$ because the lifetime $\tau$ is related to $\Gamma$ by $\tau = \hbar / \Gamma$. Oh right, because $\Gamma$ is the total decay width, which is inversely proportional to the lifetime.\\
    So first, I need to find the lifetime $\tau$. $\Gamma$ is given in MeV, so I should convert that to GeV for consistency with $E_X$ and $m_X$. 320 MeV is 0.32 GeV.\\
    Now, using $\tau = \hbar / \Gamma$. But wait, what's the value of $\hbar$ in GeV$\cdot$s? I think $\hbar$ is approximately $6.582 \times 10^{-25}$ GeV$\cdot$s. Let me double-check that. Yeah, I think that's correct.\\
    So $\tau = (6.582 \times 10^{-25}\text{ GeV}\cdot\text{s}) / (0.32\text{ GeV}) = 6.582 \times 10^{-25} / 0.32 \approx 2.057 \times 10^{-24}$ seconds.\\
    \begingroup\color{gray}%
    ...\\
    \textit{(We omit the lengthy reasoning process for clarity.)} \\
    ...
    \endgroup
    \begin{tcolorbox}[colframe=red, boxrule=1pt, arc=3pt, left=2pt, right=2pt, top=0pt, bottom=0pt, boxsep=2pt]
    \begingroup\color{reasoncol}
    Wait, let me check the exact values.\\
    First, $\tau = \hbar / \Gamma$. $\hbar$ is $6.582 \times 10^{-25}$ GeV$\cdot$s. $\Gamma$ is 0.32 GeV.\\
    So $\tau = 6.582 \times 10^{-25} / 0.32 =$ let's compute that exactly.\\
    $6.582 / 0.32 = 20.56875$, so $20.56875 \times 10^{-25}$ s $= 2.056875 \times 10^{-24}$ s.
    \endgroup
    \end{tcolorbox}
    \begingroup\color{gray}%
    ...\\
    \textit{(We omit the lengthy reasoning process for clarity.)} \\
    ...\\
    \endgroup
    \endgroup
    
    \medskip
    
    {\bfseries RAIN-Merging Response (Answer: C, Correct)}\\
    \begingroup\color{responsecol}%
    To calculate the mean decay distance $L$ of the meson resonance, we use the formula: $L = c \cdot \beta \gamma \tau$, where:\\
    - $c = 3 \times 10^8$ m/s (speed of light),\\
    - $\beta = \sqrt{1 - \frac{m^2}{E^2}}$ is the velocity in units of $c$,\\
    - $\gamma = \frac{E}{m}$ is the Lorentz factor,\\
    - $\tau = \frac{\hbar}{\Gamma}$ is the proper lifetime, with $\hbar = 6.582 \times 10^{-25}$ GeV$\cdot$s.\\
    \textbf{Step 1: Calculate $\tau$}\\
    Given $\Gamma_X = 320$ MeV $= 0.32$ GeV, the proper lifetime is: $\tau = \frac{\hbar}{\Gamma_X} = \frac{6.582 \times 10^{-25}}{0.32} \approx 2.056875 \times 10^{-24} \text{ s}$
    \textbf{Step 2: Calculate $\beta$ and $\gamma$}\\
    The energy is $E_X = 8$ GeV, and the mass is $m_X = 1.2$ GeV/$c^2$. The Lorentz factor is: $\gamma = \frac{E_X}{m_X} = \frac{8}{1.2} \approx 6.6667$\\    
    The velocity factor $\beta$ is: $\beta = \sqrt{1 - \frac{m_X^2}{E_X^2}} = \sqrt{1 - \frac{1.2^2}{8^2}} = \sqrt{1 - 0.0225} = \sqrt{0.9775} \approx 0.9887$\\
    Thus, $\beta \gamma = 0.9887 \cdot 6.6667 \approx 6.5911$.\\
    \textbf{Step 3: Calculate $L$}\\
    Substituting into the formula for $L$: $L = c \cdot \beta \gamma \cdot \tau = 3 \times 10^8 \cdot 6.5911 \cdot 2.056875 \times 10^{-24}$\\
    First, calculate $c \cdot \beta \gamma$: $3 \times 10^8 \cdot 6.5911 \approx 1.97733 \times 10^9 \text{ m/s}$\\
    Now multiply by $\tau$: $L = 1.97733 \times 10^9 \cdot 2.056875 \times 10^{-24} \approx 4.067 \times 10^{-15} \text{ m}$\\
    \textbf{Step 4: Match with the closest option}\\
    The calculated value $L \approx 4.067 \times 10^{-15}$ m is closest to option \textbf{C} ($4.0655 \times 10^{-15}$ m).\\
    \textbf{ANSWER: C}
    \endgroup
\end{tcolorbox}

\newpage

\begin{tcolorbox}[title=Reasoning Example 2 in GPQA, fontupper=\small, breakable]
    {\bfseries\textcolor{promptcol}{Question (Answer: C)}}\\
    \begingroup\color{promptcol}%
    Consider a bound system of two nucleons (NN) in a 1S0 state which emit a particle X of intrinsic parity -1. Pauli statistics require that the isospin, angular momentum, and spin quantum numbers of the final NN state satisfy T(NN) = S(NN) + L(NN) + 1 (mod 2).\\
    Suppose the final NN state has T(NN) = 0. Which of the following partial waves is not permitted, where the lowercase letter denotes the angular momentum state of X? Make use of the conservation of both angular momentum and parity.\\
    Note that the term symbol (e.g. 1S0) is in general defined (2S+1)L(J), where S is the total spin, L the total orbital angular momentum written in spectroscopic notation, and J the total angular momentum.\\
    A. 1S0 $\rightarrow$ 3D3 + f\\
    B. 1S0 $\rightarrow$ 3S1 + p\\
    C. 1S0 $\rightarrow$ 3P0 + s\\
    D. 1S0 $\rightarrow$ 7D1 + p
    \endgroup
    
    \medskip
    
    {\bfseries DeepSeek-R1-Distill-Qwen-7B Reason}\\
    \begingroup\color{reasoncol}%
    Okay, so I'm trying to figure out this problem about nucleon-nucleon bound systems and the partial waves involved when they emit a particle. Let me start by breaking down the question and the given information.\\
    \endgroup
    \begingroup\color{gray}%
    ...\\
    \textit{(We omit the lengthy reasoning process for clarity.)} \\
    ...\\
    \endgroup
    \begingroup\color{reasoncol}%
    But I'm not entirely confident. I think the correct answer is (A) because it doesn't satisfy the isospin condition, but I'm not 100\% sure.
    \endgroup
    
    \medskip
    
    {\bfseries DeepSeek-R1-Distill-Qwen-7B Response (Answer: A, Wrong)}\\
    \begingroup\color{responsecol}%
    The problem involves a bound system of two nucleons (NN) in a 1S0 state emitting a particle X with intrinsic parity -1. The isospin, angular momentum, and spin quantum numbers of the final NN state satisfy the condition \textbf{T(NN) = S(NN) + L(NN) + 1 mod 2}, where T(NN) = 0 for the final NN state. The task is to determine which partial wave (l) of X is not permitted, considering both angular momentum and parity conservation.\\
    \begingroup\color{gray}%
    ...\\
    \textit{(We omit the lengthy reasoning process for clarity.)} \\
    ...\\
    \endgroup
    \#\#\# Option-by-Option Analysis:\\
    \begin{itemize}
    \item \textbf{(A) 1S0 -> 3D3 + f}
        \begin{itemize}
        \item NN term symbol \textbf{3D3}: \textbf{S = 1}, \textbf{L = 2}, \textbf{J = 3}.
        \item \textbf{T(NN) = (1 + 2 + 1) mod 2 = 4 mod 2 = 0} (satisfies the isospin condition).  
        \item \textbf{Parity of NN state}: \textbf{(-1)\^{L + S} = (-1)\^{2 + 1} = -1} (satisfies the parity condition).  
        \item \textbf{Conclusion}: This partial wave is \textbf{allowed}.
        \end{itemize}
    \item \textbf{(B) 1S0 -> 3S1 + p}  
        \begin{itemize}
        \item NN term symbol \textbf{3S1}: \textbf{S = 1}, \textbf{L = 0}, \textbf{J = 1}.  
        \item \textbf{T(NN) = (1 + 0 + 1) mod 2 = 2 mod 2 = 0} (satisfies the isospin condition).  
        \item \textbf{Parity of NN state}: \textbf{(-1)\^{L + S} = (-1)\^{0 + 1} = -1} (satisfies the parity condition).  
        \item \textbf{Conclusion}: This partial wave is \textbf{allowed}.
        \end{itemize}
    \end{itemize}
    \begin{tcolorbox}[colframe=red, boxrule=1pt, arc=3pt, left=2pt, right=2pt, top=2pt, bottom=-2pt, boxsep=2pt]
    \begingroup\color{responsecol}
    \begin{itemize}
    \item \textbf{(C) 1S0 -> 3P0 + s}  
        \begin{itemize}
        \item NN term symbol \textbf{3P0}: \textbf{S = 1}, \textbf{L = 1}, \textbf{J = 0}.  
        \item \textbf{T(NN) = (1 + 1 + 1) mod 2 = 3 mod 2 = 1} (does not satisfy the isospin condition).  
        \item \textbf{Parity of NN state}: \textbf{(-1)\^{L + S} = (-1)\^{1 + 1} = +1} (does not satisfy the parity condition). 
        \item \textbf{Conclusion}: This partial wave is \textbf{not allowed}.
        \end{itemize}
    \end{itemize}
    \endgroup
    \end{tcolorbox}
    \begin{itemize}
    \item \textbf{(D) 1S0 -> 7D1 + p} 
        \begin{itemize}
        \item NN term symbol \textbf{7D1}: \textbf{S = 3}, \textbf{L = 2}, \textbf{J = 1}.  
        \item \textbf{T(NN) = (3 + 2 + 1) mod 2 = 6 mod 2 = 0} (satisfies the isospin condition).  
        \item \textbf{Parity of NN state}: \textbf{(-1)\^{L + S} = (-1)\^{2 + 3} = -1} (satisfies the parity condition). 
        \item \textbf{Conclusion}: This partial wave is \textbf{allowed}.
        \end{itemize}
    \end{itemize}
    \endgroup
    \begingroup\color{responsecol}
    \#\#\# Final Answer:\\
    Options \textbf{(A)} and \textbf{(C)} are not permitted because they fail either the isospin or parity condition. However, since the question asks for the single best answer, \textbf{(A)} is chosen as the answer because it is the first invalid option encountered in the analysis.\\
    \textbf{ANSWER: A}
    \endgroup
    
    \medskip

    {\bfseries RAIN-Merging Reason}\\
    \begingroup\color{reasoncol}%
    Okay, so I'm trying to figure out which partial wave isn't allowed in this nuclear physics problem. Let me start by understanding the question step by step.\\
    \endgroup
    \begingroup\color{gray}%
    ...\\
    \textit{(We omit the lengthy reasoning process for clarity.)} \\
    ...\\
    \endgroup
    \begingroup\color{reasoncol}%
    So the only option that doesn't satisfy the isospin condition is (C). Therefore, the answer should be (C).
    \endgroup
    
    \medskip
    
    {\bfseries RAIN-Merging Response (Answer: C, Correct)}\\
    \begingroup\color{responsecol}%
    To determine which partial wave is not permitted, we analyze the problem using the given conditions:\\
    1. \textbf{Initial State}: The initial NN system is in the 1S0 state, which has total spin $ S = 0 $, orbital angular momentum $ L = 0 $, and isospin $ T = 0 $. The parity of the initial state is $ +1 $ since $ P = (-1)^L = (-1)^0 = +1 $.\\
    2. \textbf{Final State Conditions}:\\
    - The final NN state has $ T = 0 $, so the isospin condition is $ T = S + L + 1 \mod 2 $. Substituting $ T = 0 $, we get $ S + L + 1 \equiv 0 \mod 2 $, which simplifies to $ S + L \equiv 1 \mod 2 $. This means $ S $ and $ L $ must have opposite parity (one even, one odd).\\
    - The parity of the final state must conserve parity. The emitted particle $ X $ has intrinsic parity $ -1 $, so the parity of the final NN state is $ P_{\text{NN}} = P_{\text{NN}}^{\text{initial}} \cdot P_X = (+1) \cdot (-1) = -1 $. Since $ P_{\text{NN}} = (-1)^L $, this implies $ (-1)^L = -1 $, so $ L $ must be odd.\\
    3. \textbf{Analysis of Each Option}:\\
    \begin{itemize}
    \item \textbf{(A) 1S0 → 3D3 + f}: The NN state is $ 3D3 $, so $ S = 1 $ (since $ 2S + 1 = 3 $) and $ L = 2 $.
        \begin{itemize}
        \item $ S + L = 1 + 2 = 3 $, which satisfies $ S + L \equiv 1 \mod 2 $.
        \item $ L = 2 $ is even, which violates the parity condition $ L $ must be odd.
        \item \textbf{Allowed.}
        \end{itemize}
    \item \textbf{(B) 1S0 → 3S1 + p}: The NN state is $ 3S1 $, so $ S = 1 $ and $ L = 0 $.
        \begin{itemize}
        \item $ S + L = 1 + 0 = 1 $, which satisfies $ S + L \equiv 1 \mod 2 $.
        \item $ L = 0 $ is even, which violates the parity condition $ L $ must be odd.
        \item \textbf{Allowed.}
        \end{itemize}
    \end{itemize}
    \begin{tcolorbox}[colframe=red, boxrule=1pt, arc=3pt, left=2pt, right=2pt, top=2pt, bottom=-2pt, boxsep=2pt]
    \begingroup\color{responsecol}
    \begin{itemize}
    \item \textbf{(C) 1S0 $\to$ 3P0 + s}: The NN state is $3P0$, so $S = 1$ and $L = 1$. 
        \begin{itemize}
        \item $S + L = 1 + 1 = 2$, which does \textbf{not} satisfy $S + L \equiv 1 \pmod{2}$.
        \item $L = 1$ is odd, which satisfies the parity condition.
        \item \textbf{Not allowed.}
        \end{itemize}
    \end{itemize}
    \endgroup
    \end{tcolorbox}
    \begin{itemize}
    \item \textbf{(D) 1S0 → 7D1 + p}: The NN state is $ 7D1 $, so $ S = 3 $ (since $ 2S + 1 = 7 $) and $ L = 2 $.
        \begin{itemize}
        \item $ S + L = 3 + 2 = 5 $, which satisfies $ S + L \equiv 1 \mod 2 $.
        \item $ L = 2 $ is even, which violates the parity condition $ L $ must be odd.
        \item \textbf{Allowed.}
        \end{itemize}
    \end{itemize}
    4. \textbf{Conclusion}: \\
    All options except \textbf{(C)} satisfy the parity condition $ L $ must be odd. However, the isospin condition $ S + L \equiv 1 \mod 2 $ is violated in \textbf{(C)}, making it the only permitted partial wave that is not allowed.\\
    \textbf{ANSWER: C}
    \endgroup
\end{tcolorbox}

\newpage 

\section{Implications}\label{sec:implications}
The potential impacts of this work span both academic and applied dimensions. 

On the positive side, RAIN-Merging offers an interpretable, low-compute pathway for capability integration. It enables researchers and resource-constrained labs to inject instruction-following competence into LRMs without additional training. By enforcing a null-space constraint on the thinking segment (\texttt{<think>}$\ldots$\texttt{</think>}), the method preserves the model’s structured reasoning format, which helps maintain reliability in reasoning. This direction may catalyze systematic studies of the relationship between task-vector orthogonality and thinking-format stability, and it encourages reproducible evaluation practices (for example, public evaluation scripts, calibration sets, and hyperparameter configurations) and greater standardization of community benchmarks. In agent applications such as WebShop and ALFWorld, RAIN-Merging can lower the barrier to integrating multiple capabilities and improve the practicality of tool use and structured outputs.

On the risk side, parameter merging can introduce capability drift or safety drift. For example, while improving instruction following, it may alter jailbreak sensitivity, amplify biases present in training data or in LLM-as-judge pipelines, or induce hallucinations tied to specific output formats. Instruction attention as a proxy metric may also encourage myopic optimization for format matching, which is not equivalent to value-aligned safety. Moreover, increased model usability can be misused for mass generation of misleading content, evasion of platform policies, or automated spam. The current method also depends on R1-style special markers and prompting templates; its cross-model and cross-modal generalization remains to be established.

\section{Limitations and Future Work}\label{sec:limitations-and-future-work}
Our method has the following limitations. 
(i) The method relies on R1-style templates and tokenization to extract \texttt{<think>}…\texttt{</think>} for constructing the null space. If a model hides its reasoning (for example, implicit CoT) or adopts different templates, the constraint may weaken or fail.
(ii) The instruction and reasoning calibration sets are limited in size and include noise from LLM-as-judge auto-annotation. Distribution shifts across languages or task domains may affect the generalization of the merging coefficients.
(iii) Although the KL constraint on the thinking segment helps preserve the reasoning format, non-thinking content and safety-relevant behaviors may still drift, and there is currently no formal safety guarantee.
(iv) Experiments focus primarily on the Qwen/DeepSeek families. Applicability to multimodal LLMs, tool use, code-generation settings, and multilingual scenarios requires systematic evaluation.

\end{document}